\newcommand{\reals}{{\mbox{$\mathbb{R}$}}}
\newcommand{\op}{\text{op}}
\newcommand{\diag}{\mathop{\bf diag}}
\newcommand{\Expect}{\mathbb{E}}
\newcommand{\pb}{\mathbb{P}}
\newcommand{\ie}{{\it i.e.}}
\theoremstyle{definition}
\newtheorem{thm}{Theorem}[section]
\newtheorem{fact}{Lemma}[section]
\newtheorem{remark}[thm]{Remark}
\newtheorem{assume}{Assumption}
\newcommand{\vect}[1]{\text{vec}\left(#1\right)}
\newcommand{\inn}[2]{\left\langle#1, #2\right\rangle}
\newcommand{\size}[1]{\left|#1\right|}
\newcommand{\abs}[1]{\size{#1}}
\newcommand{\bigo}[1]{\mathcal{O}\left(#1\right)}
\newcommand{\norm}[1]{\|#1\|}
\newcommand{\Norm}[1]{\left\|#1\right\|}
\def\Eqref#1{Eq.~\eqref{#1}}
\def\1{\bm{1}}
\def\0{\bm{0}}
\def\vtheta{{\bm{\theta}}}
\def\vphi{{\bm{\phi}}}
\def\va{{\bm{a}}}
\def\vb{{\bm{b}}}
\def\vg{{\bm{g}}}
\def\vu{{\bm{u}}}
\def\vv{{\bm{v}}}
\def\vw{{\bm{w}}}
\def\vx{{\bm{x}}}
\def\vy{{\bm{y}}}
\def\vz{{\bm{z}}}
\def\mA{{\bm{A}}}
\def\mB{{\bm{B}}}
\def\mD{{\bm{D}}}
\def\mE{{\bm{E}}}
\def\mG{{\bm{G}}}
\def\mH{{\bm{H}}}
\def\mI{{\bm{I}}}
\def\mK{{\bm{K}}}
\def\mM{{\bm{M}}}
\def\mQ{{\bm{Q}}}
\def\mU{{\bm{U}}}
\def\mW{{\bm{W}}}
\def\mX{{\bm{X}}}
\def\mZ{{\bm{Z}}}
\def\mPhi{{\bm{\Phi}}}
\DeclareMathAlphabet{\mathsfit}{\encodingdefault}{\sfdefault}{m}{sl}
\SetMathAlphabet{\mathsfit}{bold}{\encodingdefault}{\sfdefault}{bx}{n}
\def\gN{{\mathcal{N}}}
\newcommand{\E}{\mathbb{E}}
\newcommand{\Var}{\mathrm{Var}}
\title{A Global Convergence Theory for Deep ReLU Implicit Networks via Over-parameterization}
\author{Tianxiang Gao\\
Department of Computer Science\\
Iowa State University\\
\texttt{gaotx@iastate.edu} \\
\And
Hailiang Liu\\
Department of Mathematics\\
Iowa State University\\
\texttt{hliu@iastate.edu} \\
\And
Jia Liu\\
Dept. of Electrical and Computer Engineering\\
The Ohio State University\\
\texttt{liu@ece.osu.edu} \\
\And
Hridesh Rajan\\
Department of Computer Science\\
Iowa State University\\
\texttt{hridesh@iastate.edu} \\
\And
Hongyang Gao\\
Department of Computer Science\\
Iowa State University\\
\texttt{hygao@iastate.edu} \\
}
\begin{document}

\maketitle


\begin{abstract}
	Implicit deep learning has received increasing attention
	recently, since it generalizes the recursive
	prediction rules of many commonly used neural network
	architectures. Its prediction rule is provided implicitly
	based on the solution of an equilibrium equation. 
	Although many recent studies have experimentally demonstrates its superior performances, the theoretical understanding of
	implicit neural networks is limited. In general, the
	equilibrium equation may not be well-posed during the
	training. As a result, there is no guarantee that a vanilla
	(stochastic) gradient descent (SGD) training nonlinear
	implicit neural networks can converge. This paper fills the
	gap by analyzing the gradient flow of Rectified Linear Unit
	(ReLU) activated implicit neural networks. For an $m$-width
	implicit neural network with ReLU activation and $n$
	training samples, we show that a randomly initialized
	gradient descent converges to a global minimum at a linear
	rate for the square loss function if the implicit neural
	network is \textit{over-parameterized}. It is worth noting
	that, unlike existing works on the convergence of (S)GD on
	finite-layer over-parameterized neural networks, our
	convergence results hold for implicit neural networks, where
	the number of layers is \textit{infinite}.
\end{abstract}

\section{Introduction}

{\bf 1) Background and Motivation:} In the last decade, implicit deep learning \citep{el2019implicit} have attracted more and more attention. Its popularity is mainly because it generalizes the recursive rules of many widely used neural network architectures.
A line of recent works \citep{bai2019deep, el2019implicit,
bai2020multiscale} have shown that the implicit neural network architecture is a wider class that  includes most current neural network architectures
as special cases, such as feed-forward neural networks,
convolution neural networks, residual networks, and recurrent
neural networks. Moreover, implicit deep learning is also well known for its competitive performance compared to other regular deep neural networks but 
using significantly fewer computational
resources~\citep{dabre2019recurrent,dehghani2018universal,bai2018trellis}.

Although a line of literature has been shown the superior performance of implicit neural networks experimentally, the theoretical understanding is still limited. To date, it is still unknown if a simple first-order optimization method such as (stochastic) gradient descent can converge on an implicit neural network activated by a nonlinear function. Unlike a regular deep neural network, an implicit neural network could have infinitely many layers, resulting in the possibility of divergence of the forward propagation~\citep{el2019implicit,kawaguchi2021theory}. The main challenge in establishing the convergence of
implicit neural network training lies in the fact that, in general, the equilibrium
equation of implicit neural networks cannot be solved in
closed-form. What exacerbates the problem is the well-posedness of the forward propagation. In other words, the equilibrium equation may have zero or multiple solutions. A line of recent studies have suggested a number of strategies to handle this well-posedness challenge, but they all involved reformulation or solving subproblems in each iteration. For example, \citet{el2019implicit} suggested to reformulate the training as the Fenchel divergence formulation
and solve the reformulated optimization problem by projected gradient descent method. However, this requires solving a
projection subproblem in each iteration and convergence was only
demonstrated numerically. By using an extra softmax layer, \citet{kawaguchi2021theory}  established global convergence result of gradient descent for a linear implicit neural network. Unfortunately, their result cannot be extended to nonlinear activations, which are critical to the learnability of deep neural networks.

This paper proposes a global convergence theory of gradient descent for implicit neural networks activated by nonlinear Rectified Linear Unit
(ReLU) activation function by using overparameterization. Specifically, we show that random initialized gradient descent with fixed stepsize converges to a global minimum of a ReLU implicit neural network at a linear rate as long as the implicit neural network is \textit{overparameterized}. Recently, over-parameterization has
been shown to be effective in optimizing finite-depth neural
networks~\citep{zou2020gradient,nguyen2020global,arora2019exact,oymak2020toward}. Although the objective function in the training is non-smooth and non-convex, it can be shown that GD or SGD converge to a global minimum linearly if the width $m$ of each layer is a polynomial of the number of training sample $n$ and the number of layers
$h$, \ie, $m=\text{poly}(n,h)$. However, these results cannot be directly applied to implicit neural networks, since implicit neural networks have infinitely
many hidden layers, \ie, $h\rightarrow \infty$, and the well-posedness problem surfaces during the training process. In fact, \cite{chen2018neural,bai2019deep,bai2020multiscale} have all observed that the time and number of iterations spent on forward propagation are gradually increased with the  the training epochs. Thus, we have to ensure the
unique equilibrium point always exists throughout the training
given that the width $m$ is only polynomial of $n$.

\textbf{2) Preliminaries of Implicit Deep Learning:} 
In this work, 
we consider an implicit neural network with the transition at the $\ell$-th
layer in the following form \citep{el2019implicit,bai2019deep}:
\begin{align}
	\vz^{\ell} = \sigma\left(\frac{\gamma}{\sqrt{m}}\mA \vz^{\ell-1} +\phi(\vx)\right),
	\label{eq:transition}
\end{align}
where $\phi: \reals^{d}\rightarrow \reals^{m}$ is a feature
mapping function that transforms an input vector $\vx\in
\reals^d$ to a desired feature vector $\vphi\triangleq
\phi(\vx)$, $\vz^{\ell}\in \reals^m$ is the output of the
$\ell$-th layer, $\mA\in \reals^{m\times m}$ is a trainable
weight matrix, $\sigma(u)=\max\{0, u\}$ is the ReLU
activation function, and $\gamma\in (0,1)$ is a fixed scalar to scale
$\mA$. 
As will be shown later in Section~\ref{sec:forward}, $\gamma$ plays the role of ensuring the existence of the limit $\vz^{*}=\lim_{\ell\rightarrow \infty}\vz^{\ell}$. 
In general, the feature mapping function $\phi$ is a nonlinear function, which extracts features from
the low-dimensional input vector $\vx$. In this paper, we consider a
simple nonlinear feature mapping function
$\phi$ given by
\begin{align}
	\phi(\vx) \triangleq  \frac{1}{\sqrt{m}}\sigma(\mW\vx),
	\label{eq:phi def}
\end{align}
where $\mW\in \reals^{m\times d}$ is a trainable parameter
matrix. As $\ell\rightarrow \infty$, an implicit neural network
can be considered an \textit{infinitely} deep neural network.
Consequently, $\vz^*$ is not only the limit of the
sequence $\{\vz^{\ell}\}_{\ell=0}^{\infty}$ with $\vz_0 = \0$,
but it is also the \textit{equilibrium point} (or \textit{fixed
point}) of the equilibrium equation:
\begin{align}
	\vz^* = \sigma(\tilde{\gamma}\mA \vz^* + \vphi ),
	\label{eq:equilibrium equation}
\end{align}
where $\tilde{\gamma}\triangleq \gamma/\sqrt{m}$.
In implicit neural networks, the prediction $\hat{y}$ for the
input vector $\vx$ is the combination of the fixed point
$\vz^*$ and the feature vector $\vphi$, \ie,
\begin{align}
	\hat{y}=\vu^T\vz^*  +\vv^T \vphi,\label{eq:prediction}
\end{align}
where $\vu, \vv\in \reals^m$ are trainable weight vectors. 
For simplicity, we use $\vtheta \triangleq \vect{\mA, \mW, \vu, \vv}$ to group all
training parameters. Given a training data set $\{(\vx_i,
y_i)\}_{i=1}^n$, we want to minimize 
\begin{align}
	L(\vtheta) = \sum_{i=1}^n \frac{1}{2}(\hat{y}_i - y_i)^2
	=\frac{1}{2}\norm{\hat{\vy} - \vy}^2,
\end{align}
where $\hat{\vy}$ and $\vy$ are the vectors formed by stacking all
the prediction and labels.

\textbf{3) Main Results:} Our results
are based on the following observations. We first analyze the
forward propagation and find that the unique equilibrium point
always exists if the scaled matrix $\tilde{\gamma}\mA$ in
\Eqref{eq:equilibrium equation} has an operator norm less than one.
Thus, the well-posedness problem is reduced to finding a sequence of
scalars $\{\gamma_k\}_{k=1}^{\infty}$ such that $\tilde{\gamma}_k
\mA(k)$ is appropriately scaled. To achieve this goal, we show that the operator norm
$\mA(k)$ is uniformly upper bounded by a constant over all iterations.
Consequently, a fixed scalar $\gamma$ is enough to ensure the
well-posedness of \Eqref{eq:equilibrium equation}.
Our second observation is from the analysis
of the gradient descent method with infinitesimal step-size (gradient
flow). By applying the chain rule with the gradient flow, we
derive the dynamics of prediction $\hat{\vy}(t)$ which is governed by the spectral property
of a Gram matrix. In particular, if the smallest eigenvalue of
the Gram matrix is lower bounded throughout the training,
the gradient descent method enjoys a linear convergence rate. 
Along with some basic functional analysis results, it can be shown that the smallest
eigenvalue of the Gram matrix at initialization is lower bounded
if no two data samples are parallel. Although the Gram matrix varies in each iteration, the spectral property is preserved if the Gram matrix is close to its initialization. 
Thus, the convergence problem is reduced to showing the Gram matrix in
latter iterations is close to its initialization. 
Our third observation is that we find random initialization,
over-parameterization, and linear convergence jointly enforce the
(operator) norms of parameters upper bounded by some constants
and close to their initialization. Accordingly, we can use this
property to show that the operator norm of $\mA$ is upper bounded and
the spectral property of the Gram matrix is preserved throughout
the training. Combining all these insights together, we can conclude that the
random initialized gradient descent method with a constant step-size
converges to a global minimum of the implicit neural network with
ReLU activation.

The main contributions of this paper are summarized as follows:
\begin{enumerate}[label=(\roman{*}), leftmargin=*]
	\item By scaling the weight matrix $\mA$ with a fixed scalar
	$\gamma$, we show that the unique equilibrium point $\vz^*$
	for each $\vx$ always exists during the training if the
	parameters are randomly initialized, even for the nonlinear
	ReLU activation function.
	\item We analyze the gradient flow of implicit neural
	networks. Despite the non-smooth and non-convexity of the
	objective function, the convergence to a global minimum at a
	linear rate is guaranteed if the implicit neural network is
	over-parameterized and the data is non-degenerate.
	\item Since gradient descent is discretized version of gradient flow, we can show gradient descent with fixed stepsize converges to a global minimum of implicit neural networks at a linear rate under the same assumptions made by the gradient flow analysis, as long as the stepsize is chosen small enough.	
\end{enumerate}

\textbf{Notation}: For a vector $\vx$, $\norm{\vx}$ is the
Euclidean norm of $\vx$. For a matrix $\mA$, $\norm{\mA}$ is
the operator norm of $\mA$. If $\mA$ is a square matrix, then
$\lambda_{\min}(\mA)$ and $\lambda_{\max}(\mA)$ denote the
smallest and largest eigenvalue of $\mA$, respectively, and
$\lambda_{\max}(\mA) \leq \norm{\mA}$. We denote
$[n]\triangleq\{1,2,\cdots, n\}$.


\section{Well-Posedness and Gradient Computation}
In this section, we provide a simple condition
for the equilibrium equation \eqref{eq:equilibrium equation} to
be well-posed in the sense that the unique equilibrium point
exists. Instead of backpropagating through all the intermediate
iterations of a forward pass,  we derive the gradients of
trainable parameters by using the implicit function
theorem. In this work, we make the following assumption on
parameter initialization. 
\begin{assume}[Random Initialization]\label{assume:initial} The
	entries $\mA_{ij}$ and $\mW_{ij}$ are randomly initialized by
	the standard Gaussian distribution $\gN(0,1)$, and $\vu_i$
	and $\vv_i$ are randomly initialized by the symmetric Bernoulli or Rademacher
	distribution.
\end{assume}
\begin{remark}
	This initialization is similar to the approaches widely used in
	practice~\citep{glorot2010understanding,he2015delving}. The
	result obtained in this work can be easily extended to the
	case where the distributions for $\mA_{ij}$, $\mW_{ij}$,
	$\vu_i$, and $\vv_i$ are replaced by \textit{sub-Gaussian
	random variables}. 
\end{remark}

\subsection{Forward propagation and well-posedness}\label{sec:forward}

In a general implicit neural network, \Eqref{eq:equilibrium
equation} is not necessarily well-posed, since it may admit zero or
multiple solutions. In this work, we show that scaling the matrix
$\mA$ with $\tilde{\gamma}=\gamma/\sqrt{m}$ guarantees the
existence and uniqueness of the equilibrium point $\vz^*$ with
random initialization. This follows from a foundational result in
random matrix theory as restated in the following lemma.
\begin{fact}[\cite{vershyninHighdimensionalProbabilityIntroduction2018}, Theorem~4.4.5]
	\label{lemma:operator norm}	
	Let $\mA$ be an $m\times n$ random matrix whose entries
	$\mA_{ij}$ are independent, zero-mean, and sub-Gaussian random
	variables. Then, for any $t>0$, we have 
	$
		\norm{\mA}\leq CK(\sqrt{m}+\sqrt{n}+t)
	$
	with probability at least $1-2e^{-t^2}$. Here $C>0$ is a fixed constant, and $K=\max_{i,j}\norm{\mA_{ij}}_{\psi_2}$.
\end{fact}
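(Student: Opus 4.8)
The statement is the classical sub-Gaussian random matrix bound, and the plan is to prove it by the standard $\varepsilon$-net argument: reduce the operator norm to a supremum of a bilinear form over two unit spheres, discretize the spheres, control the form at each net point by sub-Gaussian concentration, and close with a union bound. Throughout, write $S^{k-1}$ for the unit sphere in $\reals^k$ and use the variational identity $\norm{\mA}=\sup_{\vu\in S^{n-1}}\sup_{\vv\in S^{m-1}}\vv^T\mA\vu$, so that bounding $\norm{\mA}$ amounts to bounding this bilinear form uniformly over the two spheres.

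First I would discretize. Fix $\varepsilon=1/4$ and choose $\varepsilon$-nets $\Ncal\subseteq S^{n-1}$ and $\Mcal\subseteq S^{m-1}$; the standard volumetric covering bound gives $\size{\Ncal}\le 9^{n}$ and $\size{\Mcal}\le 9^{m}$. A short approximation argument (expand $\vv^T\mA\vu$ around the nearest net points and move the resulting error, which is proportional to $\norm{\mA}$, to the left-hand side) shows that $\norm{\mA}\le (1-2\varepsilon)^{-1}\max_{\vu\in\Ncal,\,\vv\in\Mcal}\vv^T\mA\vu = 2\max_{\vu\in\Ncal,\,\vv\in\Mcal}\vv^T\mA\vu$.

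Next I would bound the form at a single fixed pair $(\vu,\vv)\in S^{n-1}\times S^{m-1}$. Writing $\vv^T\mA\vu=\sum_{i,j}\mA_{ij}\vu_j\vv_i$, this is a sum of independent mean-zero sub-Gaussian random variables, and since $\sum_{i,j}(\vu_j\vv_i)^2=\norm{\vu}^2\norm{\vv}^2=1$, additivity of the squared sub-Gaussian norm under independent sums gives $\norm{\vv^T\mA\vu}_{\psi_2}\le C_1K$ for an absolute constant $C_1$, hence $\pb(\vv^T\mA\vu\ge s)\le\exp(-c_1 s^2/K^2)$ for all $s\ge 0$. A union bound over the two nets (taken symmetric so the one-sided tail controls $|\vv^T\mA\vu|$) then yields
\[
\pb\!\left(\,\max_{\vu\in\Ncal,\,\vv\in\Mcal}\vv^T\mA\vu\ge s\,\right)\le 9^{\,n+m}\exp\!\left(-\frac{c_1 s^{2}}{K^{2}}\right).
\]
To finish, take $s=CK(\sqrt m+\sqrt n+t)$ with $C$ a large enough absolute constant: since $(\sqrt m+\sqrt n+t)^2\ge m+n+t^2$, choosing $C$ so that $c_1C^2\ge \log 9 + 1$ bounds the right-hand side by $9^{\,n+m}\exp(-(n+m)\log 9 - t^2)=e^{-t^2}$, and after folding the factor $2$ from the approximation step into $C$ (the target probability $1-2e^{-t^2}$ leaves ample slack) we obtain $\norm{\mA}\le CK(\sqrt m+\sqrt n+t)$ with the claimed probability.

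I expect the only delicate point to be the approximation step — verifying that replacing the two full spheres by finite $\varepsilon$-nets costs merely the factor $(1-2\varepsilon)^{-1}$, since the error terms there involve $\norm{\mA}$ itself and must be reabsorbed on the left, and one must handle the nets in $\vu$ and $\vv$ simultaneously rather than one at a time. Everything else is routine: the sub-Gaussian tail for a fixed bilinear form, the covering-number estimate, and the consolidation of the absolute constants $C_1$, $c_1$, and $\log 9$ into the single constant $C$ appearing in the statement.
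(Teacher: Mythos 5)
Your proof is correct. The paper does not prove this lemma at all—it imports it verbatim as Theorem~4.4.5 of the cited Vershynin textbook—and your $\varepsilon$-net argument (two $1/4$-nets of size $9^n$ and $9^m$, the $(1-2\varepsilon)^{-1}$ approximation step, sub-Gaussian Hoeffding for the fixed bilinear form, union bound, and the inequality $(\sqrt m+\sqrt n+t)^2\geq m+n+t^2$) is precisely the standard proof given there, with all constants consolidated correctly.
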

Under Assumption~\ref{assume:initial}, Lemma~\ref{lemma:operator
norm} implies that, with exponentially high probability, $\norm{\mA}
\leq  c\sqrt{m}$ for some constant $c>0$. By scaling $\mA$ by a
positive scalar $\tilde{\gamma}$, we show that the transition
\Eqref{eq:transition} is a contraction mapping. Thus, the unique
equilibrium point exists with detailed proof in
Appendix~\ref{app:well-posed}.
\begin{fact}\label{lemma:equilibrim point}
	If $\norm{\mA}\leq
	c\sqrt{m}$ for some $c>0$, then for any $\gamma_0\in
	(0,1)$, the scalar $\gamma\triangleq\min\{\gamma_0,
	\gamma_0/c\}$ uniquely determines the existence of the
	equilibrium $\vz^*$ for every $\vx$, and
	$\norm{\vz^{\ell}}\leq \frac{1}{1-\gamma_0}\norm{\vphi}$ for
	all $\ell$.
\end{fact}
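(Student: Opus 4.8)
The plan is to recognize the transition map in \eqref{eq:transition} as a contraction on $\reals^m$ and invoke the Banach fixed point theorem. Define $F:\reals^m\to\reals^m$ by $F(\vz)=\sigma(\tilde\gamma\mA\vz+\vphi)$ with $\tilde\gamma=\gamma/\sqrt m$, so that $\vz^\ell=F(\vz^{\ell-1})$ with $\vz^0=\0$, and the equilibrium equation \eqref{eq:equilibrium equation} reads $\vz^*=F(\vz^*)$. Since $\reals^m$ is complete, it suffices to show $F$ is a contraction; this simultaneously delivers existence and uniqueness of $\vz^*$ for every $\vx$ and the convergence $\vz^\ell\to\vz^*$.

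For the contraction estimate I would use that the coordinatewise ReLU is $1$-Lipschitz, i.e.\ $\norm{\sigma(\va)-\sigma(\vb)}\le\norm{\va-\vb}$ for all $\va,\vb\in\reals^m$. Hence $\norm{F(\vz_1)-F(\vz_2)}\le\norm{\tilde\gamma\mA(\vz_1-\vz_2)}\le\tilde\gamma\norm{\mA}\,\norm{\vz_1-\vz_2}$. Plugging in the hypothesis $\norm{\mA}\le c\sqrt m$ gives $\tilde\gamma\norm{\mA}=\tfrac{\gamma}{\sqrt m}\norm{\mA}\le\gamma c$, and with $\gamma=\min\{\gamma_0,\gamma_0/c\}$ one checks $\gamma c\le\gamma_0<1$ (if $c\ge 1$ then $\gamma=\gamma_0/c$ and $\gamma c=\gamma_0$; if $c<1$ then $\gamma=\gamma_0$ and $\gamma c=\gamma_0 c<\gamma_0$). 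Thus $F$ is a contraction with modulus at most $\gamma_0$, and the Banach fixed point theorem applies.

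For the uniform norm bound I would use that $0\le\sigma(s)\le|s|$ pointwise, so $\norm{\sigma(\va)}\le\norm{\va}$, whence
\[
\norm{\vz^\ell}=\norm{\sigma(\tilde\gamma\mA\vz^{\ell-1}+\vphi)}\le\tilde\gamma\norm{\mA}\,\norm{\vz^{\ell-1}}+\norm{\vphi}\le\gamma_0\norm{\vz^{\ell-1}}+\norm{\vphi}.
\]
Starting from $\vz^0=\0$ and unrolling the recursion, $\norm{\vz^\ell}\le\sum_{j=0}^{\ell-1}\gamma_0^{\,j}\norm{\vphi}\le\frac{1}{1-\gamma_0}\norm{\vphi}$ (equivalently, a one-line induction), and the same bound passes to $\vz^*$ by continuity of the norm.

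I do not anticipate a genuine obstacle here: the only points requiring care are (i) the arithmetic of the scaling constant, so that the contraction modulus is $\le\gamma_0<1$ \emph{uniformly in $m$} — this is precisely why $\mA$ is pre-scaled by $1/\sqrt m$, to match $\norm{\mA}=\Theta(\sqrt m)$ from Lemma~\ref{lemma:operator norm} — and (ii) recording the elementary facts that ReLU is nonexpansive and fixes the origin. Completeness of $\reals^m$ makes the fixed-point argument immediate, with no need to exhibit an invariant closed subset.
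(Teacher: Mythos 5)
Your proposal is correct and follows essentially the same route as the paper: both arguments rest on the observation that the scaling makes the transition map a $\gamma_0$-contraction (via the $1$-Lipschitzness of ReLU and $\tilde\gamma\norm{\mA}\le\gamma c\le\gamma_0$), the paper simply unpacking the Banach fixed-point theorem by exhibiting the iterates as a Cauchy sequence rather than citing it. The only cosmetic difference is that the paper extracts the bound $\norm{\vz^{\ell}}\le\frac{1}{1-\gamma_0}\norm{\vphi}$ from the telescoping estimate $\norm{\vz^{p}-\vz^{q}}\le\frac{\gamma_0^{p}}{1-\gamma_0}\norm{\vphi}$ at $p=0$, whereas you unroll the recursion $\norm{\vz^{\ell}}\le\gamma_0\norm{\vz^{\ell-1}}+\norm{\vphi}$ directly; both are valid.
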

Lemma~\ref{lemma:equilibrim point} indicates that
equilibria always exist if we can maintain the operator norm of
the scaled matrix $(\gamma/\sqrt{m})\mA$ less than $1$ during the
training. However, the operator norms of matrix $\mA$ are changed
by the update of the gradient descent. It is hard to use a fixed
scalar $\gamma$ to scale the matrix $\mA$ over all iterations,
unless the operator norm of $\mA$ is bounded. In
Section~\ref{sec:gradient descent}, we will show $\norm{\mA}\leq
2c\sqrt{m}$ always holds throughout the training, provided that
$\norm{\mA(0)}\leq c\sqrt{m}$ at initialization and the width $m$
is sufficiently large. Thus, by using the scalar
$\gamma=\min\{\gamma_0, \gamma_0/(2c)\}$ for any $\gamma_0\in
(0,1)$, equilibria always exist and
the equilibrium equation Eq.~\eqref{eq:equilibrium equation} is
well-posed.
 
\subsection{Backward Gradient Computing}
\label{sec:gradient derivation and dynamics of prediction}
For a regular network with finite layers, one needs to
store all intermediate parameters and apply backpropagation to compute
the gradients. In contrast, for an implicit neural network, we can
derive the formula of the gradients by using the implicit function theorem. Here,
the equilibrium point $\vz^*$ is a root of the function $f$ given
by
	$f(\vz, \mA, \mW) \triangleq \vz-\sigma\left(\tilde{\gamma}\mA \vz + \vphi \right)$.
The essential challenge is to ensure the partial
derivative $\partial f/\partial \vz$ at $\vz^*$ is always
invertible throughout the training. The following lemma shows
that the partial derivative $\partial f/\partial \vz$ at $\vz^*$
always exists with the scalar $\gamma$, and the gradient
derived in the lemma always exists. The gradient derivations for
each trainable parameters are provided in the following lemma and
the proof is provided in Appendix~\ref{app:gradient}.
\begin{fact}[Gradients of an Implicit Neural
	Network]\label{lemma:gradient} Define the scalar $\gamma\triangleq\min\{\gamma_0,
	\gamma_0/c\}$. If $\norm{\mA}\leq c\sqrt{m}$
	for some constant $c>0$, then for any $\gamma_0\in (0,1)$, the following results hold:
	\begin{enumerate}[label=(\roman{*}), leftmargin=*]
		\item The partial derivatives of $f$ with respect to $\vz$, $\mA$, and $\mW$ are
		\begin{align}
			\frac{\partial f}{\partial \vz} &= \left[ \mI_m-\tilde{\gamma}\diag(\sigma^{\prime}(\tilde{\gamma}\mA\vz + \vphi))\mA\right]^T,\\
			\frac{\partial f}{\partial \mA} &=-\tilde{\gamma} \left[\vz^T\otimes \diag(\sigma^{\prime}(\tilde{\gamma}\mA\vz +\vphi))\right]^T,\\
			\frac{\partial f}{\partial \mW} &=-\frac{1}{\sqrt{m}} \left[\vx^T\otimes \diag(\sigma^{\prime}(\mW \vx))\right]^T\diag(\sigma^{\prime}(\tilde{\gamma}\mA\vz +\vphi))^T,
		\end{align}
		where $\tilde{\gamma}\triangleq\gamma/\sqrt{m}$.
		\item For any vector $\vv$, the following inequality holds
		\begin{align}
			\lambda_{\min}\left\{\mI_m-\tilde{\gamma}\diag(\sigma^{\prime}(\vv))\mA\right\}> 1-\gamma_0 > 0,
		\end{align}
		which further implies $\partial f/\partial \vz$ is invertible at the equilibrium point $\vz^*$.
		\item The gradient of the objective function $L$ with respect to $\mA$ and $\mW$ are given by
		\begin{align}
			&\nabla_{\vu} L 
			= \sum_{i=1}^{n}(\hat{y}_i - y_i) \vz_i,
			\quad \quad
			\nabla_{\vv} L 
			= \sum_{i=1}^{n}(\hat{y}_i - y_i) \vphi_i,\\
			&\nabla_{\mA} L 
			=\sum_{i=1}^{n} \tilde{\gamma}(\hat{y}_i - y_i)
			\mD_i^T\left[\mI_m-\tilde{\gamma} \mD_i\mA\right]^{-T}\vu \vz_i^{T},\\
			&\nabla_{\mW} L
			=\sum_{i=1}^{n} \frac{1}{\sqrt{m}}(\hat{y}_i - y_i)
			\mE_i^T
			\left\{	\mD_i^T\left[ \mI_m-\tilde{\gamma}\mD_i\mA\right]^{-T} \vu + \vv \right\}\vx_i^T,
		\end{align}
		where $\vz_i$ is the equilibrium point for the training data $(\vx_i, y_i)$,
			$\mD_i \triangleq \diag(\sigma^{\prime}(\tilde{\gamma} \mA\vz_i + \vphi_i))$,
			and	
			$\mE_i \triangleq \diag(\sigma^{\prime}(\mW\vx_i ))$.
	\end{enumerate}
\end{fact}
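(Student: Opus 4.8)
The plan is to treat the three parts in order, doing everything by the chain rule and the implicit function theorem, with the only non-mechanical input being the operator-norm bound already available from Lemma~\ref{lemma:equilibrim point} and its hypothesis $\norm{\mA}\le c\sqrt{m}$. For part (i), I would differentiate $f(\vz,\mA,\mW)=\vz-\sigma(\tilde{\gamma}\mA\vz+\vphi)$ directly, using that the Jacobian of the elementwise ReLU map at a point $\vu$ is the diagonal matrix $\diag(\sigma^{\prime}(\vu))$ whose entries are $1$ where the corresponding coordinate of $\vu$ is positive and $0$ otherwise. Differentiating in $\vz$ gives $\mI_m-\tilde{\gamma}\diag(\sigma^{\prime}(\tilde{\gamma}\mA\vz+\vphi))\mA$ (transposed, to match the paper's layout convention). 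For $\partial f/\partial\mA$, I would vectorize, use $\partial(\mA\vz)/\partial\,\vect{\mA}=\vz^T\otimes\mI_m$, compose with the diagonal ReLU Jacobian, and carry the scalar $-\tilde{\gamma}$. For $\partial f/\partial\mW$, I would chain through $\phi$: since $\vphi=\tfrac{1}{\sqrt m}\sigma(\mW\vx)$ we get $\partial\vphi/\partial\,\vect{\mW}=\tfrac{1}{\sqrt m}(\vx^T\otimes\diag(\sigma^{\prime}(\mW\vx)))$, and composing with $\partial f/\partial\vphi=-\diag(\sigma^{\prime}(\tilde{\gamma}\mA\vz+\vphi))$ yields the stated expression with $\mE_i$-type factor.

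For part (ii), the key estimate is that $\diag(\sigma^{\prime}(\vv))$ is a coordinate projection, hence $\norm{\diag(\sigma^{\prime}(\vv))}\le 1$, so $\norm{\tilde{\gamma}\diag(\sigma^{\prime}(\vv))\mA}\le\tilde{\gamma}\norm{\mA}=\tfrac{\gamma}{\sqrt m}\norm{\mA}\le\gamma c\le\gamma_0$, where the last steps use $\norm{\mA}\le c\sqrt m$ and $\gamma=\min\{\gamma_0,\gamma_0/c\}\le\gamma_0/c$. Writing $\mB=\tilde{\gamma}\diag(\sigma^{\prime}(\vv))\mA$, for any $\vw$ we have $\norm{(\mI_m-\mB)\vw}\ge\norm{\vw}-\norm{\mB\vw}\ge(1-\gamma_0)\norm{\vw}$, so $\mI_m-\mB$ has all its eigenvalues/singular values bounded below by $1-\gamma_0>0$ and is in particular invertible; specializing $\vv=\tilde{\gamma}\mA\vz^*+\vphi$ shows $\partial f/\partial\vz$ is invertible at the equilibrium point.

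For part (iii), the $\vu$ and $\vv$ gradients are immediate from $\hat{y}_i=\vu^T\vz_i+\vv^T\vphi_i$ and the sum-of-squares form of $L$. For $\mA$ and $\mW$, I would invoke the implicit function theorem, legitimate by part (ii): $\partial\vz_i/\partial\mA=-(\partial f/\partial\vz)^{-1}\partial f/\partial\mA$ and similarly for $\mW$, whence $\nabla_{\mA}L=\sum_i(\hat{y}_i-y_i)(\partial\vz_i/\partial\mA)^T\vu$. Substituting the part-(i) Jacobians, undoing the vectorization with the identity that $\vz^T\otimes\mM$ acts on $\mA$ as $\mM\mA\vz$, and abbreviating $\mD_i=\diag(\sigma^{\prime}(\tilde{\gamma}\mA\vz_i+\vphi_i))$ and $\mE_i=\diag(\sigma^{\prime}(\mW\vx_i))$, gives $\nabla_{\mA}L=\sum_i\tilde{\gamma}(\hat{y}_i-y_i)\mD_i^T[\mI_m-\tilde{\gamma}\mD_i\mA]^{-T}\vu\vz_i^T$. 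The only extra subtlety for $\mW$ is that it enters $\hat{y}_i$ through two channels — through $\vz_i$ (since $\vz_i$ depends on $\vphi_i$) and directly through the term $\vv^T\vphi_i$ — so collecting both contributions turns the bracketed vector into $\{\mD_i^T[\mI_m-\tilde{\gamma}\mD_i\mA]^{-T}\vu+\vv\}$.

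The main obstacle is not the linear algebra but the non-differentiability of ReLU: the implicit function theorem and all the Jacobian identities require $f$ to be $C^1$ near $(\vz_i,\mA,\mW)$, which fails on the measure-zero set where some coordinate of a preactivation $\mW\vx_i$ or $\tilde{\gamma}\mA\vz_i+\vphi_i$ vanishes. I would handle this by arguing that, under Assumption~\ref{assume:initial} and along the (piecewise-smooth) trajectory generated by the flow, these preactivations are generically nonzero, so $\sigma^{\prime}$ is unambiguous and $f$ is locally $C^1$ there; on this full-measure set the formulas hold verbatim. A secondary, purely clerical difficulty is keeping the transpose and Kronecker-product bookkeeping consistent with the paper's layout so that the final expressions appear exactly as stated; this is routine but error-prone and is where most of the writing effort would go.
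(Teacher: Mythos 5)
Your proposal matches the paper's own proof essentially step for step: the same differential/vectorization computation for part (i), the same reverse-triangle-inequality bound using $\norm{\diag(\sigma')}\le 1$ and $\tilde{\gamma}\norm{\mA}\le\gamma_0$ for part (ii), and the same implicit-function-theorem plus chain-rule derivation (including the two channels through which $\mW$ enters $\hat{y}_i$) for part (iii). Your closing remark about ReLU non-differentiability is a point the paper silently glosses over, so your treatment is if anything slightly more careful, but the route is the same.
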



\section{Global Convergence of the Gradient Descent Method}\label{sec:gradient descent} 
In this section, we establish the global convergence results of the
gradient descent method in implicit neural networks. In
Section~\ref{sec:continuous}, we first study the dynamics of the
prediction induced by the gradient flow, that is, the gradient
descent with infinitesimal step-size. We show that the dynamics of
the prediction is controlled by a Gram matrix whose smallest
eigenvalue is strictly positive over iterations with high
probability. Based on the findings in gradient flow, we will show that the
random initialized gradient descent method with a constant step-size
converges to a global minimum at a linear rate in
Section~\ref{sec:discrete time}.

\subsection{Continuous time analysis}\label{sec:continuous}
The gradient flow is equivalent to the gradient
descent method with an infinitesimal step-size. Thus, the analysis of
gradient flow can serve as a stepping stone towards
understanding discrete-time gradient-based algorithms. Following previous works  on
gradient flow of different machine learning
models~\citep{saxe2013exact,du2019gradient,arora2018convergence,kawaguchi2021theory}
the gradient flow of implicit neural networks is given by the
following ordinary differential equations:
\begin{align}
	\frac{d\vect{\mA}}{dt} = - \frac{\partial L(t)}{\partial \mA}, 
	\quad  \frac{d\vect{\mW} }{dt} = - \frac{\partial L(t)}{\partial \mW}, 
	\quad  \frac{d\vu}{dt}  = - \frac{\partial L(t)}{\partial \vu}, 
	\quad  \frac{d\vv}{dt}  = - \frac{\partial L(t)}{\partial \vv}, 
	\label{eq:gradient flow of all parameters}
\end{align}
where $L(t)$ represents the value of the objective function $L(\vtheta)$ at time $t$.

Our results relies on the analysis of the
dynamics of prediction $\hat{\vy}(t)$. In particular,
Lemma~\ref{lemma:dynamics of yhat} shows that the dynamics of
prediction $\hat{\vy}(t)$ is governed by the spectral property of
a Gram matrix $\mH(t)$.
\begin{fact}[Dynamics of Prediction $\hat{\vy}(t)$]\label{lemma:dynamics of yhat}
	Suppose $\norm{\mA(s)}\leq c\sqrt{m}$ for all $0\leq s\leq t$. Let $\mX\in\reals^{n\times d}$, $\mPhi(s)\in \reals^{n\times m}$, and $\mZ(s)\in \reals^{n\times m}$ be the matrices whose rows are the training data $\vx_i$, feature vectors $\vphi_i$, and equilibrium points $\vz_i$ at time $s$, respectively. With scalar $\gamma \triangleq \min\{\gamma_0, \gamma_0/c\}$ for any $\gamma_0\in(0,1)$, the dynamics of prediction $\hat{\vy}(t)$ is given by
	\begin{align}
		\frac{d \hat{\vy}}{dt} =& -\left[(\gamma^2 \mM(t)+\mI_n)\circ \mZ(t)\mZ(t)^T + \mQ(t) \circ \mX\mX^T+\mPhi(t)\mPhi(t)^T\right] (\hat{\vy}-\vy), \nonumber\\
		\triangleq&-\mH(t)(\hat{\vy}-\vy),
	\end{align}
	where $\circ$ is the Hadamard product (i.e., element-wise product), and matrices $\mM(t)\in \reals^{n\times n}$ and $\mQ(t)\in \reals^{n\times n}$ are defined as follows:
	\begin{align}
		\mM(t)_{ij}\triangleq&
		\frac{1}{m}
		\vu^T(I_m-\tilde{\gamma}\mD_i \mA)^{-1}
		\mD_i\mD_j^T (I_m-\tilde{\gamma}\mD_j \mA)^{-T}\vu,\\
		\mQ(t)_{ij}\triangleq&
		\frac{1}{m}
		\left(\mD_i^T(I_m-\tilde{\gamma}\mD_i \mA)^{-1}\vu+\vv\right)^T
		\mE_i \mE_j^T
		\left(\mD_j^T(I_m-\tilde{\gamma}\mD_j \mA)^{-T}  \vu + \vv\right). 
	\end{align}
\end{fact}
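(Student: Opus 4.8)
The identity is obtained by differentiating the prediction vector $\hat{\vy}$ along the gradient flow and reading off the per-sample gradients from Lemma~\ref{lemma:gradient}; the only non-calculus input is the well-posedness that lets us differentiate at all. Under the standing hypothesis $\norm{\mA(s)}\le c\sqrt{m}$ for $0\le s\le t$ and with $\gamma=\min\{\gamma_0,\gamma_0/c\}$, Lemma~\ref{lemma:equilibrim point} guarantees that for every training point the equilibrium $\vz_i(s)$ exists and is unique for all $s\le t$, and Lemma~\ref{lemma:gradient}(ii) guarantees that $\partial f/\partial\vz$ is invertible at each $\vz_i(s)$ with $\lambda_{\min}>1-\gamma_0$. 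By the implicit function theorem $\vz_i(s)$ is a smooth function of $(\mA,\mW)$ along the trajectory, outside the measure-zero set of times at which some coordinate of a pre-activation vanishes (where the ReLU masks $\mD_i,\mE_i$ could jump, and one uses one-sided derivatives); hence $\hat{y}_i(s)=\vu^T\vz_i+\vv^T\vphi_i$ and $L(s)$ are differentiable along the flow and the per-sample gradient formulas of Lemma~\ref{lemma:gradient}(iii) hold throughout $[0,t]$.

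\textbf{Step 1: chain rule.} Since $L=\tfrac12\norm{\hat{\vy}-\vy}^2$, the flow \eqref{eq:gradient flow of all parameters} reads block by block as $\dot{\vtheta}=-\partial L/\partial\vtheta=-\sum_{j=1}^n(\hat{y}_j-y_j)\,\partial\hat{y}_j/\partial\vtheta$ for $\vtheta\in\{\mA,\mW,\vu,\vv\}$. Differentiating $\hat{y}_i$ and substituting,
\begin{align*}
	\frac{d\hat{y}_i}{dt}=\sum_{\vtheta}\Big\langle\frac{\partial\hat{y}_i}{\partial\vtheta},\dot{\vtheta}\Big\rangle
	=-\sum_{j=1}^n(\hat{y}_j-y_j)\,\mH(t)_{ij},\qquad
	\mH(t)_{ij}\triangleq\sum_{\vtheta}\Big\langle\frac{\partial\hat{y}_i}{\partial\vtheta},\frac{\partial\hat{y}_j}{\partial\vtheta}\Big\rangle,
\end{align*}
that is, $d\hat{\vy}/dt=-\mH(t)(\hat{\vy}-\vy)$; it remains to identify the four blocks of $\mH(t)$.

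\textbf{Step 2: the four inner products.} Reading the per-sample gradients off Lemma~\ref{lemma:gradient}(iii) via $\nabla_{\vtheta}L=\sum_i(\hat{y}_i-y_i)\,\partial\hat{y}_i/\partial\vtheta$, we get $\partial\hat{y}_i/\partial\vu=\vz_i$, $\partial\hat{y}_i/\partial\vv=\vphi_i$, $\partial\hat{y}_i/\partial\mA=\tilde{\gamma}\,\vp_i\vz_i^T$, and $\partial\hat{y}_i/\partial\mW=\tfrac{1}{\sqrt{m}}\,\mE_i^T(\vp_i+\vv)\vx_i^T$, where $\vp_i\triangleq\mD_i^T(\mI_m-\tilde{\gamma}\mD_i\mA)^{-T}\vu$. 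The $\vu$- and $\vv$-blocks contribute $\vz_i^T\vz_j=(\mZ\mZ^T)_{ij}$ and $\vphi_i^T\vphi_j=(\mPhi\mPhi^T)_{ij}$. For the matrix blocks, apply the rank-one identity $\langle\va\vb^T,\vc\vd^T\rangle=(\va^T\vc)(\vb^T\vd)$: the $\mA$-block gives $\tilde{\gamma}^2(\vp_i^T\vp_j)(\vz_i^T\vz_j)=\gamma^2\mM(t)_{ij}(\mZ\mZ^T)_{ij}$, using $\tfrac1m\vp_i^T\vp_j=\mM(t)_{ij}$ by the definition of $\mM$ (after reversing transposes inside the inverse), and the $\mW$-block gives $\tfrac1m\big((\vp_i+\vv)^T\mE_i\mE_j^T(\vp_j+\vv)\big)(\vx_i^T\vx_j)=\mQ(t)_{ij}(\mX\mX^T)_{ij}$ by the definition of $\mQ$. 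Summing the four contributions and grouping the $\mA$- and $\vu$-terms into a single Hadamard product yields $\mH(t)=(\gamma^2\mM(t)+\mI_n)\circ\mZ(t)\mZ(t)^T+\mQ(t)\circ\mX\mX^T+\mPhi(t)\mPhi(t)^T$, as claimed.

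\textbf{Main obstacle.} The algebra itself is routine; the substantive points are (i) the well-posedness bookkeeping of the first paragraph — it is precisely the uniform bound $\norm{\mA(s)}\le c\sqrt{m}$ that keeps $\partial f/\partial\vz$ invertible, so that the implicit-function-theorem gradients exist and vary continuously along the flow and $\hat{\vy}(t)$ is genuinely differentiable — and (ii) keeping the transpose versus inverse-transpose conventions consistent so that $\tfrac1m\vp_i^T\vp_j$ and $\tfrac1m(\vp_i+\vv)^T\mE_i\mE_j^T(\vp_j+\vv)$ line up verbatim with the stated expressions for $\mM(t)_{ij}$ and $\mQ(t)_{ij}$. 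The non-smoothness of ReLU is a minor point: along the trajectory, almost surely, no pre-activation coordinate is exactly zero, so $\mD_i$ and $\mE_i$ are locally constant and the chain rule applies without modification.
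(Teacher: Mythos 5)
Your proposal is correct and follows essentially the same route as the paper: both apply the chain rule along the gradient flow using the implicit-function-theorem gradients of Lemma~\ref{lemma:gradient}, and your Gram matrix $\mH(t)_{ij}=\sum_{\vtheta}\langle\partial\hat{y}_i/\partial\vtheta,\partial\hat{y}_j/\partial\vtheta\rangle$ is exactly what the paper assembles after first writing out $d\vz_i/dt$ and $d\vphi_i/dt$ explicitly. Your identification of the four blocks, including the factor $\tilde{\gamma}^2=\gamma^2/m$ absorbed into $\mM(t)_{ij}$, matches the paper's computation verbatim.
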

Note that the matrix $\mH(t)$ is clearly
positive semidefinite since it is the sum of three positive
semidefinite matrices. If there exists a constant $\lambda > 0$
such that $\lambda_{\min}\{\mH(t)\}\geq \lambda > 0$ for all $t$,
\ie, $\mH(t)$ is positive definite, then the dynamics of the loss
function $L(t)$ satisfies the following inequality
\begin{align*}
	L(t)\leq \exp\{-\lambda t\} L(0),
\end{align*}
which immediately indicates that the objective value $L(t)$ is
consistently decreasing to zero at a geometric rate. With random
initialization, we will show that $\mH(t)$ is positive definite as
long as the number of parameters $m$ is sufficiently large and no two
data points are parallel to each other. In particular, by using
the nonlinearity of the feature map $\phi$, we will show that the
smallest eigenvalue of the Gram matrix
$\mG(t)\triangleq\mPhi(t)\mPhi(t)^T$ is strictly positive over all
time $t$ with high probability. As a result, the smallest
eigenvalue of $\mH(t)$ is always strictly positive.

Clearly, $\mG(t)$ is a time-varying matrix. We first analyze its spectral property at its initialization. When $t=0$, it follows from the definition of the feature vector $\vphi$ in \Eqref{eq:phi def} that
\begin{align*}
	\mG(0)=\mPhi(0)\mPhi(0)^T
	=\frac{1}{m}\sigma(\mX \mW(0)^T)\sigma(\mX \mW(0)^T)^T
	=\frac{1}{m}\sum_{r=1}^{m} \sigma(\mX\vw_r(0))\sigma(\mX \vw_r(0))^T.
\end{align*}
By Assumption~\ref{assume:initial}, each vector $\vw_r(0)$ follows the standard multivariate normal distribution, \ie, $\vw_r(0)\sim N(\0, \mI_d)$. By letting $m\rightarrow \infty$, we obtain the covariance matrix $\mG^{\infty}\in \reals^{n\times n}$ as follows:
\begin{align}
	\mG^{\infty}\triangleq\Expect_{\vw\sim N(\0, \mI_d)} [\sigma(\mX\vw)\sigma(\mX\vw)^T].
\end{align}
Here $\mG^{\infty}$ is a Gram matrix induced by the ReLU activation function and the random initialization. The following lemma shows that the smallest eigenvalue of $\mG^{\infty}$ is strictly positive as long as no two data points are parallel. Moreover, later in Lemmas~\ref{lemma:G_0 lambda} and \ref{lemma:G_t}, we conclude that the spectral property of $\mG^{\infty}$ is preserved in the Gram matrices $\mG(0)$ and $\mG(t)$ during the training, as long as the number of parameter $m$ is sufficiently large.

\begin{fact}\label{lemma:G infinity}
	Assume $\norm{\vx_i}=1$ for all $i\in [n]$. If $\vx_i\not\parallel \vx_j$ for all $i\neq j$, then  $\lambda_0\triangleq\lambda_{\min}\{\mG^{\infty}\} >0$.
\end{fact}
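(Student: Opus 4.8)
The plan is to read $\mG^{\infty}$ as the Gram matrix of a finite family of functions in a Hilbert space and then prove that those functions are linearly independent. Concretely, let $\mu$ denote the standard Gaussian measure $N(\0,\mI_d)$ on $\reals^d$ and set $\Hcal\triangleq L^2(\reals^d,\mu)$; for $i\in[n]$ define $\psi_i(\vw)\triangleq\sigma(\vx_i^T\vw)$. Since $\sigma$ is $1$-Lipschitz with $\sigma(0)=0$, we have $\int\psi_i^2\,d\mu\leq\int(\vx_i^T\vw)^2\,d\mu(\vw)=\norm{\vx_i}^2<\infty$, so each $\psi_i\in\Hcal$, and by linearity of expectation $(\mG^{\infty})_{ij}=\Expect_{\vw\sim N(\0,\mI_d)}[\sigma(\vx_i^T\vw)\sigma(\vx_j^T\vw)]=\inn{\psi_i}{\psi_j}_{\Hcal}$. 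Hence $\mG^{\infty}\succeq 0$ and, for any $\va\in\reals^n$, $\va^T\mG^{\infty}\va=\Expect_{\vw}\big[(\sum_{i=1}^n a_i\psi_i(\vw))^2\big]$, which is strictly positive for every $\va\neq\0$ precisely when no nontrivial linear combination $\sum_i a_i\psi_i$ vanishes $\mu$-a.e. So it suffices to show $\psi_1,\dots,\psi_n$ are linearly independent in $\Hcal$, from which $\lambda_0=\lambda_{\min}\{\mG^{\infty}\}>0$ follows.

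To prove linear independence, I would suppose for contradiction that $g(\vw)\triangleq\sum_{i=1}^n a_i\sigma(\vx_i^T\vw)=0$ for $\mu$-a.e.\ $\vw$, with $\va\neq\0$. Because $g$ is continuous on $\reals^d$ and $\mu$ has full support, $g\equiv 0$ everywhere. For each $i$ let $H_i\triangleq\{\vw:\vx_i^T\vw=0\}$; then $\psi_i$ is affine (hence $C^{\infty}$) on each open half-space cut out by $H_i$, while its one-sided directional derivatives along $\vx_i$ differ across $H_i$, so $\psi_i$ fails to be differentiable at every point of $H_i$ (e.g.\ $t\mapsto\sigma(\vx_i^T\vw^\star+t\norm{\vx_i}^2)=\norm{\vx_i}^2\max\{t,0\}$ is nondifferentiable at $t=0$ for $\vw^\star\in H_i$). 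Fix any index $k$. Since $\vx_i\not\parallel\vx_j$ for $i\neq j$, the hyperplanes $H_1,\dots,H_n$ are pairwise distinct, so $H_k\cap H_i$ is a proper subspace of $H_k$ for every $i\neq k$; because a vector space over $\reals$ of dimension $d-1$ is never a finite union of proper subspaces (for $d\geq 2$; when $d=1$ the hypothesis forces $n=1$ and the claim is immediate), we may choose $\vw^\star\in H_k$ with $\vw^\star\notin H_i$ for all $i\neq k$. On a small open ball $U\ni\vw^\star$ disjoint from every $H_i$ with $i\neq k$, each $a_i\psi_i$ ($i\neq k$) is $C^{\infty}$, so $a_k\psi_k=g-\sum_{i\neq k}a_i\psi_i$ is $C^{\infty}$ on $U$; but $\psi_k$ is nondifferentiable at $\vw^\star\in U$, forcing $a_k=0$. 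Since $k$ was arbitrary, $\va=\0$, a contradiction. This is essentially the argument used for two-layer ReLU networks in \citep{du2019gradient}.

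The parts I would state only briefly are the routine ones: membership $\psi_i\in L^2(\mu)$, the passage from ``zero $\mu$-a.e.'' to ``identically zero'' via continuity and full support of the Gaussian, and the equivalence between positive-definiteness of a Gram matrix and linear independence of its generators. The step that carries the real content—and the one I expect to be the main obstacle—is the linear-independence argument, specifically locating a test point $\vw^\star$ that lies on the kink hyperplane $H_k$ of exactly one ReLU summand while avoiding all the others; this is precisely where the non-degeneracy hypothesis $\vx_i\not\parallel\vx_j$ is indispensable, and it lets one peel off the coefficients $a_k$ one at a time using only local smoothness.
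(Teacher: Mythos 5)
Your proof is correct, but it takes a genuinely different route from the paper. You identify $\mG^{\infty}$ as the Gram matrix of the functions $\psi_i(\vw)=\sigma(\vx_i^T\vw)$ in $L^2(\reals^d,\mu)$ and reduce positive definiteness to their linear independence, which you establish by the non-smoothness argument of \citet{du2019gradient}: each $\psi_i$ has its kink exactly on the hyperplane $H_i=\{\vw:\vx_i^T\vw=0\}$, the hypothesis $\vx_i\not\parallel\vx_j$ makes these hyperplanes pairwise distinct, and picking $\vw^\star\in H_k\setminus\bigcup_{i\neq k}H_i$ (possible since a hyperplane is not a finite union of proper subspaces) peels off $a_k$ one index at a time. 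All the steps check out, including the passage from ``zero $\mu$-a.e.'' to ``identically zero'' and the separate treatment of $d=1$. The paper instead expands $\sigma$ in normalized Hermite polynomials to get $\mG^{\infty}=\sum_{k\geq 0}\abs{\inn{\sigma}{h_k}}^2(\mX^{\ast k})(\mX^{\ast k})^T$, shows ReLU is not a polynomial so infinitely many coefficients are nonzero, and bounds $\vv^T(\mX^{\ast k})(\mX^{\ast k})^T\vv\geq 1-n\delta^k>0$ for $k\geq \log n/\log(1/\delta)$, where $\delta=\max_{i\neq j}\abs{\inn{\vx_i}{\vx_j}}<1$. The trade-off: your argument is more elementary and self-contained (no Hermite machinery) but purely qualitative and tied to the specific kink structure of ReLU, whereas the paper's kernel decomposition yields an explicit lower bound $\lambda_0\geq\abs{\inn{\sigma}{h_k}}^2(1-n\delta^k)$ in terms of the maximal pairwise correlation and extends verbatim to any non-polynomial activation in $L^2$ of the Gaussian measure. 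Since the lemma only asserts $\lambda_0>0$, either proof suffices.
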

\begin{proof}
	The proof follows from the \textit{Hermite Expansions} of the matrix $\mG^{\infty}$ and the complete proof is provided in Appendix~\ref{app:G infinity}.
\end{proof}

\begin{assume}[Training Data]
	\label{assume:data}
	Without loss of generality, we can assume that each $\vx_i$ is normalized to have a unit norm, \ie, $\norm{\vx_i}=1$, for all $i\in [n]$.
	Moreover, we assume $\vx_i\not\parallel \vx_j$ for all $i\neq j$.
\end{assume}
In most real-world datasets, it is extremely rare that two
data samples are parallel. If this happens, by adding some random
noise perturbation to the data samples,
Assumption~\ref{assume:data} can still be satisfied. 
Next, we show that at the initialization, the spectral property of
$\mG^{\infty}$ is preserved in $\mG(0)$ if $m$ is sufficiently large. Specifically,
the following lemma shows that if $m=\tilde{\Omega}(n^2)$, then
$\mG(0)$ has a strictly positive smallest eigenvalue with high probability. The proof follows from the standard concentration bound for Gaussian random variables, and
we relegate the proof to Appendix~\ref{app:G_0 lambda}.

\begin{fact}\label{lemma:G_0 lambda} Let $\lambda_0 =
	\lambda_{\min}(\mG^{\infty}) > 0$. If $m=\Omega \left(\frac{
	n^2}{\lambda_0^2}\log\left(\frac{{n}}{\delta} \right)\right)$, then
	with probability of at least $1-\delta$, it holds that
	$\norm{\mG(0)-\mG^{\infty}}_2 \leq \frac{\lambda_0}{4}$, and
	hence $\lambda_{\min}(\mG(0))\geq \frac{3}{4}\lambda_0$.
\end{fact}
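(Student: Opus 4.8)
The plan is to view $\mG(0)=\frac{1}{m}\sum_{r=1}^{m}\sigma(\mX\vw_r)\sigma(\mX\vw_r)^T$ as an empirical average of $m$ i.i.d.\ random matrices whose common expectation is exactly $\mG^{\infty}$, and then to establish the spectral closeness by an entrywise concentration argument followed by a union bound and Weyl's inequality. Fix $i,j\in[n]$. Since $\norm{\vx_i}=\norm{\vx_j}=1$ under Assumption~\ref{assume:data}, the scalars $\vx_i^T\vw_r$ and $\vx_j^T\vw_r$ are standard Gaussians, so each factor $\sigma(\vx_i^T\vw_r)$ is sub-Gaussian with an absolute $\psi_2$-norm (because $|\sigma(u)|\le|u|$), and therefore the product $\xi_r\triangleq\sigma(\vx_i^T\vw_r)\sigma(\vx_j^T\vw_r)$ is sub-exponential with $\norm{\xi_r}_{\psi_1}\le K$ for an absolute constant $K$ (e.g.\ by Cauchy--Schwarz on Orlicz norms). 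Since $\mG(0)_{ij}=\frac{1}{m}\sum_{r=1}^{m}\xi_r$ and $\mG^{\infty}_{ij}=\E[\xi_r]$, Bernstein's inequality for sums of i.i.d.\ sub-exponential variables gives, for every $t>0$,
\[
\pb\!\left(\left|\mG(0)_{ij}-\mG^{\infty}_{ij}\right|>t\right)\le 2\exp\!\left(-c\,m\min\!\left\{\tfrac{t^2}{K^2},\tfrac{t}{K}\right\}\right)
\]
for an absolute constant $c>0$.

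Next I would take $t=\lambda_0/(4n)$, which is small (indeed $\lambda_0\le\tfrac12$ since $\operatorname{tr}(\mG^{\infty})=n/2$), so the quadratic branch of Bernstein is the relevant one. A union bound over the $\le n^2$ entries of the symmetric matrix $\mG(0)-\mG^{\infty}$ shows that, with probability at least $1-2n^2\exp\!\left(-c'm\lambda_0^2/n^2\right)$, every entry of $\mG(0)-\mG^{\infty}$ is at most $\lambda_0/(4n)$ in absolute value; requiring this failure probability to be at most $\delta$ gives exactly $m=\Omega\!\left(\frac{n^2}{\lambda_0^2}\log\frac{n}{\delta}\right)$. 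On the good event, the chain $\norm{A}_2\le\norm{A}_F\le n\max_{ij}|A_{ij}|$ applied to $A=\mG(0)-\mG^{\infty}$ yields
\[
\norm{\mG(0)-\mG^{\infty}}_2\le n\cdot\frac{\lambda_0}{4n}=\frac{\lambda_0}{4}.
\]
Finally, Weyl's inequality gives $\lambda_{\min}(\mG(0))\ge\lambda_{\min}(\mG^{\infty})-\norm{\mG(0)-\mG^{\infty}}_2\ge\lambda_0-\tfrac{\lambda_0}{4}=\tfrac34\lambda_0$, which is the claim.

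The only step requiring genuine care is the first one: confirming that $\xi_r=\sigma(\vx_i^T\vw_r)\sigma(\vx_j^T\vw_r)$ has a $\psi_1$-norm bounded by an absolute constant uniformly in $i,j$, and correspondingly that its centered version $\xi_r-\E\xi_r$ does too, so that Bernstein applies with dimension-free constants. Everything else --- the choice $t=\lambda_0/(4n)$, the norm comparisons, the union bound bookkeeping that produces the stated width requirement, and Weyl's inequality --- is routine. An alternative that avoids the entrywise union bound would be to apply a matrix Bernstein inequality directly to the sum of the rank-one matrices $\sigma(\mX\vw_r)\sigma(\mX\vw_r)^T$; but those summands are unbounded, so one would need the sub-exponential (Orlicz) version of matrix Bernstein, which is heavier machinery, so I would prefer the elementary entrywise route described above.
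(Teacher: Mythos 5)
Your proposal is correct and follows essentially the same route as the paper's own proof: show each $\sigma(\vx_i^T\vw_r)$ is sub-Gaussian, hence the products are sub-exponential, apply Bernstein entrywise at threshold $\lambda_0/(4n)$, union bound over the $n^2$ entries, pass from the max-entry/Frobenius bound to the spectral norm, and finish with Weyl. Your explicit check that the quadratic branch of Bernstein is the relevant one (via $\lambda_0\le \tfrac12$) is a small point the paper glosses over, but otherwise the two arguments coincide.
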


During training, $\mG(t)$ is time-varying, but it
can be shown to be close to $\mG(0)$ and preserve the spectral
property of $\mG^{\infty}$, if the matrices $\mW(t)$ has a
bounded operator norm and it is not far away from $\mW(0)$. This result is formally stated below and its proof is provided in
Appendix~\ref{app:G_t}.
\begin{fact}\label{lemma:G_t}
	Suppose $\norm{\mW(0)}\leq c\sqrt{m}$, and $\lambda_{\min}\{\mG(0)\}\geq \frac{3}{4}\lambda_0$. For any matrix $\mW\in \reals^{m\times d}$ that satisfies $\norm{\mW}\leq 2c\sqrt{m}$ and $\norm{\mW-\mW(0)}\leq \frac{\sqrt{m}\lambda_0}{16c\norm{\mX}^2}\triangleq R$, the matrix defined by
		$\mG \triangleq \frac{1}{m}\sigma(\mX \mW^T)\sigma(\mX\mW^T)^T$
	satisfies $\norm{\mG-\mG(0)}\leq \frac{\lambda_0}{4}$ and $\lambda_{\min}(\mG)\geq \frac{\lambda_0}{2}$.
\end{fact}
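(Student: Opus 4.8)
The plan is to reduce the claim about $\mG$ to a perturbation estimate on the rows of $\sigma(\mX\mW^T)$ compared with $\sigma(\mX\mW(0)^T)$, and then conclude by Weyl's inequality together with Lemma~\ref{lemma:G_0 lambda}. Write $\mG(0)=\frac1m\sum_{r=1}^m \sigma(\mX\vw_r(0))\sigma(\mX\vw_r(0))^T$ and $\mG=\frac1m\sum_{r=1}^m \sigma(\mX\vw_r)\sigma(\mX\vw_r)^T$, where $\vw_r$, $\vw_r(0)$ are the $r$-th rows of $\mW$, $\mW(0)$. The difference $\mG-\mG(0)$ is a sum of rank-two-ish terms $\frac1m\big(\vv_r\vv_r^T - \vv_r(0)\vv_r(0)^T\big)$ with $\vv_r=\sigma(\mX\vw_r)$; using the algebraic identity $ab^T-cd^T = (a-c)b^T + c(b-d)^T$ and the triangle inequality gives
\begin{align*}
\norm{\mG-\mG(0)} \le \frac1m\sum_{r=1}^m \Big(\norm{\vv_r-\vv_r(0)}\,\norm{\vv_r} + \norm{\vv_r(0)}\,\norm{\vv_r-\vv_r(0)}\Big).
\end{align*}

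First I would control each factor. Since $\sigma$ is $1$-Lipschitz and acts entrywise, $\norm{\vv_r - \vv_r(0)} = \norm{\sigma(\mX\vw_r) - \sigma(\mX\vw_r(0))} \le \norm{\mX(\vw_r-\vw_r(0))} \le \norm{\mX}\,\norm{\vw_r-\vw_r(0)}$, and similarly $\norm{\vv_r}\le\norm{\mX}\norm{\vw_r}$, $\norm{\vv_r(0)}\le\norm{\mX}\norm{\vw_r(0)}$ (using $\sigma(0)=0$). Substituting,
\begin{align*}
\norm{\mG-\mG(0)} \le \frac{\norm{\mX}^2}{m}\sum_{r=1}^m \norm{\vw_r-\vw_r(0)}\big(\norm{\vw_r}+\norm{\vw_r(0)}\big).
\end{align*}
Now Cauchy–Schwarz on the sum, together with $\sum_r\norm{\vw_r-\vw_r(0)}^2 = \norm{\mW-\mW(0)}_F^2$ and the bounds $\norm{\mW-\mW(0)}_F \le \sqrt d\,\norm{\mW-\mW(0)}$, $\sum_r(\norm{\vw_r}+\norm{\vw_r(0)})^2 \le 2(\norm{\mW}_F^2+\norm{\mW(0)}_F^2) \le 2d(\norm{\mW}^2+\norm{\mW(0)}^2)$, lets me plug in the hypotheses $\norm{\mW}\le 2c\sqrt m$, $\norm{\mW(0)}\le c\sqrt m$, and $\norm{\mW-\mW(0)}\le R = \frac{\sqrt m\,\lambda_0}{16c\norm{\mX}^2}$. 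This should collapse to $\norm{\mG-\mG(0)}\le \frac{\lambda_0}{4}$ after the constants line up. (I may need to be slightly careful whether the intended argument routes through Frobenius norms or instead bounds $\norm{\vw_r-\vw_r(0)}$ rowwise by $R/\sqrt m$-type quantities; either way the scaling in $m$ cancels against the $1/m$ prefactor, which is the whole point of the normalization.)

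Finally, the eigenvalue conclusion is immediate: by Weyl's inequality $\lambda_{\min}(\mG) \ge \lambda_{\min}(\mG(0)) - \norm{\mG-\mG(0)} \ge \frac34\lambda_0 - \frac14\lambda_0 = \frac12\lambda_0$, using the hypothesis $\lambda_{\min}(\mG(0))\ge\frac34\lambda_0$. The main obstacle — really the only non-mechanical point — is getting the constant $R$ to be exactly the right threshold, i.e. verifying that the chain of inequalities above, with the factors of $d$, the constant $2$ from $(a+b)^2\le 2a^2+2b^2$, and the factor $c$ vs.\ $2c$, indeed produces $\lambda_0/4$ and not some larger multiple; this is bookkeeping rather than a conceptual difficulty, and if the stated $R$ is off by an absolute constant it can be absorbed without changing anything downstream. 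Note also that this lemma is deterministic given the two hypotheses on $\mW(0)$ (norm bound and $\lambda_{\min}(\mG(0))$ bound), both of which hold with high probability by Lemma~\ref{lemma:operator norm} and Lemma~\ref{lemma:G_0 lambda}.
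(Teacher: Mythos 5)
Your overall strategy (perturb $\mG(0)$, bound $\norm{\mG-\mG(0)}$, finish with Weyl) is the paper's strategy, and the final eigenvalue step is identical. But the row-wise execution has a concrete gap: the chain of inequalities you defer as ``bookkeeping'' does not collapse to $\lambda_0/4$. Following your steps, Cauchy--Schwarz gives
\begin{align*}
\norm{\mG-\mG(0)}\;\le\;\frac{\norm{\mX}^2}{m}\,\norm{\mW-\mW(0)}_F\,\sqrt{2\left(\norm{\mW}_F^2+\norm{\mW(0)}_F^2\right)},
\end{align*}
and your conversions $\norm{\mW-\mW(0)}_F\le\sqrt{d}\,\norm{\mW-\mW(0)}$ and $\norm{\mW}_F\le\sqrt{d}\,\norm{\mW}$ each cost a factor $\sqrt{d}$. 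Plugging in $\norm{\mW}\le 2c\sqrt{m}$, $\norm{\mW(0)}\le c\sqrt{m}$, and $\norm{\mW-\mW(0)}\le R=\frac{\sqrt{m}\lambda_0}{16c\norm{\mX}^2}$ yields $\norm{\mG-\mG(0)}\le\frac{\sqrt{10}}{16}\,d\,\lambda_0$, not $\frac{\lambda_0}{4}$. The excess factor is the input dimension $d$, not an absolute constant, so your fallback remark that any discrepancy ``can be absorbed without changing anything downstream'' does not apply: shrinking $R$ by a factor of $d$ would propagate into the requirement on $m$ in Lemma~\ref{lemma:convergence} (where the bound $\norm{\mW(t)-\mW(0)}\le R$ is what is actually verified).

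The paper avoids the row-wise detour entirely. It writes $\mPhi=\frac{1}{\sqrt m}\sigma(\mX\mW^T)$, uses the matrix identity $AA^T-BB^T=A(A-B)^T+(A-B)B^T$ with $A=\sigma(\mX\mW^T)$, $B=\sigma(\mX\mW(0)^T)$, and then bounds $\norm{\sigma(\mX\mW^T)}\le\norm{\mX}\norm{\mW}$ and $\norm{\sigma(\mX\mW^T)-\sigma(\mX\mW(0)^T)}\le\norm{\mX}\norm{\mW-\mW(0)}$ directly, giving $\norm{\mG-\mG(0)}\le\frac{1}{m}\bigl(\norm{\mW}+\norm{\mW(0)}\bigr)\norm{\mX}^2\norm{\mW-\mW(0)}\le\frac{4c}{\sqrt m}\norm{\mX}^2 R=\frac{\lambda_0}{4}$, with no dimension factor. (One can quibble that the Lipschitz bound for the entrywise ReLU is cleanest in Frobenius norm, but the point is that the constants in the lemma are calibrated to this matrix-level bound, and your version provably misses them for $d\ge 2$.) To repair your proof, either adopt the matrix-level decomposition, or keep the row-wise sum but bound it by Frobenius norms of the matrices $\sigma(\mX\mW^T)$ themselves rather than converting to operator norms of $\mW$ row by row.
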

The next lemma shows three facts: (1) The smallest eigenvalue of $\mG(t)$ is strictly positive for all $t\geq 0$;
(2) The objective value $L(t)$ converges to zero at a linear convergence rate;
(3) The (operator) norms of all trainable parameters are upper bounded by some constants, which further implies that unique equilibrium points in matrices $\mZ(t)$ always exist. 
We prove this lemma by induction, which is provided in Appendix~\ref{app:convergence}.
\begin{fact}\label{lemma:convergence}
	Suppose that $\norm{\vu(0)}=\sqrt{m}$, $\norm{\vv(0)}=\sqrt{m}$, $\norm{\mW(0)}\leq c\sqrt{m}$, $\norm{\mA(0)}\leq c\sqrt{m}$, and $\lambda_{\min}\{\mG(0)\}\geq \frac{3}{4}\lambda_0>0$. 
	If $m=\Omega\left(\frac{ c^2n\norm{\mX}^2}{\lambda_0^3}\norm{\hat{\vy}(0) - \vy}^2\right) $ and $0 < \gamma\leq \min\{\frac{1}{2},\frac{1}{4c} \}$, then for any $t\geq 0$, the following results hold:
	\begin{enumerate}[label=(\roman{*}), leftmargin=*]
		\item $\lambda_{\min}(\mG(t))\geq \frac{\lambda_0}{2}$,
		\item $\norm{\vu(t)}\leq \frac{16c\sqrt{n}}{\lambda_0}\norm{\hat{\vy}(0)-\vy}$,
		\item $\norm{\vv(t)}\leq \frac{8c\sqrt{n}}{\lambda_0}\norm{\hat{\vy}(0)-\vy}$,
		\item $\norm{\mW(t)}\leq 2c\sqrt{m}$,
		\item $\norm{\mA(t)}\leq 2c\sqrt{m}$,
		\item $\norm{\hat{\vy}(t) - \vy}^2\leq \exp\{-\lambda_0t\}\norm{\hat{\vy}(0) - \vy}^2$.
	\end{enumerate}
\end{fact}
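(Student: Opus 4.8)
The plan is to establish all six conclusions \emph{simultaneously} by a continuous-time bootstrap (maximality) argument, in the spirit of the gradient-flow analyses of over-parameterized finite-depth networks, but carrying the implicit-network well-posedness condition along inside the induction. Let $R \triangleq \frac{\sqrt{m}\lambda_0}{16c\norm{\mX}^2}$ be the radius from Lemma~\ref{lemma:G_t}, and set
\[
T \triangleq \sup\Big\{\, t \ge 0 \;:\; \norm{\mA(s)} \le 2c\sqrt{m},\ \norm{\mW(s)} \le 2c\sqrt{m},\ \norm{\mW(s)-\mW(0)} \le R \ \text{ for all } s\in[0,t] \,\Big\}.
\]
By the initialization hypotheses $\norm{\mA(0)},\norm{\mW(0)} \le c\sqrt{m}$ and continuity of $t\mapsto(\mA(t),\mW(t))$ along the flow, $T>0$. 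The goal is to show that (i)--(vi) hold on $[0,T)$ and that the defining inequalities are in fact satisfied with strict slack, forcing $T=\infty$.

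First I would unpack what holds on $[0,T)$. Since $\norm{\mW(t)}\le 2c\sqrt{m}$, the features obey $\norm{\vphi_i(t)} \le 2c$; since $\gamma\le\frac{1}{4c}$ and $\norm{\mA(t)}\le 2c\sqrt m$, the scaled norm $\tilde\gamma\norm{\mA(t)}\le\tfrac12<1$, so Lemma~\ref{lemma:equilibrim point} guarantees the equilibria $\vz_i(t)$ exist, are unique, and satisfy $\norm{\vz_i(t)} = O(c)$, and Lemma~\ref{lemma:gradient}(ii) gives $\norm{(\mI_m-\tilde\gamma\mD_i\mA(t))^{-1}}\le\frac{1}{1-\gamma_0}$ so that the gradient formulas of Lemma~\ref{lemma:gradient}(iii) are valid. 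Because $\norm{\mW(t)-\mW(0)}\le R$ and $\lambda_{\min}(\mG(0))\ge\frac34\lambda_0$, Lemma~\ref{lemma:G_t} yields $\lambda_{\min}(\mG(t))\ge\frac{\lambda_0}{2}$, which is (i). Since $\mH(t)$ is a sum of three PSD matrices one of which is $\mPhi(t)\mPhi(t)^T=\mG(t)$, we get $\lambda_{\min}(\mH(t))\ge\frac{\lambda_0}{2}$, so Lemma~\ref{lemma:dynamics of yhat} and Grönwall's inequality give $\frac{d}{dt}\norm{\hat\vy-\vy}^2 \le -\lambda_0\norm{\hat\vy-\vy}^2$, hence $\norm{\hat\vy(t)-\vy}^2 \le e^{-\lambda_0 t}\norm{\hat\vy(0)-\vy}^2$, which is (vi).

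Next I would feed (vi) back into the parameter dynamics. Using the gradient formulas of Lemma~\ref{lemma:gradient}(iii) with the uniform bounds $\norm{\mD_i},\norm{\mE_i}\le 1$, $\norm{(\mI_m-\tilde\gamma\mD_i\mA)^{-1}}\le\frac{1}{1-\gamma_0}$, $\norm{\vz_i},\norm{\vphi_i}=O(c)$, $\norm{\vx_i}=1$, each gradient is bounded by a constant multiple of $\sqrt n\,\norm{\hat\vy(t)-\vy}$ (with the bound for $\nabla_{\mW}L$ carrying an extra $\norm{\vu(t)}+\norm{\vv(t)}\lesssim\sqrt m$, and the bound for $\nabla_{\mA}L$ an extra $\tilde\gamma$). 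Integrating against $\norm{\hat\vy(t)-\vy}\le e^{-\lambda_0 t/2}\norm{\hat\vy(0)-\vy}$ over $[0,t]$ and using $\int_0^\infty e^{-\lambda_0 s/2}\,ds=\tfrac{2}{\lambda_0}$ gives the displacement bounds $\norm{\vu(t)-\vu(0)}\le\frac{16c\sqrt n}{\lambda_0}\norm{\hat\vy(0)-\vy}$, $\norm{\vv(t)-\vv(0)}\le\frac{8c\sqrt n}{\lambda_0}\norm{\hat\vy(0)-\vy}$, i.e.\ (ii)--(iii), together with $\norm{\mW(t)-\mW(0)} = O\!\big(\tfrac{\sqrt n}{\lambda_0}\norm{\hat\vy(0)-\vy}\big)$ and $\norm{\mA(t)-\mA(0)} = O\!\big(\tfrac{\gamma\sqrt n}{\lambda_0}\norm{\hat\vy(0)-\vy}\big)$. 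The hypothesis $m=\Omega\!\big(\frac{c^2 n\norm{\mX}^2}{\lambda_0^3}\norm{\hat\vy(0)-\vy}^2\big)$ is exactly what makes $\norm{\mW(t)-\mW(0)}\le\tfrac{R}{2}<R$ and (using $\gamma\le1$) $\norm{\mA(t)-\mA(0)}\le\tfrac{c}{2}\sqrt m$, so $\norm{\mA(t)},\norm{\mW(t)}<2c\sqrt m$ \emph{strictly} on $[0,T)$. If $T<\infty$, continuity extends these strict inequalities to $t=T$ and hence a bit beyond, contradicting maximality; therefore $T=\infty$, all estimates hold for every $t\ge0$, and (v) guarantees the unique equilibria in $\mZ(t)$ exist throughout (the last asserted fact).

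The main obstacle is the circularity built into this scheme: (i) (the Gram matrix stays well conditioned) is used to prove (vi) (linear convergence), which is used to bound how far the weights move, which is what is needed to re-establish (i)--(v); this is resolved by the maximality/continuity device and by choosing $m$ large enough to leave a constant-factor of slack in every bound. The feature specific to \emph{implicit} networks that demands extra care is that the equilibrium map $\vz_i(\cdot)$ and its gradient (via the implicit function theorem, Lemma~\ref{lemma:gradient}) exist and admit the uniform estimates above only while $\tilde\gamma\norm{\mA(t)}<1$, so the well-posedness bound $\norm{\mA(t)}\le2c\sqrt m$ must be propagated \emph{inside} the bootstrap rather than assumed as a standing fact, and one must check at every time that $(\mI_m-\tilde\gamma\mD_i\mA(t))^{-1}$ stays uniformly bounded (from Lemma~\ref{lemma:gradient}(ii)) so the gradient formulas remain legitimate. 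A minor technical point to dispatch is the nondifferentiability of ReLU at $0$, which can be handled by noting it affects only a measure-zero set of times (or by a mild genericity assumption on the pre-activations).
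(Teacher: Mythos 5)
Your proposal is correct and follows essentially the same route as the paper's proof: a continuous-time induction/bootstrap in which (i) yields (vi) via the Gram-matrix ODE, (vi) is integrated against the gradient bounds from Lemma~\ref{lemma:gradient} to control the parameter displacements, and the over-parameterization $m=\Omega\left(\frac{c^2 n\norm{\mX}^2}{\lambda_0^3}\norm{\hat{\vy}(0)-\vy}^2\right)$ keeps $\norm{\mW(t)-\mW(0)}\le R$ and $\norm{\mA(t)}\le 2c\sqrt{m}$ so that Lemmas~\ref{lemma:equilibrim point} and \ref{lemma:G_t} close the loop (your maximal-time formulation is just a more careful phrasing of the paper's ``induction on $t$''). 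The only slip is labeling the displacement bounds themselves as (ii)--(iii): as in the paper, one must still add $\norm{\vu(0)}=\norm{\vv(0)}=\sqrt{m}$ by the triangle inequality and absorb that $\sqrt{m}$ term into the stated constants, which is where the remaining factor of $2$ in the bounds of (ii)--(iii) comes from.
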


By using simple union bounds in Lemma~\ref{lemma:operator norm} and
\ref{lemma:convergence}, we immediately obtain the global convergence result for the gradent flow as follows.
\begin{thm}[Convergence Rate of Gradient Flow]\label{thm:convergence}
	Suppose that Assumptions~\ref{assume:initial} and \ref{assume:data} hold. If we set the number of parameter $m=\Omega\left(\frac{n^2}{\lambda_0^2}\log\left(\frac{n}{\delta}\right)\right)$ and choose $0 < \gamma\leq \min\{\frac{1}{2},\frac{1}{4c} \}$, then with probability at least $1-\delta$ over the initialization, we have
	\begin{align*}
		\norm{\hat{\vy}(t)-\vy}^2\leq \exp\{-\lambda_0t\}\norm{\hat{\vy}(0)-\vy}^2,
		\quad\forall t\geq0.
	\end{align*} 
\end{thm}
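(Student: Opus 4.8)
The plan is to combine the deterministic convergence statement of Lemma~\ref{lemma:convergence} with the high-probability bounds on the random initialization guaranteed by Lemma~\ref{lemma:operator norm} and Lemma~\ref{lemma:G_0 lambda}. The key observation is that Lemma~\ref{lemma:convergence} is a purely deterministic statement: it asserts that \emph{if} the initialization satisfies the four conditions $\norm{\vu(0)}=\sqrt{m}$, $\norm{\vv(0)}=\sqrt{m}$, $\norm{\mW(0)}\leq c\sqrt{m}$, $\norm{\mA(0)}\leq c\sqrt{m}$, and $\lambda_{\min}\{\mG(0)\}\geq \tfrac34\lambda_0$, then item (vi) gives exactly the claimed geometric decay $\norm{\hat{\vy}(t)-\vy}^2\leq \exp\{-\lambda_0 t\}\norm{\hat{\vy}(0)-\vy}^2$. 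So the only thing left to do in the theorem is to verify that under Assumptions~\ref{assume:initial} and \ref{assume:data}, all of these events hold simultaneously with probability at least $1-\delta$ once $m$ is large enough.

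First I would note that $\norm{\vu(0)}=\norm{\vv(0)}=\sqrt{m}$ holds with probability one, since each coordinate of $\vu(0)$ and $\vv(0)$ is $\pm1$ under the Rademacher initialization, so these Euclidean norms are deterministically $\sqrt{m}$; no probability budget is spent here. Next, I would apply Lemma~\ref{lemma:operator norm} (the sub-Gaussian operator-norm bound from Vershynin) twice, once to $\mA(0)\in\reals^{m\times m}$ and once to $\mW(0)\in\reals^{m\times d}$ with $d\leq m$: choosing $t=\Theta(\sqrt m)$ and absorbing constants into $c$, each event $\{\norm{\mA(0)}\leq c\sqrt m\}$ and $\{\norm{\mW(0)}\leq c\sqrt m\}$ fails with probability at most $2e^{-\Omega(m)}$, which is $\leq \delta/3$ for $m$ large. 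Then I would invoke Lemma~\ref{lemma:G_0 lambda}: since Assumption~\ref{assume:data} supplies $\norm{\vx_i}=1$ and $\vx_i\not\parallel\vx_j$, Lemma~\ref{lemma:G infinity} gives $\lambda_0=\lambda_{\min}\{\mG^\infty\}>0$, and then for $m=\Omega\!\left(\tfrac{n^2}{\lambda_0^2}\log(n/\delta)\right)$ we get $\lambda_{\min}\{\mG(0)\}\geq \tfrac34\lambda_0$ with probability at least $1-\delta/3$. A union bound over these three failure events shows all hypotheses of Lemma~\ref{lemma:convergence} hold together with probability at least $1-\delta$, and since the sample-complexity requirement $m=\Omega\!\left(\tfrac{c^2 n\norm{\mX}^2}{\lambda_0^3}\norm{\hat{\vy}(0)-\vy}^2\right)$ from that lemma is dominated by $m=\Omega\!\left(\tfrac{n^2}{\lambda_0^2}\log(n/\delta)\right)$ (up to the logarithmic and problem-dependent factors hidden in $\tilde\Omega$), the stated width suffices. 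On this event, item (vi) of Lemma~\ref{lemma:convergence} is precisely the conclusion of the theorem, and the chosen $\gamma\leq\min\{\tfrac12,\tfrac1{4c}\}$ matches its hypothesis.

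The only subtle point — and the step I would treat most carefully — is the bookkeeping between the two different lower bounds on $m$. The theorem quotes $m=\Omega\!\left(\tfrac{n^2}{\lambda_0^2}\log(n/\delta)\right)$, whereas Lemma~\ref{lemma:convergence} also needs $m=\Omega\!\left(\tfrac{c^2 n\norm{\mX}^2}{\lambda_0^3}\norm{\hat{\vy}(0)-\vy}^2\right)$; one must argue that the latter is subsumed, which relies on $\norm{\mX}^2\leq n$ (each row has unit norm) and on the fact that $\norm{\hat{\vy}(0)-\vy}^2$ is itself $\tilde O(\mathrm{poly}(n))$ with high probability under the random initialization, so both requirements are captured by a single polynomial-in-$n$, polylog-in-$1/\delta$ width. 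Beyond that, the proof is essentially a one-line union bound: I would state the three high-probability events, sum their failure probabilities, and then read off conclusion (vi). No new analysis is needed here because all the heavy lifting — the contraction argument for well-posedness, the Gram-matrix dynamics, and the induction establishing items (i)--(vi) — has already been done in the lemmas this theorem cites.
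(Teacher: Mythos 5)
Your proposal is correct and follows essentially the same route as the paper, whose entire proof is the one-line remark that the theorem follows from Lemma~\ref{lemma:operator norm} and Lemma~\ref{lemma:convergence} via a union bound over the initialization events. If anything, you are more careful than the paper: the reconciliation of the theorem's width requirement $m=\Omega(n^2\lambda_0^{-2}\log(n/\delta))$ with the requirement $m=\Omega(c^2 n\norm{\mX}^2\lambda_0^{-3}\norm{\hat{\vy}(0)-\vy}^2)$ from Lemma~\ref{lemma:convergence} (via $\norm{\mX}^2\leq n$ and a high-probability bound on $\norm{\hat{\vy}(0)-\vy}^2$) is a genuine bookkeeping step that the paper silently omits.
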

This theorem establishes the global convergence of the gradient
flow. Despite the nonconvexity of the objective function $L(\vtheta)$,
Theorem~\ref{thm:convergence} shows that if $m$ is sufficiently large,
then the objective value is consistently decreasing to zero
at a geometric rate. In particular, Theorem~\ref{thm:convergence}
requires $m=\tilde{\Omega}(n^2)$, which is similar or even better
than recent results for the neural network with \textit{finite} layers
\citep{nguyen2020global,oymak2020toward,allen2019convergence,zou2019improved,du2019gradient}.
In particular, previous results showed that $m$ is a polynomial of
the number of training sample $n$ and the number of layers $h$,
\ie, $m=\Omega(n^{\alpha} h^{\beta})$ with $\alpha\geq 2$ and
$\beta\geq 12$ \cite[Table~1]{nguyen2020global}. These results do
not apply in our case since we have {\em infinitely} many layers. By
taking advantage of the nonlinear feature mapping function, 
we establish the global convergence for the gradient flow
with $m$ independent of depth $h$.

\subsection{Discrete time analysis}\label{sec:discrete time} In
this section, we show that the randomly initialized gradient descent method with
a fixed step-size converges to a global minimum at a linear rate. With similar argument used in the analysis of
gradient flow, we can show the (operator) norms of the training
parameters are upper bounded by some constants. It is worth
noting that, unlike the analysis of gradient flow, we do not have
an explicit formula for the dynamics of prediction
$\hat{\vy}(t)$ in the discrete time analysis. Instead, we have to
show the difference between the equilibrium points $\mZ(k)$ in
two consecutive iterations are bounded. Based on this, we can
further bound the changes in the predictions $\hat{\vy}(k)$
between two consecutive iterations. Another challenge is to show
the objective value consistently decreases over iterations. A
general strategy is to show the objective function is
(semi-)smooth with respect to parameter $\vtheta$, and apply the
descent lemma to the objective function
\citep{nguyen2020global,allen2019convergence,zou2019improved}. In
this section, we take advantage of the nonlinear feature mapping
function 
in \Eqref{eq:prediction}.
Consequently, we are able to obtain a Polyak-Łojasiewicz-like
condition \citep{karimi2016linear,nguyen2021proof}, which allows
us to provide a much simpler proof.

The following lemma establishes the global convergence for the
gradient descent method with a fixed step-size when the operator
norms of $\mA(0)$ and $\mW(0)$ are bounded and
$\lambda_{\min}(\mG(0)) > 0$. The proof is proved in Appendix~\ref{app:gradient descent}.
\begin{fact}[Gradient Descent Convergence
	Rate]\label{lemma:discrete time} Suppose
	$\norm{\vu(0)}=\sqrt{m}$, $\norm{\vv(0)}=\sqrt{m}$,
	$\norm{\mW(0)}\leq c\sqrt{m}$, $\norm{\mA(0)}\leq c\sqrt{m}$,
	and $\lambda_{\min}\{\mG(0)\}\geq \frac{3}{4}\lambda_0>0$. If
	$0 < \gamma\leq \min\{\frac{1}{2},\frac{1}{4c} \}$,
	$m=\Omega\left(\frac{c^2n\norm{\mX}^2}{\lambda_0^3}\norm{\hat{\vy}(0)
	- \vy}^2\right) $, and stepsize
	$\alpha=\bigo{\lambda_0/n^2}$, then for any $k\geq 0$, we
	have	
	\begin{enumerate}[label=(\roman{*}), leftmargin=*]
		\item $\lambda_{\min}(\mG(k))\geq \frac{\lambda_0}{2}$,
		\item $\norm{\vu(k)}\leq \frac{32c\sqrt{n}}{\lambda_0}\norm{\hat{\vy}(0)-\vy}$,
		\item $\norm{\vv(k)}\leq \frac{16c\sqrt{n}}{\lambda_0}\norm{\hat{\vy}(0)-\vy}$,
		\item $\norm{\mW(k)}\leq 2c\sqrt{m}$,
		\item $\norm{\mA(k)}\leq 2c\sqrt{m}$,
		\item $\norm{\hat{\vy}(k) - \vy}^2\leq (1-\alpha \lambda_0/2)^k\norm{\hat{\vy}(0) - \vy}^2$.
	\end{enumerate}
\end{fact}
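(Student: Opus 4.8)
The plan is to prove the six assertions (i)--(vi) simultaneously by induction on the iteration index $k$. At $k=0$ they hold by hypothesis: $\lambda_{\min}(\mG(0))\ge\tfrac34\lambda_0\ge\tfrac{\lambda_0}{2}$; $\norm{\mW(0)},\norm{\mA(0)}\le c\sqrt m\le 2c\sqrt m$; (vi) is trivial; and (ii)--(iii) follow from $\norm{\vu(0)}=\norm{\vv(0)}=\sqrt m$. For the inductive step I assume (i)--(vi) at every iteration $j\le k$ and deduce them at $k+1$. The two groups are coupled: the geometric decay in (vi) for $j\le k$ controls how far the parameters have travelled from initialization, which produces (i)--(v) at $k+1$ through Lemmas~\ref{lemma:equilibrim point} and \ref{lemma:G_t}; the norm bounds (i)--(v) at step $k$ in turn control the one-step loss decrease that gives (vi) at $k+1$.

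\emph{Displacement bounds, (i)--(v).} Since $\vtheta(k+1)-\vtheta(0)=-\alpha\sum_{j=0}^{k}\nabla L(j)$, I bound each gradient block from Lemma~\ref{lemma:gradient} and sum a geometric series. Using $\norm{\mD_i},\norm{\mE_i}\le 1$, the Neumann-series estimate $\norm{(\mI_m-\tilde\gamma\mD_i\mA(j))^{-1}}\le (1-2c\gamma)^{-1}\le 2$ (valid because $\norm{\tilde\gamma\mA(j)}\le 2c\gamma\le\tfrac12$ by (v) and the choice of $\gamma$), the a priori bound $\norm{\vz_i(j)}\le 2\norm{\vphi_i(j)}=O(c)$ from Lemma~\ref{lemma:equilibrim point} (which applies since $\norm{\mW(j)},\norm{\mA(j)}\le 2c\sqrt m$), and Cauchy--Schwarz $\sum_i|\hat y_i-y_i|\le\sqrt n\,\norm{\hat\vy(j)-\vy}$, every block is at most a fixed polynomial in $n,c,1/\lambda_0,1/\sqrt m$ times $\norm{\hat\vy(j)-\vy}$. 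By (vi), $\alpha\sum_{j=0}^{k}\norm{\hat\vy(j)-\vy}\le\tfrac{4}{\lambda_0}\norm{\hat\vy(0)-\vy}$, so $\norm{\vu(k+1)-\vu(0)}$ and $\norm{\vv(k+1)-\vv(0)}$ are $O(\tfrac{c\sqrt n}{\lambda_0}\norm{\hat\vy(0)-\vy})$, giving (ii)--(iii), while $\norm{\mW(k+1)-\mW(0)}$ and $\norm{\mA(k+1)-\mA(0)}$ are $O(\tfrac{cn}{\sqrt m\,\lambda_0^2}\norm{\hat\vy(0)-\vy}^2)$; the hypothesis on $m$ makes the latter at most $\min\{R,c\sqrt m\}$ with $R$ the radius of Lemma~\ref{lemma:G_t}. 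Hence $\norm{\mW(k+1)},\norm{\mA(k+1)}\le 2c\sqrt m$ (iv)--(v), Lemma~\ref{lemma:G_t} gives $\lambda_{\min}(\mG(k+1))\ge\tfrac{\lambda_0}{2}$ (i), and (v) together with Lemma~\ref{lemma:equilibrim point} guarantees the equilibria $\vz_i(k+1)$ exist, so all quantities at step $k+1$ are well-defined.

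\emph{The contraction (vi).} It suffices to show $\norm{\hat\vy(k+1)-\vy}^2\le(1-\tfrac{\alpha\lambda_0}{2})\norm{\hat\vy(k)-\vy}^2$, since composing with the inductive hypothesis gives (vi) at $k+1$. Writing $\Delta\triangleq\hat\vy(k+1)-\hat\vy(k)$, we have $\norm{\hat\vy(k+1)-\vy}^2=\norm{\hat\vy(k)-\vy}^2+2(\hat\vy(k)-\vy)^T\Delta+\norm{\Delta}^2$. I would Taylor-expand $\hat y_i(k+1)-\hat y_i(k)$ in the parameter increments $\Delta\vu,\Delta\vv,\Delta\mA,\Delta\mW$, linearizing the equilibrium map by the implicit function theorem to handle the induced change $\Delta\vz_i$. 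Substituting the update $\Delta\vtheta=-\alpha\nabla L(k)$, the first-order part of $\Delta$ assembles into $-\alpha\mH(k)(\hat\vy(k)-\vy)$ with exactly the Gram matrix $\mH(k)$ of Lemma~\ref{lemma:dynamics of yhat} --- the discrete step reproduces the $JJ^T$ structure of the flow, the $\gamma^2\mM$ and $\mQ$ blocks arising from $\Delta\mA$ and $\Delta\mW$. Since $\mH(k)\succeq\mPhi(k)\mPhi(k)^T=\mG(k)\succeq\tfrac{\lambda_0}{2}\mI_n$ by (i), the middle term is $\le-\alpha\tfrac{\lambda_0}{2}\norm{\hat\vy(k)-\vy}^2$. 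The remaining pieces --- second-order Taylor remainders, the errors from the ReLU masks $\mD_i,\mE_i$ changing between steps $k$ and $k+1$, and $\norm{\Delta}^2$ --- are each $O(\alpha^2)$ times a fixed polynomial in $n,c,1/\lambda_0,\norm{\mX}$ times $\norm{\hat\vy(k)-\vy}^2$, once one uses (ii)--(v) at step $k$, the inequality $\norm{\hat\vy(k)-\vy}\le\norm{\hat\vy(0)-\vy}$, and the lower bound on $m$; the point is that each increment is itself proportional to $\nabla L(k)$, hence to $\hat\vy(k)-\vy$, so these errors carry \emph{two} powers of the residual. Choosing the step size $\alpha=O(\lambda_0/n^2)$ bounds their sum by $\tfrac{\alpha\lambda_0}{2}\norm{\hat\vy(k)-\vy}^2$, which closes the contraction and the induction.

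I expect the main obstacle to be the bookkeeping in the contraction step: one must verify that the first-order term genuinely reassembles $\mH(k)$ --- in particular that $\vu(k)^T\Delta\vz_i$, once expanded through $[\mI_m-\tilde\gamma\mD_i\mA]^{-1}$, produces precisely the $\gamma^2\mM$ block --- and that no error term carries only one power of $\hat\vy(k)-\vy$ (a stray single power would break the contraction for small residuals). A cleaner alternative for (vi), closer to the remark preceding the lemma, is to note that $\norm{\nabla_{\vv}L}^2=(\hat\vy-\vy)^T\mG(\hat\vy-\vy)\ge\tfrac{\lambda_0}{2}\norm{\hat\vy-\vy}^2$ is a Polyak--\L{}ojasiewicz-type inequality valid on the region $\{\,\norm{\mW-\mW(0)}\le R,\ \norm{\mA-\mA(0)}\le c\sqrt m,\ \norm{\vu},\norm{\vv}=O(\tfrac{c\sqrt n}{\lambda_0}\norm{\hat\vy(0)-\vy})\,\}$ in which the trajectory stays, and to combine it with a local semi-smoothness estimate for $L$ on that region to obtain $L(k+1)\le(1-\tfrac{\alpha\lambda_0}{2})L(k)$ directly via the descent lemma.
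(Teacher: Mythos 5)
Your induction skeleton and the treatment of (i)--(v) match the paper's proof: bound $\norm{\vphi_i}$, $\norm{\vz_i}$ and each gradient block using the inductive norm bounds, sum the geometric series $\alpha\sum_s\beta^s\le 4/\lambda_0$ to control the displacement from initialization, and invoke Lemma~\ref{lemma:G_t} for (i). The divergence is in (vi). Your \emph{primary} route --- Taylor-expanding the one-step update and verifying that the first-order part reassembles the full $\mH(k)$ of Lemma~\ref{lemma:dynamics of yhat}, including the $\gamma^2\mM$ and $\mQ$ blocks --- is not what the paper does, and it is the harder path: because $\sigma$ is ReLU, the masks $\mD_i,\mE_i$ can flip between iterations, the map is not differentiable, and showing that the resulting linearization error carries two powers of the residual is precisely the delicate activation-pattern analysis that finite-depth papers (Du et al., Allen-Zhu et al.) have to do and that this paper deliberately avoids. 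The paper instead takes essentially your ``cleaner alternative'': it inserts the intermediate point $\vg=\mZ(k)\vu(k+1)+\mPhi(k)\vv(k+1)$, so that
\begin{align*}
\inn{\vg-\hat\vy(k)}{\hat\vy(k)-\vy}=-\alpha(\hat\vy(k)-\vy)^T\left[\mZ(k)\mZ(k)^T+\mPhi(k)\mPhi(k)^T\right](\hat\vy(k)-\vy)\le-\tfrac{\alpha\lambda_0}{2}\norm{\hat\vy(k)-\vy}^2
\end{align*}
follows \emph{exactly} (no linearization, no mask issues) from the $\vu,\vv$ updates alone with $\mZ(k),\mPhi(k)$ frozen, and everything involving the change of $\mZ$ and $\mPhi$ is dumped into error terms of size $O(\alpha^2)\cdot\mathrm{poly}(n,c,1/\lambda_0)\cdot\norm{\hat\vy(k)-\vy}^2$. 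What that buys is that only the easy block $\mPhi\mPhi^T=\mG(k)\succeq\tfrac{\lambda_0}{2}\mI$ is ever needed for positivity; what it costs is nothing, since $\mH(k)\succeq\mG(k)$ anyway. One ingredient you gloss over but will need even on the alternative route: a quantitative bound on $\norm{\mZ(k+1)-\mZ(k)}$, which the paper obtains not from the implicit function theorem but by comparing the two fixed-point iterations at parameters $k$ and $k{+}1$ and using the contraction factor $\gamma_0=1/2$ to pass to the limit. With that estimate in hand, your alternative argument closes the induction exactly as the paper's does; you should commit to it and drop the $\mH(k)$-reassembly route.
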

By using simple union bounds to combine Lemma~\ref{lemma:operator
norm} and \ref{lemma:discrete time}, we obtain the global
convergence result for the gradient descent. 

\begin{thm}[Convergence Rate of Gradient
	Descent]\label{thm:gradient descent} Suppose that 
	Assumption~\ref{assume:initial} and \ref{assume:data} hold.
	If we set
	$m=\Omega\left(\frac{n^2}{\lambda_0}\log\left(\frac{n}{\delta}\right)\right)$,
	$0 < \gamma\leq \min\{\frac{1}{2},\frac{1}{4c} \}$, and choose step-size
	$\alpha=\bigo{\lambda_0/n^2}$, then with probability at
	least $1-\delta$ over the initialization, we have 
	\begin{align*}
		\norm{\hat{\vy}(k)-\vy}^2\leq (1-\alpha \lambda_0/2)^k\norm{\hat{\vy}(0)-\vy}^2,
		\quad \forall k\geq 0.
	\end{align*} 
\end{thm}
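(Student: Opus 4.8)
The plan is to reduce Theorem~\ref{thm:gradient descent} to Lemma~\ref{lemma:discrete time}, which already supplies the per-iteration contraction (conclusion (vi)) once the random initialization meets the deterministic checklist $\norm{\vu(0)}=\norm{\vv(0)}=\sqrt m$, $\norm{\mA(0)}\le c\sqrt m$, $\norm{\mW(0)}\le c\sqrt m$, $\lambda_{\min}(\mG(0))\ge\tfrac34\lambda_0$, the step-size condition $\alpha=\bigo{\lambda_0/n^2}$, and the width condition $m=\Omega\!\big(c^2 n\norm{\mX}^2\lambda_0^{-3}\norm{\hat\vy(0)-\vy}^2\big)$. So the entire content of the theorem is to verify that, under Assumptions~\ref{assume:initial} and \ref{assume:data} with the stated choice of $m$, all of these hold simultaneously with probability at least $1-\delta$; the bound then follows verbatim from Lemma~\ref{lemma:discrete time}(vi).

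First I would dispose of the deterministic pieces. Since $\vu(0),\vv(0)$ have i.i.d.\ Rademacher entries, $\norm{\vu(0)}^2=\norm{\vv(0)}^2=m$ with probability one, and $\alpha=\bigo{\lambda_0/n^2}$ is simply our choice. Next, applying Lemma~\ref{lemma:operator norm} to the square Gaussian matrices $\mA(0),\mW(0)$ (sub-Gaussian norm $K=\bigo1$) with $t=\Theta(\sqrt m)$ yields $\norm{\mA(0)}\le c\sqrt m$ and $\norm{\mW(0)}\le c\sqrt m$ for a suitable absolute constant $c$, each failing with probability at most $2e^{-\Omega(m)}$; a union bound together with $m=\Omega(\log(1/\delta))$ makes their joint failure probability at most $\delta/3$. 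For the Gram spectrum, Lemma~\ref{lemma:G_0 lambda} gives $\lambda_{\min}(\mG(0))\ge\tfrac34\lambda_0$ with probability at least $1-\delta/3$ once $m=\Omega\!\big(n^2\lambda_0^{-2}\log(n/\delta)\big)$; the stated lower bound on $m$ is taken large enough to cover this.

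The one genuinely circular-looking ingredient is Lemma~\ref{lemma:discrete time}'s width requirement, since it involves the random quantity $\norm{\hat\vy(0)-\vy}$ that also appears on the right-hand side of the final bound. I would close this loop with an a priori high-probability estimate $\norm{\hat\vy(0)-\vy}^2=\bigo n$, obtained on the \emph{same} good initialization event: by Lemma~\ref{lemma:equilibrim point}, $\norm{\vz_i(0)}$ is a constant multiple of $\norm{\vphi_i(0)}$, and $\norm{\vphi_i(0)}^2=\frac1m\norm{\sigma(\mW(0)\vx_i)}^2$ concentrates around a constant, so each $\hat y_i(0)=\vu(0)^T\vz_i(0)+\vv(0)^T\vphi_i(0)=\bigo1$ and hence $\norm{\hat\vy(0)-\vy}^2=\bigo{n+\norm{\vy}^2}=\bigo n$ (this event's failure probability is absorbed into $\delta/3$). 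Substituting this bound and $\norm{\mX}^2\le\norm{\mX}_F^2=n$ (Assumption~\ref{assume:data}) into Lemma~\ref{lemma:discrete time}'s width hypothesis turns it into a polynomial-in-$(n,1/\lambda_0)$ requirement that the stated $m$ is chosen to satisfy. A final union bound over the three $(\le\delta/3)$ failure events then gives, with probability at least $1-\delta$, that every hypothesis of Lemma~\ref{lemma:discrete time} holds, and conclusion (vi) of that lemma is exactly $\norm{\hat\vy(k)-\vy}^2\le(1-\alpha\lambda_0/2)^k\norm{\hat\vy(0)-\vy}^2$ for all $k\ge0$.

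The main obstacle here is not depth — Lemma~\ref{lemma:discrete time} internalizes the infinitely-many-layers and well-posedness issues via the arguments of Section~\ref{sec:forward} and the running control of $\norm{\mA(k)}$ — but the bookkeeping around $\norm{\hat\vy(0)-\vy}$: one must establish its $\bigo n$ bound \emph{before} invoking Lemma~\ref{lemma:discrete time}, on precisely the event where $\norm{\mW(0)}\le c\sqrt m$ and $\lambda_{\min}(\mG(0))\ge\tfrac34\lambda_0$ already hold, so that the union bound is honest and the stated width certifies the lemma's hypothesis. Everything else is the routine combination of the Vershynin operator-norm bound, the Gram-matrix concentration of Lemma~\ref{lemma:G_0 lambda}, and the deterministic induction already carried out in Lemma~\ref{lemma:discrete time}.
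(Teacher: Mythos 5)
Your proposal is correct and follows essentially the same route as the paper, which proves Theorem~\ref{thm:gradient descent} by a union bound combining Lemma~\ref{lemma:operator norm} (and Lemma~\ref{lemma:G_0 lambda}) with the deterministic induction of Lemma~\ref{lemma:discrete time}. In fact you are more careful than the paper on one point it silently skips --- converting the width hypothesis $m=\Omega\big(c^2 n\norm{\mX}^2\lambda_0^{-3}\norm{\hat\vy(0)-\vy}^2\big)$ into a deterministic condition via an a priori bound on $\norm{\hat\vy(0)-\vy}^2$ (note only that your per-coordinate claim $\hat y_i(0)=\bigo{1}$ needs the sub-Gaussian concentration of $\vu(0)^T\vz_i(0)$ against the independent Rademacher vector, not just the norm bounds, and properly yields $\tilde{\bigo{}}(n)$ rather than $\bigo{n}$).
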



\section{Experimental Results}
\begin{figure}[t]
	\centering
	\begin{subfigure}{.33\textwidth}
		\centering
		\includegraphics[width=\linewidth]{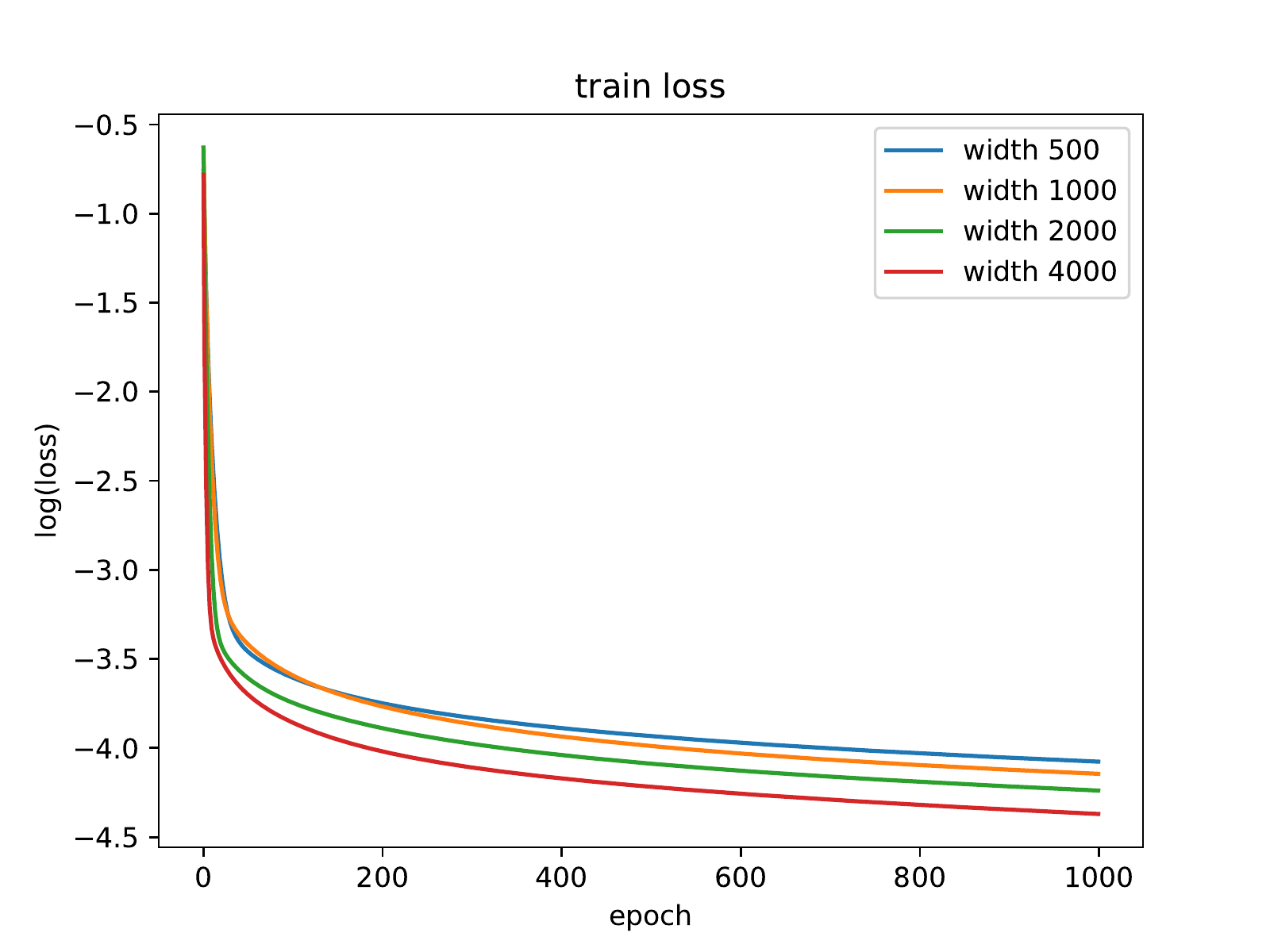}
		\caption{Training loss}
		\label{fig:fig1a}
	\end{subfigure}%
	\begin{subfigure}{.33\textwidth}
		\centering
		\includegraphics[width=\linewidth]{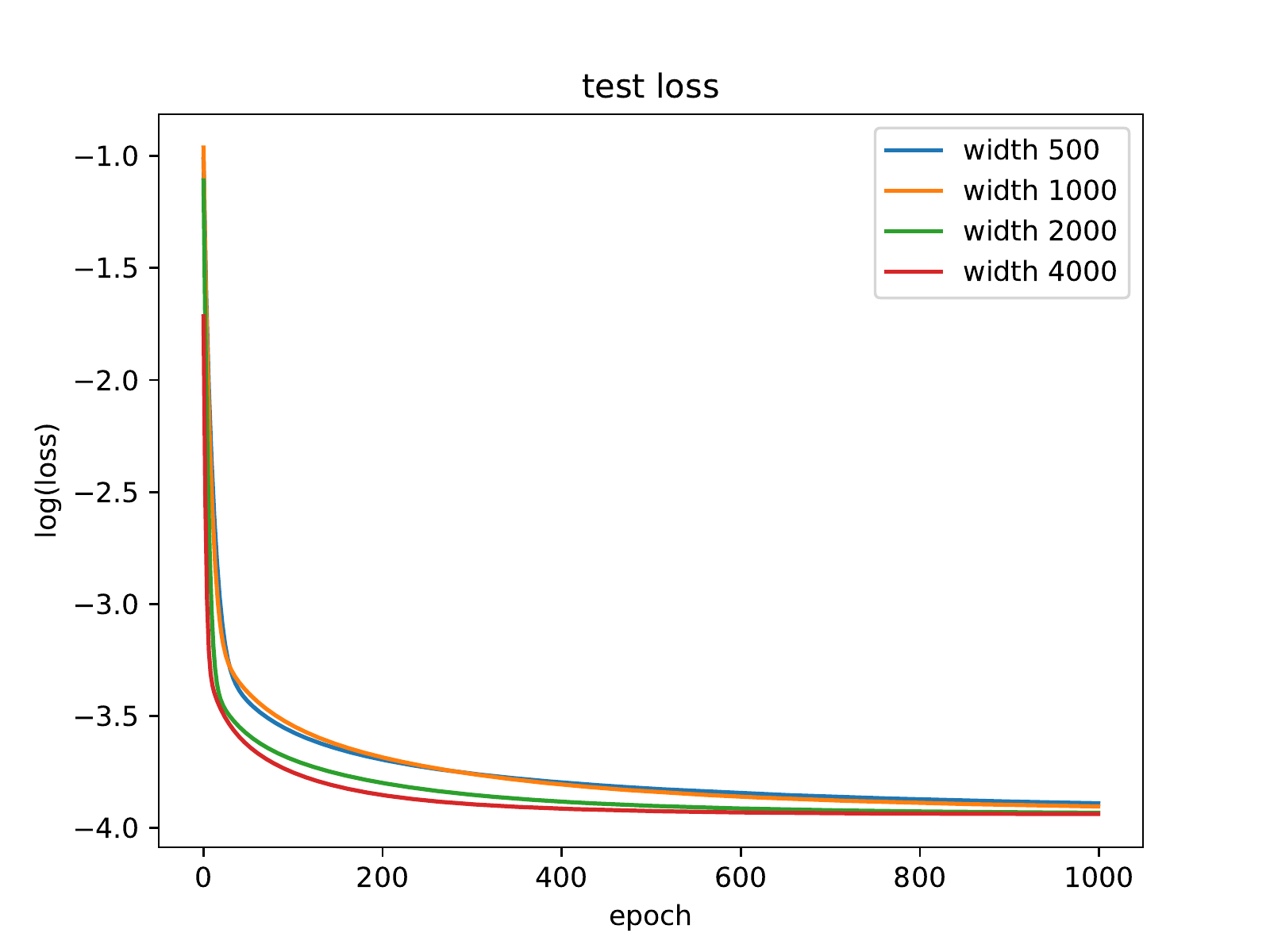}
		\caption{Test loss}
		\label{fig:fig1c}
	\end{subfigure}
	\begin{subfigure}{.33\textwidth}
		\centering
		\includegraphics[width=\linewidth]{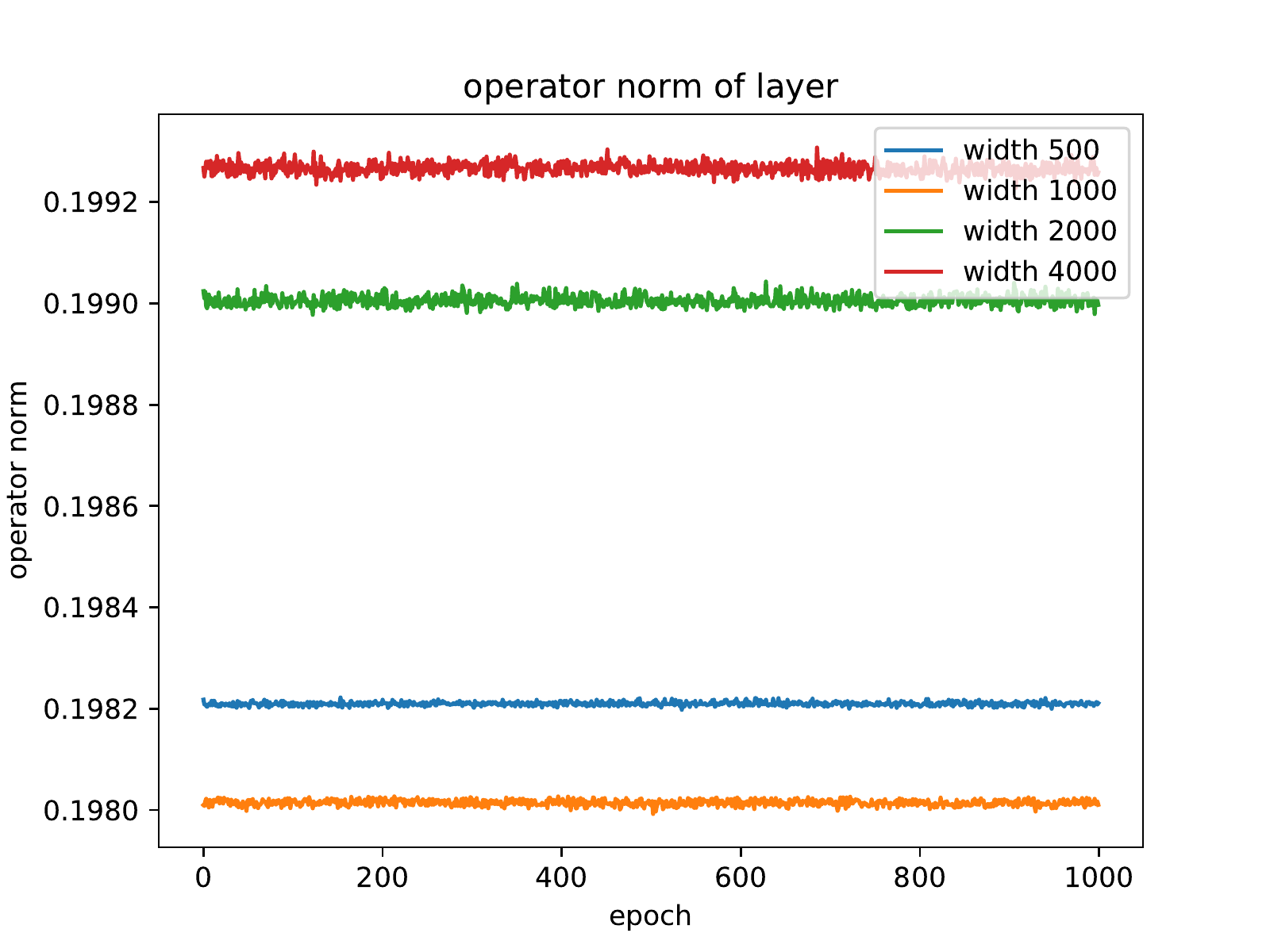}
		\caption{Operator norms}
		\label{fig:fig1b}
	\end{subfigure}
	\begin{subfigure}{.33\textwidth}
		\centering
		\includegraphics[width=\linewidth]{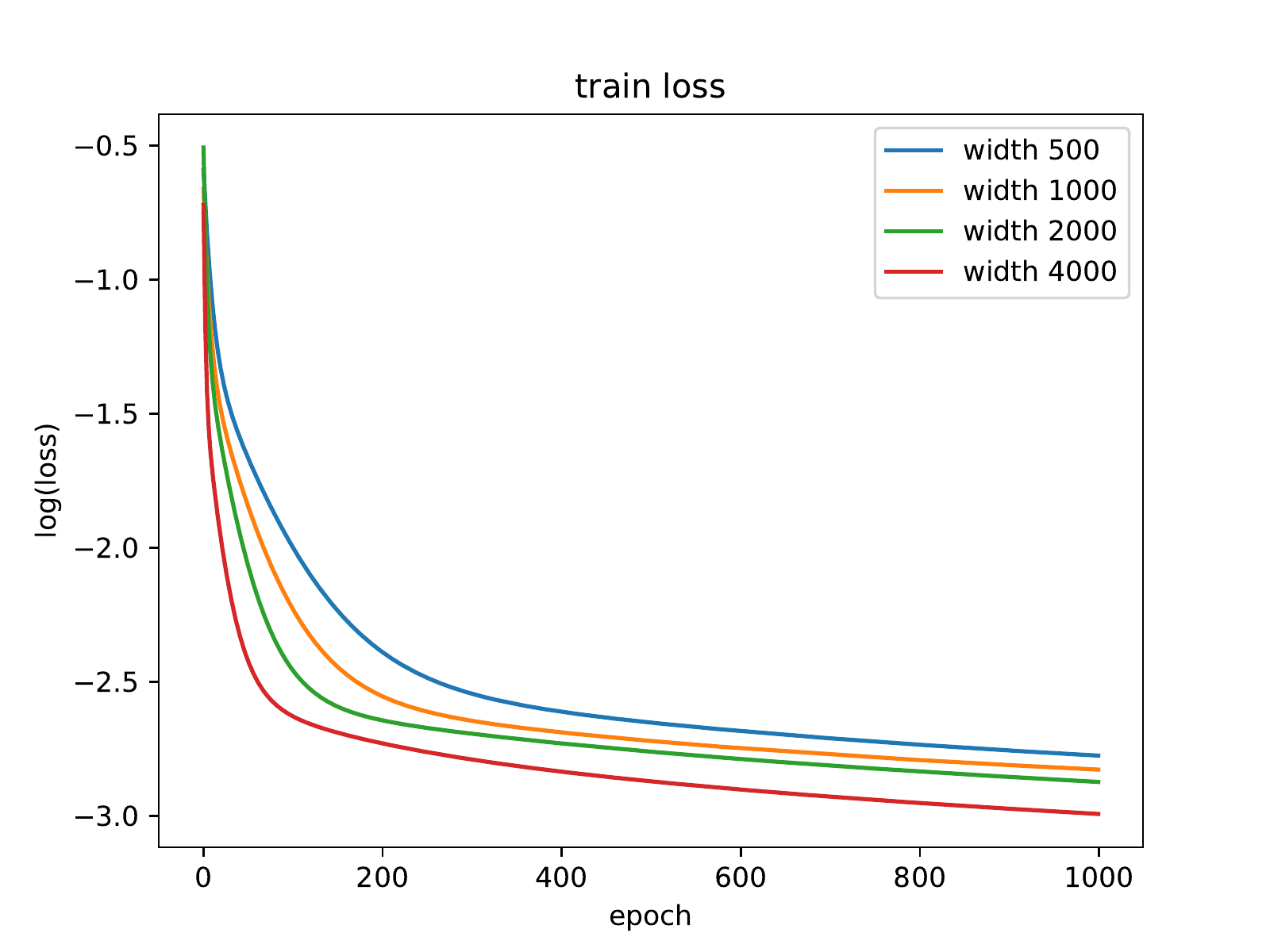}
		\caption{Training loss}
		\label{fig:fig2a}
	\end{subfigure}%
	\begin{subfigure}{.33\textwidth}
		\centering
		\includegraphics[width=\linewidth]{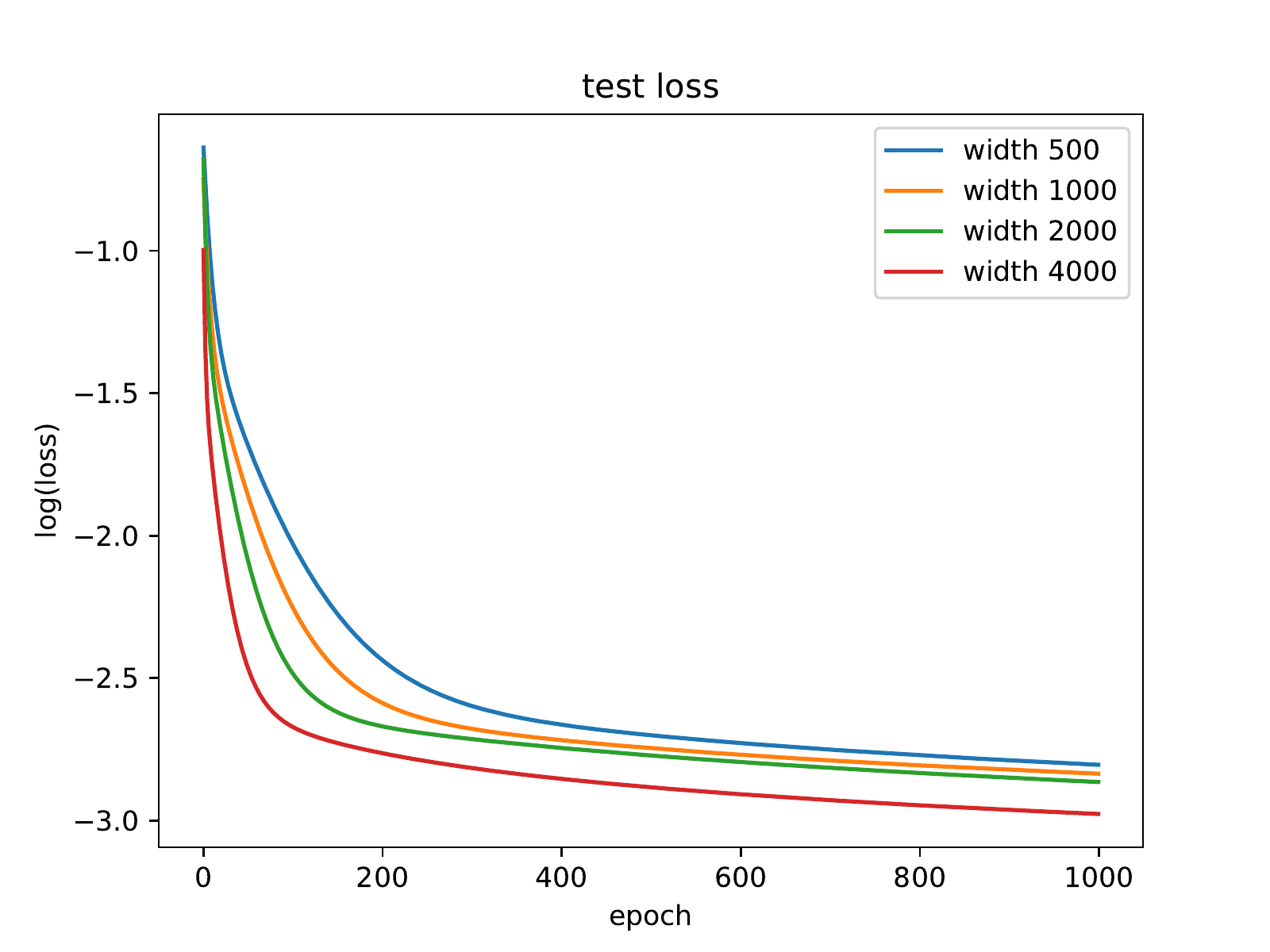}
		\caption{Test loss}
		\label{fig:fig2c}
	\end{subfigure}
	\begin{subfigure}{.33\textwidth}
		\centering
		\includegraphics[width=\linewidth]{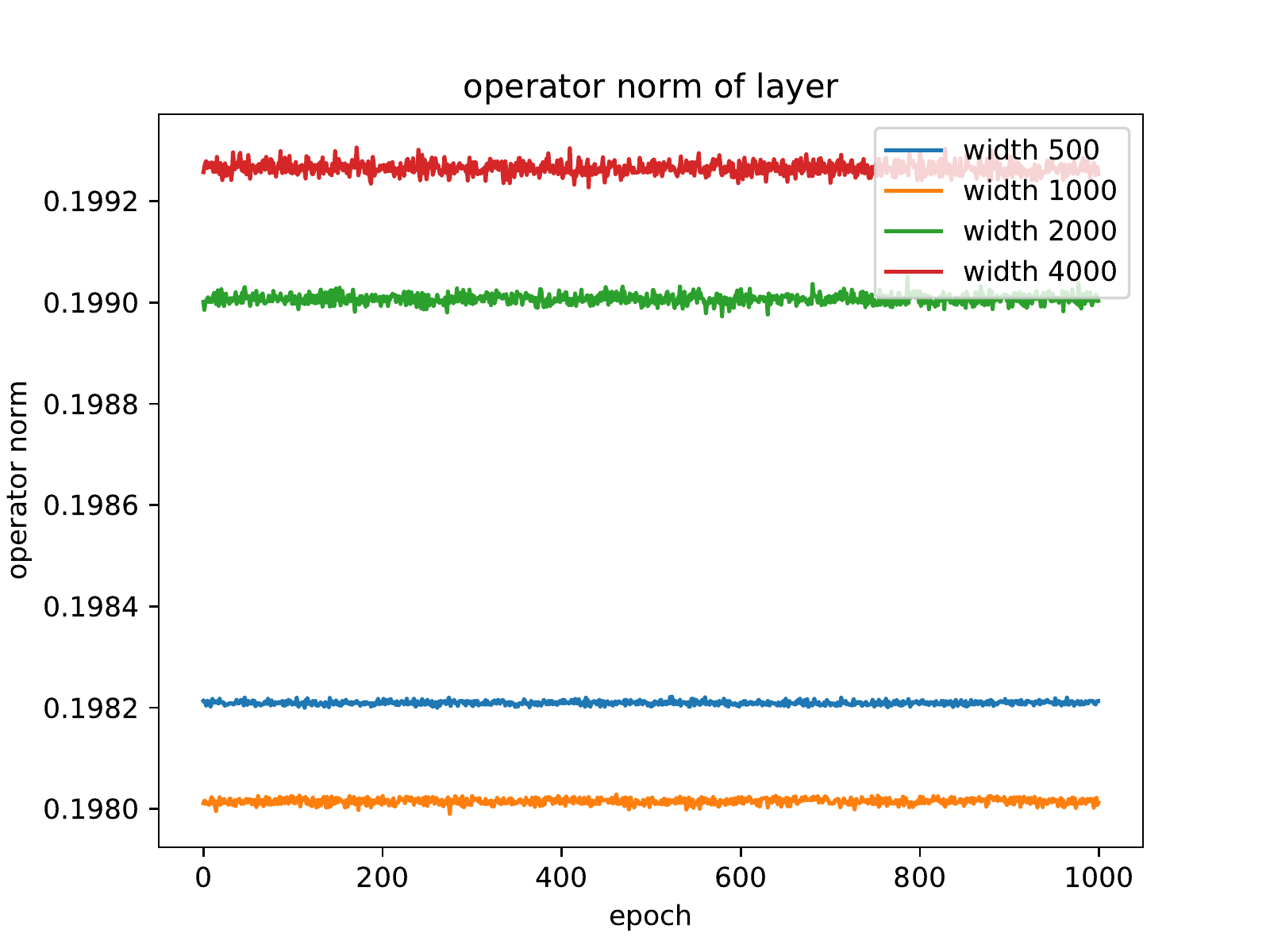}
		\caption{Operator norms}
		\label{fig:fig2b}
	\end{subfigure}
	\begin{subfigure}{.33\textwidth}
		\centering
		\includegraphics[width=\linewidth]{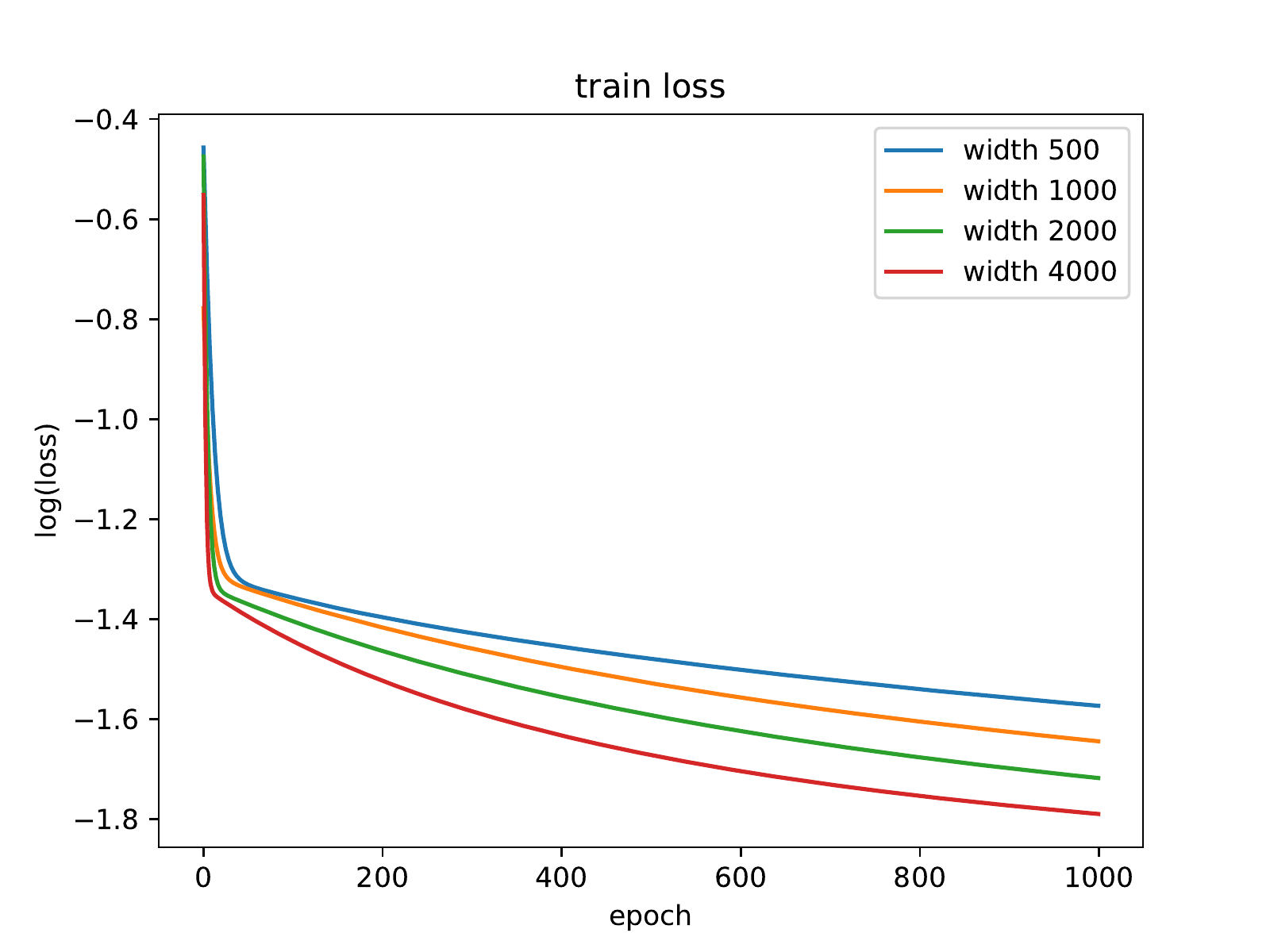}
		\caption{Training loss}
		\label{fig:fig1d}
	\end{subfigure}
	\begin{subfigure}{.32\textwidth}
		\centering
		\includegraphics[width=\linewidth]{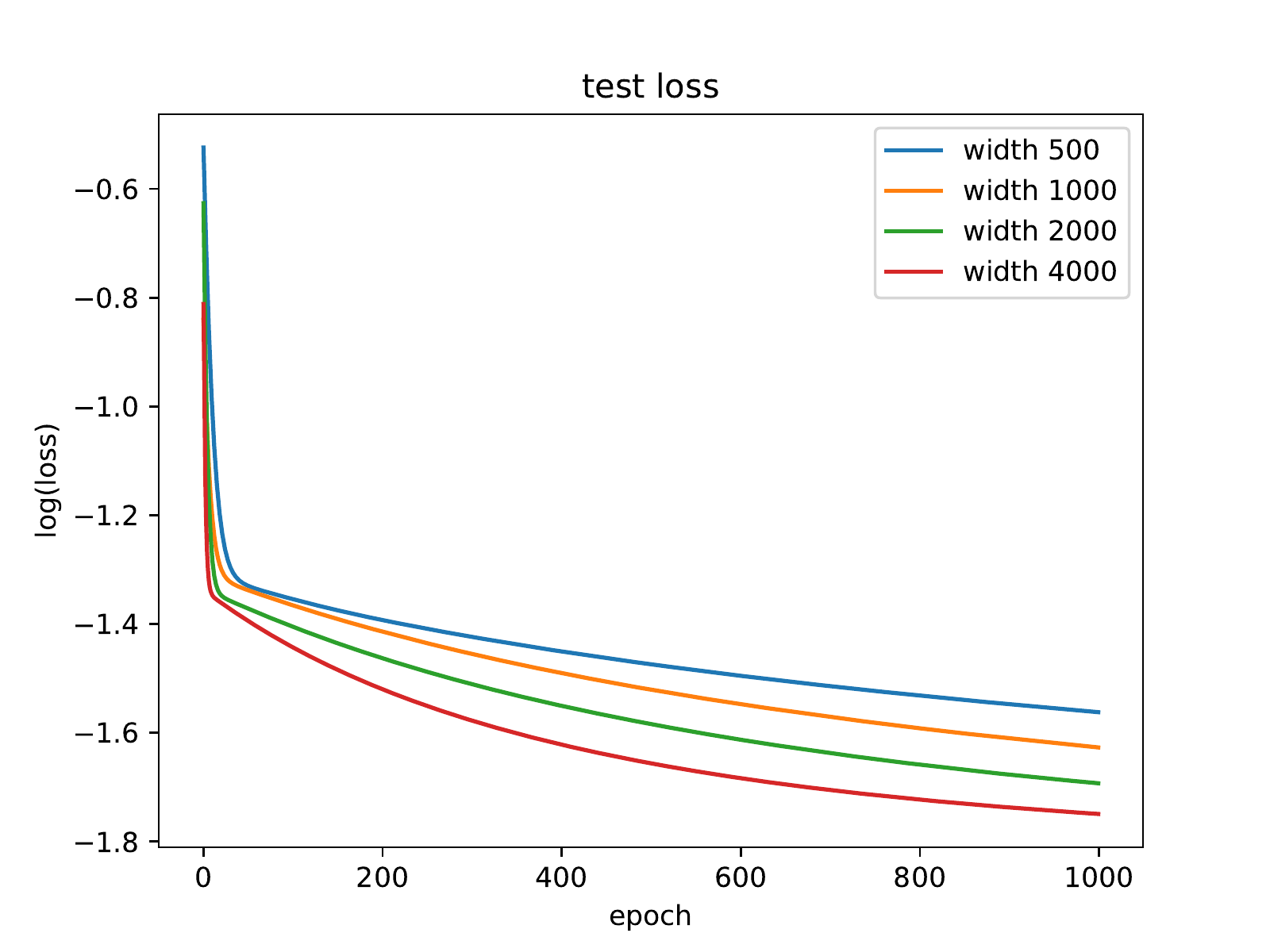}
		\caption{Test loss}
		\label{fig:fig1f}
	\end{subfigure}
	\begin{subfigure}{.33\textwidth}
		\centering
		\includegraphics[width=\linewidth]{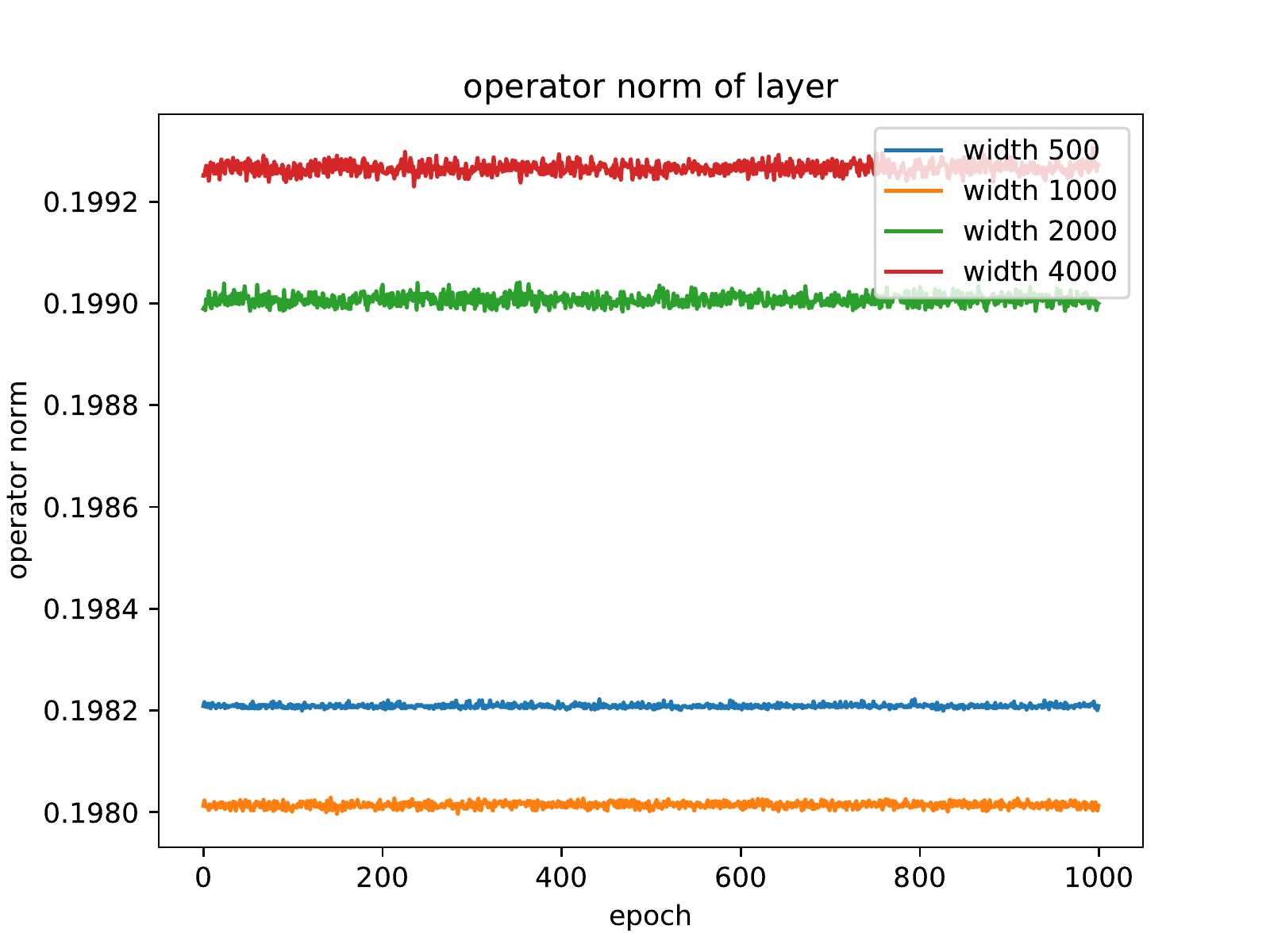}
		\caption{Operator norms}
		\label{fig:fig1e}
	\end{subfigure}
	\begin{subfigure}{.33\textwidth}
		\centering
		\includegraphics[width=\linewidth]{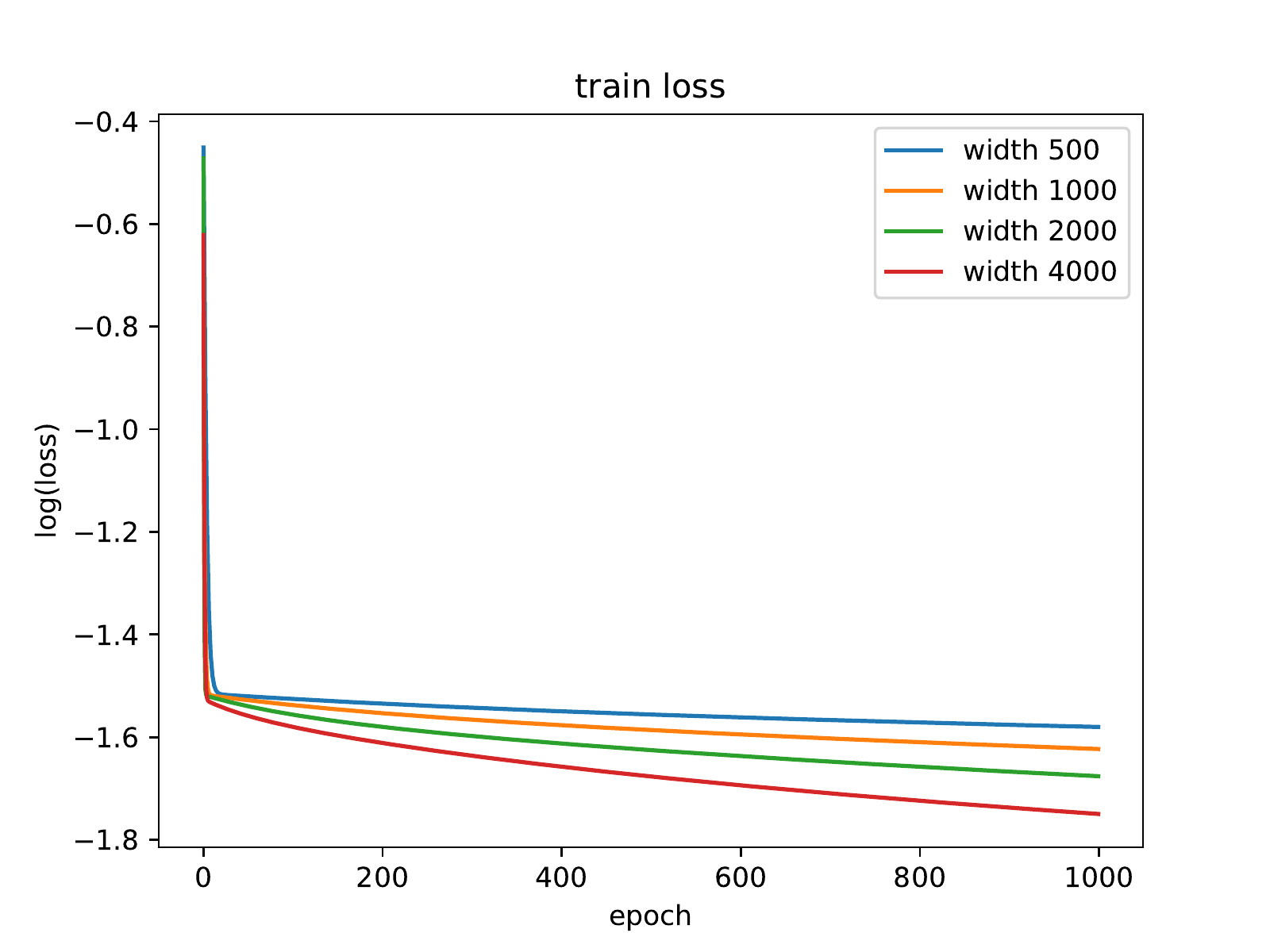}
		\caption{Training loss}
		\label{fig:fig1g}
	\end{subfigure}%
	\begin{subfigure}{.33\textwidth}
		\centering
		\includegraphics[width=\linewidth]{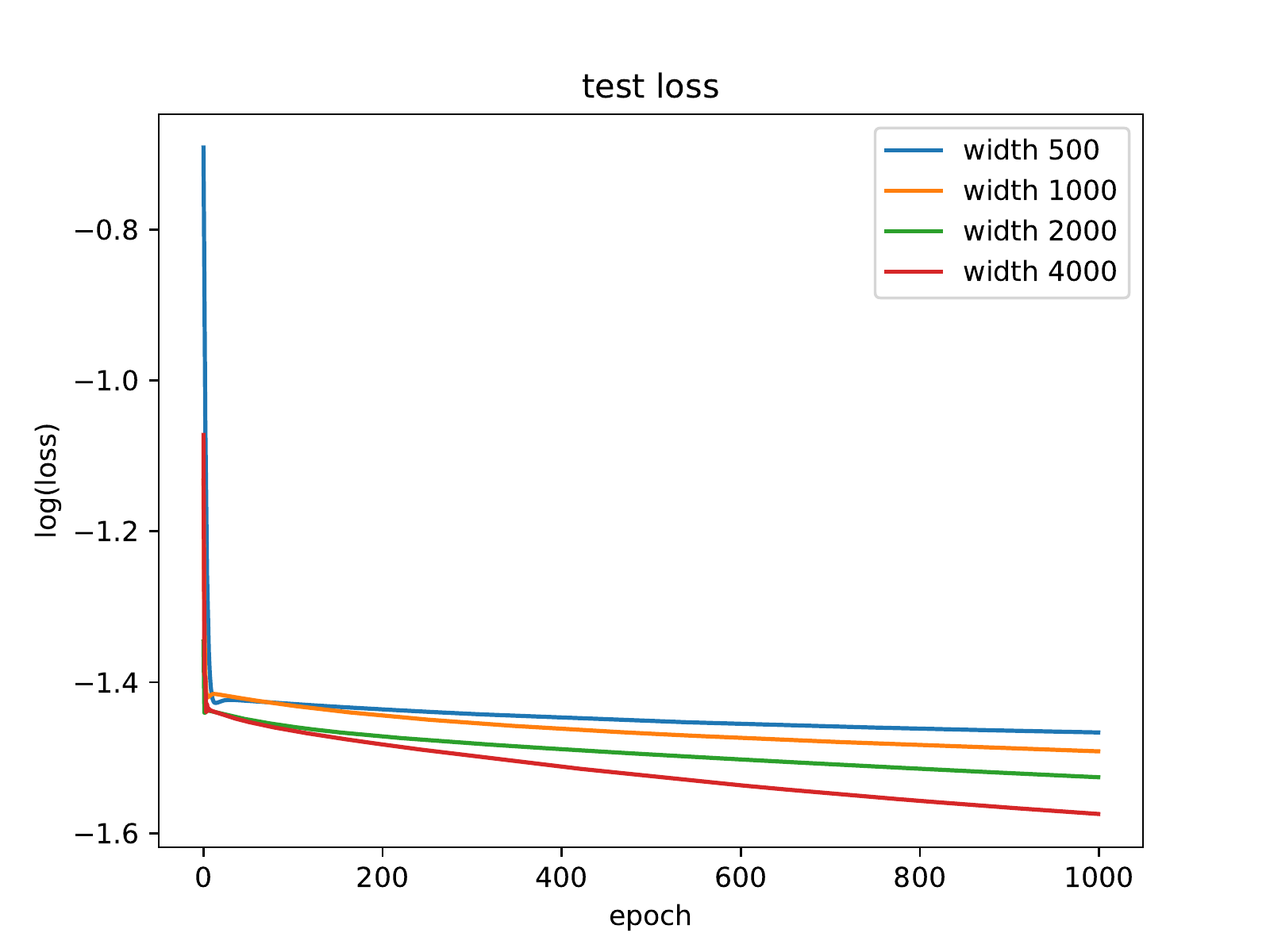}
		\caption{Test loss}
		\label{fig:fig1i}
	\end{subfigure}
	\begin{subfigure}{.33\textwidth}
		\centering
		\includegraphics[width=\linewidth]{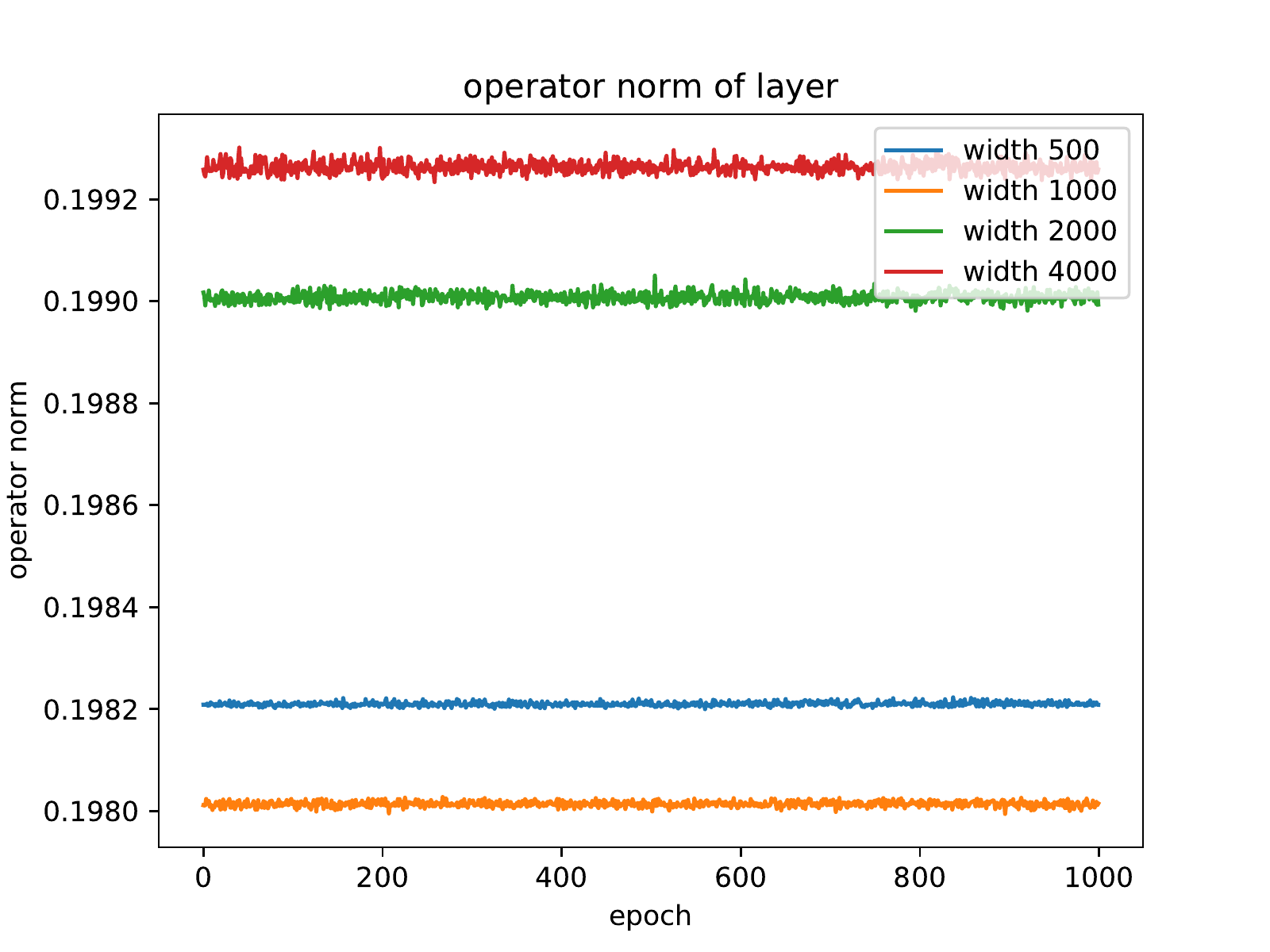}
		\caption{Operator norms}
		\label{fig:fig1h}
	\end{subfigure}
	\caption{Results on MNIST, FashionMNIST, CIFAR10,
	and SVHN. We evaluate the impact of the width $m$ on the
	training loss, test loss, and operator norm of the scaled
	matrix $(\gamma/\sqrt{m})\mA(k)$ on four real datasets.}
	\label{fig:fig1}
\end{figure}

In this section, we use real-world datasets MNST, FashionMNST,
CIFAR10, and SVHN to evaluate our theoretical findings. We
initialize the entries of parameters $\mA$, $\mW$, $\vu$, and
$\vv$ by standard Gaussian or symmetric Bernoulli distribution
independently as suggested in Assumption~\ref{assume:initial}.
For each dataset, we only use classes $0$ and $1$, and $500$
samples are randomly drawn from each class to generate the
training dataset with $n=1000$ samples. All data samples are
converted to gray scale and resized into a $28\times 28$ pixel
image. We also normalize each data to have unit norm. If two
parallel samples are observed, we add a random Gaussian noise
perturbation to one of them. Thus, Assumption~\ref{assume:data}
is also satisfied. We run $1000$ epochs of gradient descent with
a fixed step-size. We test three metrics with different width
$m$. We first test how the extent of over-parameterization
affects the convergence rates. Then, we test the relation between
the extent of over-parameterization and the operator norms
between matrix $\mA(k)$ and its initialization. Note that the
``operator norm'' in the plots denotes the operator norm of the
scaled matrix $(\gamma/\sqrt{m})\norm{\mA(k)}$. Third, we test
the extent of over-parameterization and the performance of the
trained neural network on the unseen test data. Similar to the
training dataset, We randomly select $500$ samples from each
class as the test dataset.

From Figure~\ref{fig:fig1}, the figures in the first column  show
that as $m$ becomes larger, we have better convergence rates. The
figures in the second column show that as $m$ becomes larger, the
neural networks achieve lower test loss. The figures in the third
column show that the operator norms are slightly larger for
larger $m$ but overall the operator norms are approximately equal
to its initialization. The bell curve from the classical
bias-variance trade-off does not appear. This opens the door to a
new research direction in implicit neural networks in the
analyses of generalization error and bias-variance trade-off.


\section{Related works}
Implicit models has been explored explored
by the deep learning community for decades. For example,
\cite{pineda1987generalization} and \cite{almeida1990learning}
studied implicit differentiation techniques for training
recurrent dynamics, also known as \textit{recurrent
back-propagation} \citep{liao2018reviving}. Recently, there has been renewed interested in the implicit models in the deep learning community \citep{el2019implicit,gould2019deep}. For example, \cite{bai2019deep} introduces an implicit neural network called \textit{deep equilibrium model} for the for the task of sequence modeling. By using implicit ODE solvers,
\cite{chen2018neural} proposed \textit{neural ordinary
differential equation} as an implicit residual network with
continuous-depth. Other instantiations of implicit modeling
include optimization layers
\citep{djolonga2017differentiable,amos2017optnet}, differentiable
physics engines \citep{de2018end,qiao2020scalable}, logical
structure learning \citep{wang2019satnet}, and continuous
generative models \citep{grathwohl2019ffjord}.

The theoretical study of training finite-layer neural
networks via over-parameterization has been an active research
area. \cite{jacot2018neural} showed the trajectory of the gradient
descent method can be characterized by a kernel called \textit{neural
tangent kernel} for smooth activation and infinitely width neural
networks. For a finite-width neural network with smooth or ReLU
activation, \cite{arora2019exact,du2019gradient,li2018learning}
showed that the dynamics of the neural network is governed by a Gram
matrix. Consequently,
\citet{zou2020gradient,du2019gradient,allen2019convergence,nguyen2020global,arora2019exact,zou2019improved,oymak2020toward}
showed that (stochastic) gradient descent can attain global
convergence for training a finite-layer neural network when the
width $m$ is a polynomial of the sample size $n$ and the depth
$h$. However, their results cannot be applied directly to
implicit neural networks since implicit neural networks have
infinite layers and the equilibrium equation may not be
well-posed. Our work establishes the well-posedness of the
equilibrium equation even if the width $m$ is only square of the
sample size $n$.


\section{Conclusion and Future Work}
In this paper, we provided a convergence theory for implicit
neural networks with ReLU activation in the over-parameterization
regime. We showed that the random initialized
gradient descent method with fixed step-size converges to a global
minimum of the loss function at a linear rate if the width $m=\tilde{\Omega}(n^2)$. 
In particular, by using a fixed
scalar $\gamma \in(0,1)$ to scale the random initialized weight
matrix $\mA$, we proved that the equilibrium equation is always well-posed
throughout the training. By analyzing the gradient flow, we
observe that the dynamics of the prediction vector is controlled by a
Gram matrix whose smallest eigenvalue is lower bounded by a
strictly positive constant as long as $m=\tilde{\Omega}(n^2)$. 
We envision several
potential future directions based on the observations made in the
experiments. First, we believe that our analysis can be generalized to
implicit neural networks with other scaling techniques and
initialization. Here, we use a scalar $\gamma/\sqrt{m}$ to ensure
the existence of the equilibrium point $\vz^*$ with random
initialization. With an appropriate normalization, the global
convergence for identity initialization can be obtained. Second,
we believe that the width $m$ not only improves the convergence
rates but also the generalization performance. In particular, our experimental results showed that
with a larger $m$ value, the test loss is reduced while the classical
bell curve of bias-variance trade-off is not observed.

\clearpage

\subsubsection*{Acknowledgments}

This work has been supported in part by National Science
Foundation grants III-2104797, DMS1812666, CAREER CNS-2110259, CNS-2112471, CNS-2102233, CCF-2110252, CNS-21-20448, CCF-19-34884 and a Google Faculty Research Award.

\bibliography{iclr2022_conference}
\bibliographystyle{iclr2022_conference}

\clearpage
\appendix


\section{Appendix}

\subsection{Proof of Lemma~\ref{lemma:equilibrim point}}\label{app:well-posed}
\begin{proof}
	Let $\gamma_0\in (0,1)$. Set $\gamma \triangleq\min\{\gamma_0,\gamma_0/c\}$. Denote $\tilde{\gamma} = \gamma/\sqrt{m}$. Then
	\begin{align*}
		\Norm{\vz^{\ell+1} - \vz^{\ell}} 
		=& \Norm{\sigma\left( \tilde{\gamma}\mA\vz^{\ell}  +\vphi\right) 
			- \sigma\left(\tilde{\gamma}\mA\vz^{\ell-1} +\vphi \right)}\\
		\leq&\tilde{\gamma}\Norm{\mA\vz^{\ell}  - \mA\vz^{\ell-1} },\quad\text{$\sigma$ is $1$-Lipschitz continuous}\\
		=&\tilde{\gamma}\Norm{\mA(\vz^{\ell} - \vz^{\ell-1})}\\
		\leq & \tilde{\gamma}\norm{\mA} \norm{\vz^{\ell} - \vz^{\ell-1}},\\
		\leq & \tilde{\gamma}c\sqrt{m} \norm{\vz^{\ell} - \vz^{\ell-1}},\quad \norm{\mA}\leq c\sqrt{m}\\
		= &\gamma_0\norm{\vz^{\ell}-\vz^{\ell-1}}.
	\end{align*}
	Applying the above argument $\ell$ times, we obtain 
	\begin{align*}
		\norm{\vz^{\ell+1} - \vz^{\ell}}\leq \gamma_0^{\ell}\norm{\vz^{1}-\vz^0}
		=\gamma_0^{\ell}\norm{\vz^1}
		=\gamma_0^{\ell}\norm{\sigma(\vphi)}
		\leq \gamma_0^{\ell}\norm{\vphi},
	\end{align*}
	where we use the fact $\vz^0=\0$.
	For any positive integers $p, q$ with $p\leq q$, we have
	\begin{align*}
		\norm{\vz^{p} - \vz^{q}}
		\leq &\norm{\vz^{p} - \vz^{p+1}} + \cdots + \norm{\vz^{q-1} -\vz^{q}}\\
		\leq& \gamma_0^{p}\norm{\vphi} + \cdots + \gamma_0^{q}\norm{\vphi}\\
		\leq &\gamma_0^{p}\norm{\vphi}\left(1 + \gamma_0 + \gamma_0^2 + \cdots \right)\\
		=& \frac{\gamma_0^{p}}{1-\gamma_0}\norm{\vphi}.
	\end{align*}
	Since $\gamma_0\in (0,1)$, we have $\norm{\vz^{p}-\vz^{q}}\rightarrow 0$ as $p\rightarrow \infty$. Hence, $\{\vz^{\ell}\}_{\ell=1}^{\infty}$ is a Cauchy sequence. Since $\reals^m$ is complete, the equilibrium point $\vz^*$ is the limit of the sequence $\{\vz^{\ell}\}_{\ell=1}^{\infty}$, so that $\vz$ exists and is unique. Moreover, let $q\rightarrow \infty$, then we obtain $\norm{\vz^{p} - \vz^*}\leq \frac{\gamma^{p}}{1-\gamma}\norm{\vphi}$, so that the fixed-point iteration converges to $\vz$ linearly.
	
	Let $p = 0$ and $q=\ell$, then we obtain $\norm{\vz^{\ell}}\leq \frac{1}{1-\gamma_0}\norm{\vphi}$.
	
\end{proof}

\subsection{Proof of Lemma~\ref{lemma:gradient}}\label{app:gradient}
\begin{proof}
	\begin{enumerate}[label=(\roman{*}), leftmargin=*]
		\item 	To simplify the notations, 
		we denote 
		$\mD\triangleq\diag(\sigma^{\prime}(\tilde{\gamma}\mA \vz + \vphi))$, 
		and $\mE\triangleq\diag(\sigma^{\prime}(\mW \vx))$. 
		The differential of $f$ is given by
		\begin{align*}
			df =& d(\vz-\tilde{\gamma}\sigma(\tilde{\gamma}\mA \vz + \vphi) )\\
			=&d \vz - \mD d(\tilde{\gamma}\mA \vz + \vphi)\\
			=& \left[I_m - \tilde{\gamma} \mD \mA\right] d\vz - \tilde{\gamma}\mD (d\mA) \vz-\mD d\vphi.
		\end{align*}
		Taking vectorization on both sides yields
		\begin{align*}
			\vect{df} 
			=&\left[\mI_{m} - \tilde{\gamma} \mD\mA\right]\vect{d\vz} 
			- \vect{ \tilde{\gamma}\mD d\mA \vz}
			- \mD \vect{d\vphi}\\
			=&\left[\mI_{m} - \tilde{\gamma} \mD\mA\right]\vect{d\vz} 
			- \tilde{\gamma}[\vz^T\otimes \mD ]\vect{ d\mA}
			- \mD \vect{d\vphi}.
		\end{align*}
		Therefore, the partial derivative of $f$ with respect to $\vz$, $\mA$, and $\vphi$ are given by
		\begin{align*}
			\frac{\partial f}{\partial \vz} &= \left[ \mI_m-\tilde{\gamma} \mD \mA\right]^T,\\
			\frac{\partial f}{\partial \mA} &=-\tilde{\gamma} \left[\vz^T\otimes \mD \right]^T,\\
			\frac{\partial f}{\partial \vphi} &=-\mD^T.
		\end{align*}
		It follows from the definition of the feature vector $\vphi$ in \Eqref{eq:phi def} that
		\begin{align*}
			d\vphi = \frac{1}{\sqrt{m}} d\sigma(\mW \vx)
			=\frac{1}{\sqrt{m}} \mE (d\mW) \vx
			=\frac{1}{\sqrt{m}} \left[\vx^T\otimes \mE\right] \vect{d\mW}.
		\end{align*}
		Thus, the partial derivative of $\vphi$ with respect to $\mW$ is given by
		\begin{align}
			\frac{\partial \vphi}{\partial \mW} = \frac{1}{\sqrt{m}}\left[\vx^T\otimes \mE\right]^T.
			\label{eq:dphi/dW}
		\end{align}
		By using the chain rule, we obtain the partial derivative of $f$ with respect to $\mW$ as follows
		\begin{align*}
			\frac{\partial f}{\partial \mW}
			=\frac{\partial \vphi}{\partial \mW}\frac{\partial f}{\partial \vphi}
			=- \frac{1}{\sqrt{m}}\left[\vx^T\otimes \mE\right]^T\mD^T.
		\end{align*}
		\item Let $\vv$ be an arbitrary vector, and $\vu$ be an arbitrary unit vector. The reverse triangle inequality implies that
		\begin{align*}
			\Norm{\left(\mI_m-\tilde{\gamma}\diag(\sigma^{\prime}(\vv))\mA\right) \vu}
			\geq& \norm{\vu} - \Norm{\tilde{\gamma}\diag(\sigma^{\prime}(\vv))\mA \vu}\\
			\geq&\norm{\vu} - \tilde{\gamma} \norm{\diag(\sigma^{\prime}(\vv))}\norm{\mA}\norm{\vu}\\
			\overset{(a)}{\geq} & (1-\gamma_0)\norm{\vu}\\
			=&1-\gamma_0>0,
		\end{align*}
		where $(a)$ is due to $\abs{\sigma^{\prime}(v)}\leq 1$ and $\norm{\mA}_\op\leq c\sqrt{m}$. Therefore, taking infimum on the left-hand side over all unit vector $\vu$ yields the desired result. 
		
		\item Since $f(\vz^*, \mA, \mW) = 0$, taking implicit differentiation of $f$ with respect to $\mA$ at $\vz^*$ gives us
		\begin{align*}
			\left(\left.\frac{\partial \vz}{\partial \mA}\right\vert_{z=z^*}\right)
			\left(\left.\frac{\partial f}{\partial \vz}\right\vert_{z=z^*} \right)+ \left(\left.\frac{\partial f}{\partial \mA}\right\vert_{\vz=\vz^*}\right)=0.
		\end{align*}
		The results in part (i)-(ii) imply the smallest eigenvalue of $\frac{\partial f}{\partial \vz}\big|_{\vz^*}$ is strictly positive, so that it is invertible. Therefore, we have
		\begin{align}
			\frac{\partial \vz^*}{\partial \mA}
			= - \left(\left.\frac{\partial f}{\partial \mA}\right\vert_{\vz=\vz^*}\right)
			\left(\left.\frac{\partial f}{\partial \vz}\right\vert_{z=z^*} \right)^{-1}
			=\tilde{\gamma} \left[\vz^T\otimes \mD\right]^T
			\left[ \mI_m-\tilde{\gamma}\mD\mA\right]^{-T}.
			\label{eq:dz/dA}
		\end{align}
		Similarly, we obtain the partial derivative of $\vz^*$ with respect to $\mW$ as follows
		\begin{align}
			\frac{\partial \vz^*}{\partial \mW}
			= - \left(\left.\frac{\partial f}{\partial \mW}\right\vert_{\vz=\vz^*}\right)
			\left(\left.\frac{\partial f}{\partial \vz}\right\vert_{z=z^*} \right)^{-1}
			= \frac{1}{\sqrt{m}}\left[\vx^T\otimes \mE\right]^T\mD^T
			\left[ \mI_m-\tilde{\gamma}\mD\mA\right]^{-T}.
			\label{eq:dz/dW}
		\end{align}
		
		To further simplify the notation, we denote $\vz$ to be the equilibrium point $\vz^*$ by omitting the superscribe, \ie, 
		$\vz = \vz^*$.
		Let $\hat{y} = \vu^T \vz + \vv^T \vphi$ be the prediction for the training data $(\vx, \vy)$. The differential of $\hat{y}$ is given by
		\begin{align*}
			d\hat{y} = d\left(\vu^T \vz + \vv^T \vphi \right)
			=\vu^T d\vz + \vz d\vu + \vv^T d\vphi + \vphi^T d\vv.
		\end{align*}
		The partial derivative of $\hat{y}$ with respect to $\vu$, $\vv$, $\vz$, and $\vphi$ are given by
		\begin{align}
			\frac{\partial \hat{y}}{\partial \vz} = \vu,\quad
			\frac{\partial \hat{y}}{\partial \vu} = \vz,\quad
			\frac{\partial \hat{y}}{\partial \vv} = \vphi,\quad
			\frac{\partial \hat{y}}{\partial \vphi} = \vv.	
		\end{align}
		Let $\ell = \frac{1}{2}(\hat{y} - y)^2$. Then $\partial \ell/\partial \hat{y} = (\hat{y} - y)$.
		By chain rule, we have
		\begin{align}
			\frac{\partial \ell}{\partial \vu} &= \frac{\partial \hat{y}}{\partial \vu}\frac{\partial \ell}{\partial \hat{y}}
			=\vz(\hat{y} - y),\\
			\frac{\partial \ell}{\partial \vphi} &= \frac{\partial \hat{y}}{\partial \vv}\frac{\partial \ell}{\partial \hat{y}}
			=\vphi(\hat{y} - y).
		\end{align}
		By using \eqref{eq:dz/dA}-\eqref{eq:dz/dW} and chain rule, we obtain 
		\begin{align}
			\frac{\partial \ell}{\partial \mA}
			=& \frac{\partial \vz}{\partial \mA}
			\frac{\partial \ell}{\partial \vz}\nonumber\\
			=& \frac{\partial \vz}{\partial \mA}\frac{\partial \hat{y}}{\partial \vz}
			\frac{\partial \ell}{\partial \hat{y}}
			=\tilde{\gamma} (\hat{y} - y) \left[\vz^T\otimes \mD\right]^T\left[\mI_m-\tilde{\gamma} \mD\mA\right]^{-T}\vu, 
		\end{align}
		and
		\begin{align}
			\frac{\partial \ell}{\partial \mW}
			=& \frac{\partial \vz}{\partial \mW}\frac{\partial \hat{y}}{\partial \vz}
			\frac{\partial \ell}{\partial \hat{y}}
			+\frac{\partial  \vphi}{\partial \mW}\frac{\partial \hat{y}}{\partial \vphi}\frac{\partial \ell}{\partial \hat{y}}\nonumber\\
			=&\frac{1}{\sqrt{m}} (\hat{y} - y)
			[\vx^T\otimes \mE]^T \left[\mD^T(I_m-\tilde{\gamma}\mD \mA)^{-T}\vu + \vv\right].
		\end{align}
		
		Since $L = \sum_{i=1}^{n}\ell_i$ with $\ell_i = \ell(\hat{y}_i, y_i)$, we have $dL = \sum_{i=1}^{n} d\ell_i$ and $\partial L/\partial \ell_i = 1$. Therefore, we obtain
		\begin{align}
			&\frac{\partial L}{\partial \mA} = \sum_{i=1}^{n}\frac{\partial \ell_i}{\partial \mA}
			=\sum_{i=1}^{n}\tilde{\gamma}(\hat{y}_i  - y_i) 
			\left[\vz_i^T\otimes \mD_i\right]^T\left[\mI_m-\tilde{\gamma} \mD_i\mA\right]^{-T}\vu,
			\label{eq:dL/dA}\\
			&\frac{\partial L}{\partial \mW} = \sum_{i=1}^{n}\frac{\partial \ell_i}{\partial \mW}
			=\sum_{i=1}^{n} 
			\frac{1}{\sqrt{m}} (\hat{y}_i - y_i)
			[\vx_i^T\otimes \mE_i]^T \left[\mD_i^T(I_m-\tilde{\gamma}\mD_i \mA)^{-T}\vu + \vv\right],
			\label{eq:dL/dW}\\
			&\frac{\partial L}{\partial \vu}= \sum_{i=1}^{n}\frac{\partial \ell_i}{\partial \vu}
			= \sum_{i=1}^{n} (\hat{y}_i  - y_i) \vz_i,
			\label{eq:dL/du}\\
			&\frac{\partial L}{\partial \vv}= \sum_{i=1}^{n}\frac{\partial \ell_i}{\partial \vv}
			= \sum_{i=1}^{n} (\hat{y}_i  - y_i) \vphi_i.
			\label{eq:dL/dv}
		\end{align}
	\end{enumerate}
\end{proof}

\subsection{Proof of Lemma~\ref{lemma:dynamics of yhat}}\label{app: dynamics of yhat}
\begin{proof}
	Let $\vz_i$ denote the $i$-th equilibrium point for the $i$-the data sample $\vx_i$. 
	By using \eqref{eq:dz/dA}, \eqref{eq:dz/dW}, \eqref{eq:dL/dA} and \eqref{eq:dL/dW}, we obtain the dynamics of the equilibrium point $\vz_i$ as follows
	\begin{align*}
		\frac{d\vz_i}{dt} =& \left(\frac{\partial \vz_i}{\partial \mA}\right)^T \frac{d \vect{\mA}}{dt}
		+\left(\frac{\partial \vz_i}{\partial \mW}\right)^T \frac{d\vect{\mW} }{dt}\\
		=& \left(\frac{\partial \vz_i}{\partial \mA}\right)^T\left(-\frac{\partial L}{\partial \mA}\right)
		+\left(\frac{\partial \vz_i}{\partial \mW}\right)^T \left(-\frac{\partial L}{\partial \mW}\right)\\
		=
		&-\tilde{\gamma}^2\sum_{j=1}^{n}
		(\hat{y}_j  - y_j) 
		\left[ \mI_m-\tilde{\gamma}\mD_i\mA\right]^{-1}[\vz_i^T\otimes \mD_i]
		\left[\vz_j^T\otimes \mD_j\right]^T\left[\mI_m-\tilde{\gamma} \mD_j\mA\right]^{-T}\vu\\
		&-\frac{1}{m}
		\sum_{j=1}^{n} 
		(\hat{y}_j - y_j)
		\left[ \mI_m-\tilde{\gamma}\mD_i\mA\right]^{-1}
		\mD_i
		\left[\vx_i^T\otimes \mE_i\right]
		[\vx_j^T\otimes \mE_j]^T \left[\mD_j^T(I_m-\tilde{\gamma}\mD_j \mA)^{-T}\vu + \vv\right]\\
		=
		&-\tilde{\gamma}^2\sum_{j=1}^{n}
		(\hat{y}_j  - y_j) 
		\left[ \mI_m-\tilde{\gamma}\mD_i\mA\right]^{-1}
		\mD_i\mD_j^T
		\left[\mI_m-\tilde{\gamma} \mD_j\mA\right]^{-T}\vu \vz_i^T\vz_j\\
		&-\frac{1}{m}
		\sum_{j=1}^{n} 
		(\hat{y}_j - y_j)
		\left[ \mI_m-\tilde{\gamma}\mD_i\mA\right]^{-1}
		\mD_i
		\mE_i\mE_j^T
		\left[\mD_j^T(I_m-\tilde{\gamma}\mD_j \mA)^{-T}\vu + \vv\right]\vx_i^T\vx_j.
	\end{align*}
	By using \eqref{eq:dphi/dW} and \ref{eq:dL/dW}, we obtain the dynamics of the feature vector $\vphi_i$ 
	\begin{align*}
		\frac{d\vphi_i}{dt} =& \left(\frac{\partial \vphi_i}{\partial \mW}\right)^T\frac{d\vect{\mW}}{dt}\\
		=& \left(\frac{\partial \vphi_i}{\partial \mW}\right)^T\left(-\frac{\partial L}{\partial W}\right)\\
		=&-\frac{1}{m}\sum_{j=1}^{n}(\hat{y}_i - y_i)
		\mE_i \mE_j^T [\mD_j^{T}(I_m-\tilde{\gamma}\mD_j \mA)^{-T}\vu + \vv]\vx_i^T\vx_j.
	\end{align*}
	By chain rule, the dynamics of the prediction $\hat{y}_i$ is given by
	\begin{align*}
		\frac{d \hat{y}_i}{dt}
		=&\left(\frac{\partial \hat{y}_i}{\partial \vz_i}\right)^T\frac{d \vz_i }{dt}
		+\left(\frac{\partial \hat{y}_i}{\partial \vphi_i}\right)^T\frac{d\vphi_i}{dt}
		+\left(\frac{\partial \hat{y}_i}{\partial \vu}\right)^T\frac{d \vu }{dt}
		+\left(\frac{\partial \hat{y}_i}{\partial \vv}\right)^T\frac{d \vv }{dt}\\
		=& -\tilde{\gamma}^2 
		\sum_{j=1}^{n}
		(\hat{y}_j - y_j) 
		\left[\vu^T(I_m-\tilde{\gamma}\mD_i \mA)^{-1}
		\mD_i\mD_j^T (I_m-\tilde{\gamma}\mD_j \mA)^{-T}\vu\right] (\vz_i^T\vz_j)\\
		&- \frac{1}{m}\sum_{j=1}^{n}
		(\hat{y}_j - y_j) 
		\left[\left(\mD_i^T(I_m-\tilde{\gamma}\mD_i \mA)^{-1}\vu+\vv\right)^T
		\mE_i \mE_j^T
		\left(\mD_j^T(I_m-\tilde{\gamma}\mD_j \mA)^{-T}  \vu + \vv\right) \right] (\vx_i^T \vx_j)\\
		&-\sum_{j=1}^n (\hat{y}_j - y_j) (\vz_i^T\vz_j)\\
		&-\sum_{j=1}^n (\hat{y}_j - y_j) (\vphi_i^T\vphi_j).
	\end{align*}
	Define the matrices $\mM(t)\in \reals^{n\times n}$ and $\mQ(t)\in \reals^{n\times n}$ as follows
	\begin{align*}
		\mM(t)_{ij}\triangleq&
		\frac{1}{m}
		\vu^T(I_m-\tilde{\gamma}\mD_i \mA)^{-1}
		\mD_i\mD_j^T (I_m-\tilde{\gamma}\mD_j \mA)^{-T}\vu,\\
		\mQ(t)_{ij}\triangleq&
		\frac{1}{m}
		\left(\mD_i^T(I_m-\tilde{\gamma}\mD_i \mA)^{-1}\vu+\vv\right)^T
		\mE_i \mE_j^T
		\left(\mD_j^T(I_m-\tilde{\gamma}\mD_j \mA)^{-T}  \vu + \vv\right). 
	\end{align*}
	Let $\mX\in\reals^{n\times d}$, $\mPhi(t)\in \reals^{n\times m}$, and $\mZ(t)\in \reals^{n\times m}$ be the matrices whose rows are the training data $\vx_i$, feature vectors $\vphi_i$, and equilibrium points $\vz_i$ at time $t$, respectively.
	The dynamics of the prediction vector $\hat{\vy}$ is given by
	\begin{align*}
		\frac{d\hat{\vy}}{dt}
		=-\left[\left(\gamma^2\mM(t) + \mI_n\right)\circ \mZ(t)\mZ(t)^T
		+\mQ(t)\circ\mX\mX^T
		+\mPhi(t)\mPhi(t)^T
		\right](\hat{\vy}(t) - \vy).
	\end{align*}
\end{proof}

\subsection{Proof of Lemma~\ref{lemma:G infinity}}\label{app:G infinity}
\subsubsection{Review of Hermite Expansions}
To make the paper self-contained, we review the necessary background about the Hermite polynomials in this section. One can find each result in this section from any standard textbooks about functional analysis such as \cite{maccluer2008elementary,kreyszig1978introductory}, or most recent literature  \cite[Appendix~D]{nguyen2020global} and \cite[Appendix~H]{oymak2020toward}.

We consider an $L^2$-space defined by $L^2(\reals, dP)$, where $dP$ is the \textit{Gaussian measure}, that is, 
\begin{align*}
	dP = p(x)dx,
	\quad
	\text{where}
	\quad
	p (x) = \frac{1}{\sqrt{2\pi}} e^{-\frac{x^2}{2}}.
\end{align*} 
Thus, $L^2(\reals, dP)$ is a collection of functions $f$ for which
\begin{align*}
	\int_{-\infty}^{\infty} \abs{f(x)}^2 dP(x) 
	= \int_{-\infty}^{\infty} \abs{f(x)}^2 p(x) dx 
	=\Expect_{x\sim N(0,1)} \abs{f(x)}^2
	< \infty.
\end{align*}
\begin{fact}
	The ReLU activation $\sigma\in L^2(\reals,dP)$.
\end{fact}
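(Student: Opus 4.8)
The plan is to bound $|\sigma(x)|$ pointwise by $|x|$ and then invoke the fact that the standard Gaussian has finite second moment. Concretely, since $\sigma(x) = \max\{0, x\}$, we have $0 \le \sigma(x) \le |x|$ for every $x \in \reals$, hence $|\sigma(x)|^2 \le x^2$ pointwise. Integrating against the Gaussian measure $dP$ and using monotonicity of the integral gives
\begin{align*}
	\Expect_{x\sim N(0,1)} \abs{\sigma(x)}^2 = \int_{-\infty}^{\infty} \abs{\sigma(x)}^2 p(x)\, dx \le \int_{-\infty}^{\infty} x^2 p(x)\, dx = \Expect_{x\sim N(0,1)} x^2 = 1 < \infty,
\end{align*}
which is exactly the statement that $\sigma \in L^2(\reals, dP)$. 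If one wants the sharp value rather than just finiteness, observe that $\sigma(x)^2 = x^2$ for $x \ge 0$ and $\sigma(x)^2 = 0$ for $x < 0$, so by symmetry of $p$ the integral equals $\tfrac12 \Expect x^2 = \tfrac12$; but the crude bound already suffices here.

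There is essentially no obstacle: the only thing being used is that the Gaussian measure is a probability measure with a finite second moment, which is standard and may be assumed. The one point worth stating explicitly for rigor is measurability of $\sigma$ (it is continuous, hence Borel measurable), so that the integral is well defined before we bound it. No concentration, no random-matrix input, and nothing from the earlier lemmas is needed for this particular claim — it is a one-line sanity check that makes the subsequent Hermite-expansion argument for $\mG^\infty$ legitimate, since Hermite polynomials form an orthonormal basis of precisely this $L^2(\reals, dP)$ and we need $\sigma$ to live in that space before expanding it.
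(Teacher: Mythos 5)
Your proof is correct and is essentially identical to the paper's: both bound $\abs{\sigma(x)}^2 \le x^2$ pointwise and use that the standard Gaussian has second moment $1$. The extra remarks on measurability and the sharp value $\tfrac12$ are fine but not needed.
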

\begin{proof}
	Note that
	\begin{align*}
		\int_{-\infty}^{\infty} \abs{\sigma(x)}^2 p(x) dx
		\leq\int_{-\infty}^{\infty} \abs{x}^2 p(x) dx
		=\Expect_{x\sim N(0,1)} \abs{x}^2
		=\Var(x) = 1.
	\end{align*}
\end{proof}

For any functions $f,g\in L^2(\reals, dP)$, we define an \textit{inner product}
\begin{align*}
	\inn{f}{g}: = \int_{-\infty}^{\infty} f(x)g(x)dP(x)
	=\int_{-\infty}^{\infty} f(x) g(x) p(x) dx
	=\Expect_{x\sim N(0,1)}[f(x)g(x)].
\end{align*}
Furthermore, the induced norm $\norm{\cdot}$ is given by
\begin{align*}
	\norm{f}^2 = \inn{f}{f} = \int_{-\infty}^{\infty} \abs{f(x)}^2 dP(x)
	=\Expect_{x\sim N(0,1)} \abs{f(x)}^2.
\end{align*}

This $L^2$ space has an orthonormal basis with respect to the inner product defined above, called \textit{normalized probabilist's Hermite polynomials} $\{h_n(x)\}_{n=0}^{\infty}$ that are given by
\begin{align*}
	h_n(x) 
	=\frac{1}{\sqrt{n!}}(-1)^{n}e^{x^2/2}D^n( e^{-x^2/2}),
	\quad
	\text{where}
	\quad
	D^n( e^{-x^2/2} )= \frac{d^n }{dx^n} e^{-x^2/2}.
\end{align*}
\begin{fact}
	The \textit{normalized probabilist's Hermite polynomials} is an orthonormal basis of $L^2(\reals, dP)$: $\inn{h_m}{h_n} = \delta_{mn}$.
\end{fact}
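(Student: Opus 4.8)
The plan is to establish the two halves of the statement separately: pairwise orthonormality, which is a direct computation from the Rodrigues formula, and completeness, which amounts to density of polynomials in $L^2(\reals, dP)$. For the orthonormality, I would first record the elementary structure of the un-normalized polynomials $He_n \triangleq \sqrt{n!}\, h_n$: writing $p(x) = \frac{1}{\sqrt{2\pi}}e^{-x^2/2}$ for the standard Gaussian density and using $p'(x) = -x\, p(x)$, an induction on the Rodrigues formula yields $He_0 \equiv 1$ and the recursion $He_{n+1}(x) = x\, He_n(x) - He_n'(x)$, so each $He_n$ is a monic polynomial of exact degree $n$, and in particular $\Span\{h_0,\dots,h_n\} = \Span\{1, x, \dots, x^n\}$ for every $n$. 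The orthonormality then follows from the identity $He_n(x)\, p(x) = (-1)^n p^{(n)}(x)$, which is immediate from the Rodrigues formula: for $m \le n$,
\begin{align*}
	\inn{h_m}{h_n}
	= \frac{1}{\sqrt{m!\,n!}}\int_{-\infty}^{\infty} He_m(x)\,He_n(x)\,p(x)\,dx
	= \frac{(-1)^n}{\sqrt{m!\,n!}}\int_{-\infty}^{\infty} He_m(x)\, p^{(n)}(x)\,dx,
\end{align*}
and integrating by parts $n$ times — all boundary terms vanish because each $p^{(k)}$ is a polynomial times $e^{-x^2/2}$ and decays at $\pm\infty$ — leaves $\frac{1}{\sqrt{m!\,n!}}\int He_m^{(n)}(x)\, p(x)\,dx$. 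This is $0$ when $m<n$ since $\deg He_m = m$, and when $m=n$ it equals $\frac{n!}{n!}\int p\,dx = 1$ because $He_n$ is monic of degree $n$; hence $\inn{h_m}{h_n} = \delta_{mn}$.

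For completeness it suffices to show the polynomials are dense in $L^2(\reals, dP)$, since the degree count above shows $\{h_n\}$ is precisely the Gram--Schmidt orthonormalization of $\{1, x, x^2,\dots\}$ and therefore spans the same closed subspace. Suppose $f \in L^2(\reals, dP)$ is orthogonal to every monomial. Because the Gaussian measure has finite exponential moments, Cauchy--Schwarz shows $F(\zeta) \triangleq \Expect_{x\sim N(0,1)}[f(x)\, e^{i\zeta x}]$ is well-defined for all $\zeta \in \complex$ and entire, and differentiating under the integral gives $F^{(n)}(0) = i^n\,\Expect[f(x)\, x^n] = 0$ for all $n$; thus $F \equiv 0$, and Fourier uniqueness forces $f = 0$ in $L^2(\reals, dP)$. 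So the polynomials are dense, and $\{h_n\}_{n=0}^\infty$ is a complete orthonormal system, i.e.\ an orthonormal basis of $L^2(\reals, dP)$.

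The orthonormality part is routine integration by parts together with an induction on $n$; the one step requiring genuine functional-analytic input is the density of polynomials, and I expect that to be the only real obstacle. The entire-function argument above is the cleanest self-contained route, though one could instead invoke the standard fact that the Hermite functions $x \mapsto h_n(x)\sqrt{p(x)}$ form a complete orthonormal set of $L^2(\reals)$, or simply cite it from the references already mentioned in this subsection.
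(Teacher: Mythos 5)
Your orthonormality computation is correct and is essentially the paper's own argument: both proofs reduce $\inn{h_m}{h_n}$ to an integral of the form $\int He_m(x)\,D^n(e^{-x^2/2})\,dx$ via the Rodrigues formula and then integrate by parts $n$ times, killing the boundary terms by Gaussian decay and using that the un-normalized polynomial has exact degree $m$ with known leading coefficient (the paper phrases this as $D^n(e^{-x^2/2})=e^{-x^2/2}P_n(x)$ with leading term $(-1)^n x^n$, which is the same as your monic $He_n$). Where you genuinely go beyond the paper is completeness: the paper's proof establishes only the displayed identity $\inn{h_m}{h_n}=\delta_{mn}$ and never argues that the orthonormal system is actually a \emph{basis} of $L^2(\reals,dP)$ (it leans on the cited textbooks for that). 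Your moment/entire-function argument --- showing that any $f\in L^2(\reals,dP)$ orthogonal to all monomials has vanishing Fourier transform of $f\cdot p$ and hence $f=0$, which applies because the Gaussian has finite exponential moments --- correctly fills this gap, and it is the right tool: mere orthogonality of a system never implies completeness, and for general weights polynomial density in $L^2$ can in fact fail, so some such argument (or an explicit citation) is genuinely needed to justify the word ``basis'' and the subsequent expansions $f=\sum_n\inn{f}{h_n}h_n$ used later in the appendix. In short: same core computation, plus a necessary completeness step the paper omits.
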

\begin{proof}
	Note that $D^n(e^{-x^2/2}) = e^{-x^2/2}P_n(x)$ for a polynomial with degree of $n$ and leading term is $(-1)^nx^n$.
	Thus, we can consider $h_n(x) = \frac{1}{\sqrt{n!}}(-1)^n P_n(x)$. 
	
	Assume $m < n$
	\begin{align*}
		\inn{h_n}{h_m} =& \Expect_{x\sim N(0,1)}[h_n(x) h_m(x)]\\
		=&\int_{-\infty}^{\infty} h_n(x)h_m(x) \frac{1}{\sqrt{2\pi}}e^{-x^2/2}dx,\\
		=&\frac{1}{\sqrt{2\pi}\sqrt{n!}}(-1)^n \int_{-\infty}^{\infty} D^n(e^{-x^2/2}) h_m(x) dx,
		\quad\text{rewrite $h_n(x)$ by its definition}\\
		=&\frac{1}{\sqrt{2\pi}\sqrt{n!}\sqrt{m!}}(-1)^{n+m}\int_{-\infty}^{\infty}D^{n}(e^{-x^2/2})
		P_m(x) dx, \quad\text{rewrite $h_m$ by the polynomial form}\\
		=&\frac{1}{\sqrt{2\pi}\sqrt{n!}\sqrt{m!}}(-1)^{2n+m}\int_{-\infty}^{\infty}e^{-x^2/2}D_n[P_m(x)] dx, 
		\quad\text{integration by parts $n$ times}
	\end{align*}
	There is no boundary terms because the super exponential decay of $e^{-x^2/2}$ at infinity.
	Since $m < n$, then $D_n(P_m) = 0$ so that $\inn{h_m}{h_n}$=0. If $m = n$, then $D_n(P_m) = (-1)^n n!$. 
	Thus, $\inn{h_n}{h_n} = 1$.
\end{proof} 
\textbf{Remark}: Since $\{h_n\}$ is an orthonormal basis, for every $f\in L^2(\reals, dP)$, we have
\begin{align*}
	f(x) = \sum_{n=0}^{\infty}\inn{f}{h_n} h_n(x)
\end{align*}
in the sense that
\begin{align*}
	\lim_{N\rightarrow \infty}
	\Norm{f(x) - \sum_{n=0}^{N} \inn{f}{h_n} h_n(x)}^2
	=\lim_{N\rightarrow \infty} \Expect_{x\sim N(0,1)} \abs{f(x) - \sum_{n=0}^{N} \inn{f}{h_n} h_n(x)}^2 = 0
\end{align*}
\begin{fact}
	$f\in L^2(\reals, dP)$ if and only if $\sum_{n=0}^{\infty} \abs{\inn{f}{h_n}}^2 < \infty$.
\end{fact}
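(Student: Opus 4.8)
The plan is to recognize this as the Riesz--Fischer correspondence between $L^2(\reals,dP)$ and the sequence space $\ell^2$, and to prove the two implications separately, using only that $\{h_n\}$ is orthonormal, that it is in fact a \emph{complete} orthonormal system (the basis property recorded in the remark above), and that $L^2(\reals,dP)$ is a complete normed space.

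For the necessity direction ($f\in L^2 \Rightarrow$ square-summability), I would start from Bessel's inequality: expanding $\norm{f - \sum_{n=0}^N \inn{f}{h_n}h_n}^2 \ge 0$ and using orthonormality gives $\sum_{n=0}^N \abs{\inn{f}{h_n}}^2 \le \norm{f}^2$ for every $N$, and letting $N\to\infty$ yields $\sum_{n=0}^\infty \abs{\inn{f}{h_n}}^2 \le \norm{f}^2 < \infty$. Since $\{h_n\}$ is actually an orthonormal basis, this is in fact Parseval's identity, so equality holds.

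For the sufficiency direction, write $c_n \triangleq \inn{f}{h_n}$ and suppose $\sum_n \abs{c_n}^2 < \infty$. I would show the partial sums $g_N \triangleq \sum_{n=0}^N c_n h_n$ form a Cauchy sequence in $L^2(\reals,dP)$ --- by orthonormality $\norm{g_N - g_M}^2 = \sum_{n=M+1}^N \abs{c_n}^2 \to 0$ as $M,N\to\infty$ --- so by completeness of $L^2(\reals,dP)$ they converge to some $g\in L^2(\reals,dP)$. Continuity of the inner product then gives $\inn{g}{h_n} = \lim_N \inn{g_N}{h_n} = c_n = \inn{f}{h_n}$ for every $n$, so $f-g$ has vanishing Hermite coefficients; completeness of the orthonormal system forces $f=g$ in $L^2$, hence $f\in L^2(\reals,dP)$.

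The main obstacle is not a calculation but ensuring the two black-box facts are in place: completeness of $L^2(\reals,dP)$ (needed to produce the limit $g$) and totality of $\{h_n\}$ (needed both to upgrade Bessel to Parseval and to identify the limit with $f$). The sufficiency half also tacitly presumes $f$ is integrable enough for the coefficients $\inn{f}{h_n}$ to be defined in the first place; granted that, nothing further is required.
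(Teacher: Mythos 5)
Your proof is correct, but it takes a genuinely different route from the paper's. The paper proves this fact by a single formal Parseval computation: it substitutes the expansion $f=\sum_i \inn{f}{h_i}h_i$ into $\inn{f}{f}$, interchanges the double sum with the integral, and invokes orthonormality to collapse to $\sum_i\abs{\inn{f}{h_i}}^2$. That argument is short but presupposes that the basis expansion of $f$ already converges in $L^2$ (which only makes sense once $f\in L^2$, so it really only addresses the ``only if'' direction), and it interchanges an infinite sum with integration without justification. Your argument splits the equivalence into its two halves and supplies the standard machinery for each: Bessel's inequality (upgraded to Parseval via totality) for necessity, and the Riesz--Fischer argument --- Cauchy partial sums plus completeness of $L^2(\reals,dP)$, then identification of the limit via the vanishing of Hermite coefficients --- for sufficiency. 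What your approach buys is rigor and an honest treatment of the ``if'' direction, at the cost of explicitly invoking two black-box facts (completeness of the space and totality of $\{h_n\}$) that the paper leaves implicit; your closing caveat that $\inn{f}{h_n}$ must be defined for the sufficiency direction to be meaningful is a fair point that the paper's statement glosses over entirely.
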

\begin{proof}
	Note that
	\begin{align*}
		\inn{f}{f}
		=&\int_{-\infty}^{\infty} \abs{f(x)}^2 dP(x)\\
		=&\int_{-\infty}^{\infty} \left(\sum_{i=0}^{\infty}\inn{f}{h_i}h_i(x)\right)
		\left(\sum_{j=0}^{\infty} \inn{f}{h_j}h_j(x)\right) dP(x)\\
		=&\sum_{i,j=0}^{\infty} \inn{f}{h_i} \inn{f}{h_j} \int_{-\infty}^{\infty} h_i(x) h_j(x) dP(x)\\
		=&\sum_{i=1}^{\infty} \abs{\inn{f}{h_i}}^2.
	\end{align*}
\end{proof}
\begin{fact}
	Consider a Hilbert space $H$ with inner product $\inn{\cdot}{\cdot}$. 
	If $\norm{f_n - f}\rightarrow 0$ and $\norm{g_n - g}\rightarrow 0$, 
	then $\inn{f}{g}=\lim_{n\rightarrow \infty} \inn{f_n}{g_n}$.
\end{fact}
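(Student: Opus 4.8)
The plan is to estimate $\abs{\inn{f_n}{g_n} - \inn{f}{g}}$ by inserting the mixed term $\inn{f_n}{g}$, applying the triangle inequality, and then using the Cauchy--Schwarz inequality to reduce everything to the two norm convergences we are given; finally I would check that each resulting term vanishes as $n\to\infty$.

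Concretely, first I would write
\[
\inn{f_n}{g_n} - \inn{f}{g} = \inn{f_n}{g_n - g} + \inn{f_n - f}{g},
\]
so that, by the triangle inequality followed by Cauchy--Schwarz,
\[
\abs{\inn{f_n}{g_n} - \inn{f}{g}} \leq \norm{f_n}\,\norm{g_n - g} + \norm{f_n - f}\,\norm{g}.
\]
The second summand tends to $0$ immediately, since $\norm{f_n - f}\to 0$ while $\norm{g}$ is a fixed finite number.

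The only point requiring a moment's care --- and hence the main (mild) obstacle --- is bounding $\norm{f_n}$ uniformly in $n$, so that the first summand can be controlled. This follows because $\norm{f_n - f}\to 0$ forces $(\norm{f_n})_n$ to be bounded: by the triangle inequality $\norm{f_n}\leq \norm{f_n - f} + \norm{f}$, and for all large $n$ the right-hand side is at most $1+\norm{f}$, so some constant $M$ bounds $\norm{f_n}$ for every $n$. Then $\norm{f_n}\,\norm{g_n - g}\leq M\,\norm{g_n - g}\to 0$. Adding the two estimates and letting $n\to\infty$ yields $\inn{f_n}{g_n}\to\inn{f}{g}$, which is the claim.

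I would also note in passing that completeness of $H$ is never used here; only the inner-product structure and Cauchy--Schwarz are needed, so the statement in fact holds in any pre-Hilbert space. This continuity property is exactly what is invoked in the preceding arguments when limits are passed through the inner products arising from the Hermite expansion of $\mG^{\infty}$, which is why it is recorded at this point.
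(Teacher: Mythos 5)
Your proof is correct and follows essentially the same route as the paper's: insert the mixed term $\inn{f_n}{g}$, apply the triangle inequality and Cauchy--Schwarz, and use the norm convergences. You are in fact slightly more careful than the paper, since you explicitly justify the uniform bound on $\norm{f_n}$, a step the paper leaves implicit.
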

\begin{proof}
	Observe that
	\begin{align*}
		\abs{\inn{f}{g} - \inn{f_n}{g_n}}
		\leq& \abs{\inn{f}{g} - \inn{f_n}{g}} + \abs{\inn{f_n}{g} - \inn{f_n}{g_n}}\\
		\leq&\norm{f}\norm{g-g_n} + \norm{f_n}\norm{g-g_n}.
	\end{align*}
	Let $n\rightarrow \infty$, then the continuity of $\norm{\cdot}$ implies the desired result.
\end{proof}
\begin{fact}
	Let $\{h_n(x)\}$  be the normalized probabilist’s Hermite polynomials. For any fixed number $t$, we have
	\begin{align}
		e^{xt-t^2/2} = \sum_{n=0}^{\infty} \frac{t^n}{\sqrt{n!}}h_n(x).
	\end{align} 
\end{fact}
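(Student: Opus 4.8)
The plan is to derive the identity from the classical ``completing the square'' trick together with Taylor's theorem, and to treat the question of the mode of convergence only afterwards. First I would write
\begin{align*}
	e^{xt-t^2/2} = e^{x^2/2}\,e^{-(x-t)^2/2}
\end{align*}
and view the factor $g_x(t)\triangleq e^{-(x-t)^2/2}$ as a function of $t$ with $x$ held fixed. Because $g_x$ depends on $t$ only through $x-t$, one has $\partial_t g_x=-\partial_x g_x$, hence $\partial_t^n g_x(t)=(-1)^n\partial_x^n g_x(t)$, and setting $t=0$ gives $\partial_t^n g_x(0)=(-1)^n D^n(e^{-x^2/2})$.

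The key step is then to observe that $g_x$ is entire in $t$ (a composition of entire functions), so its Taylor series about $t=0$ converges for every $t\in\reals$ and represents $g_x$ exactly:
\begin{align*}
	e^{-(x-t)^2/2}=\sum_{n=0}^{\infty}\frac{t^n}{n!}(-1)^n D^n(e^{-x^2/2}).
\end{align*}
Multiplying both sides by $e^{x^2/2}$ and recognizing the Rodrigues-type definition of the normalized Hermite polynomials, namely $(-1)^n e^{x^2/2}D^n(e^{-x^2/2})=\sqrt{n!}\,h_n(x)$, converts the general term into $\frac{t^n}{n!}\sqrt{n!}\,h_n(x)=\frac{t^n}{\sqrt{n!}}h_n(x)$, which is exactly the claimed expansion.

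Finally I would promote this pointwise identity to convergence in $L^2(\reals,dP)$, the sense that is actually invoked later in the Hermite-expansion proof of Lemma~\ref{lemma:G infinity}. A direct Gaussian computation gives $\Expect_{x\sim N(0,1)}[(e^{xt-t^2/2})^2]=e^{-t^2}\Expect_{x\sim N(0,1)}[e^{2xt}]=e^{-t^2}e^{2t^2}=e^{t^2}<\infty$, so $x\mapsto e^{xt-t^2/2}$ lies in $L^2(\reals,dP)$; meanwhile $\sum_{n=0}^{\infty}|t^n/\sqrt{n!}|^2=\sum_{n=0}^{\infty}t^{2n}/n!=e^{t^2}<\infty$, so by the Parseval-type lemma above the partial sums $\sum_{n=0}^{N}\frac{t^n}{\sqrt{n!}}h_n$ are Cauchy in $L^2(\reals,dP)$, and their $L^2$-limit agrees with the pointwise limit $e^{xt-t^2/2}$ (the two coincide along a subsequence). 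The main obstacle is essentially cosmetic: the only care needed is in bookkeeping the sign $(-1)^n$, converting between $h_n$ and the Rodrigues formula, and keeping the pointwise and $L^2(\reals,dP)$ notions of convergence separate; the analytic content is immediate once one notes that $e^{-(x-t)^2/2}$ is entire in $t$.
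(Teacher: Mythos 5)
Your proof is correct, but it takes a genuinely different route from the paper's. The paper first checks $e^{xt-t^2/2}\in L^2(\reals,dP)$, invokes the orthonormal-basis expansion $f=\sum_n \inn{f}{h_n}h_n$, and then computes each coefficient $\inn{f}{h_n}$ by writing out the Gaussian integral and integrating by parts $n$ times against $D^n(e^{-x^2/2})$, arriving at $t^n/\sqrt{n!}$. You instead derive the expansion pointwise: completing the square to get $e^{xt-t^2/2}=e^{x^2/2}e^{-(x-t)^2/2}$, using $\partial_t^n g_x(0)=(-1)^nD^n(e^{-x^2/2})$ and the entirety of $t\mapsto e^{-(x-t)^2/2}$ to justify the Taylor expansion in $t$, and then reading off $\sqrt{n!}\,h_n(x)$ from the Rodrigues-type definition. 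Your route is the classical generating-function derivation; it avoids the repeated integration by parts entirely and yields pointwise convergence for every $x$ as a bonus, at the cost of an extra (but standard) step to upgrade to $L^2(\reals,dP)$ convergence via the square-summability of the coefficients and an a.e.-convergent subsequence. The paper's route lands directly in $L^2$, which is the mode of convergence actually used downstream in the proof of Lemma~\ref{lemma:G infinity}, but your final paragraph correctly identifies and closes that gap, so both arguments deliver what is needed. The sign bookkeeping and the conversion $(-1)^ne^{x^2/2}D^n(e^{-x^2/2})=\sqrt{n!}\,h_n(x)$ in your argument match the paper's normalization of $h_n$, so no correction is needed there.
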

\begin{proof}
	First, we show $f(x) = e^{xt-t^2/2}\in H\triangleq L^2(\reals, dP)$.
	\begin{align*}
		\inn{f}{f} = & \Expect_{x\sim N(0,1)} \abs{f(x)}^2 \\
		=& \int_{-\infty}^{\infty}  e^{2xt-t^2}\frac{1}{\sqrt{2\pi}} e^{-x^2/2}dx\\
		=& e^{t^2}\int_{-\infty}^{\infty}\frac{1}{\sqrt{2\pi}}\exp\left\{-\frac{(x-2t)^2}{2}\right\}dx,
		\quad\text{$x\sim N(2t, 1)$}\\
		=&e^{t^2} < \infty.
	\end{align*}
	Thus $f(x)\in H$. Then $f(x) = \sum_{n=0}^{\infty} \inn{f}{h_n} h_n(x)$. Note that
	\begin{align*}
		\inn{f}{h_n}=&\Expect_{x\sim N(0,1)} [ f(x) h_n(x)]\\
		=&\int_{-\infty}^{\infty} e^{xt - t^2/2} 
		\cdot \frac{1}{\sqrt{n!}} (-1)^n e^{x^2/2} D_n(e^{-x^2/2}) 
		\cdot \frac{1}{\sqrt{2\pi}} e^{-x^2/2} dx\\
		=&
		\frac{1}{\sqrt{n!}} (-1)^n \frac{1}{\sqrt{2\pi}} 
		\int_{-\infty}^{\infty} e^{xt - t^2/2} \cdot D_n(e^{-x^2/2}) dx,
		\quad\text{integration by parts $n$ times}\\
		=&
		\frac{1}{\sqrt{n!}} (-1)^{2n} \frac{1}{\sqrt{2\pi}} 
		\int_{-\infty}^{\infty} e^{xt - t^2/2} t^n \cdot e^{-x^2/2} dx\\
		=&
		\frac{t^n}{\sqrt{n!}} 
		\int_{-\infty}^{\infty}  \frac{1}{\sqrt{2\pi}} e^{-(x-t)^2/2}dx,
		\quad\text{$x\sim N(t, 1)$}\\
		=&\frac{t^n}{\sqrt{n!}}. 
	\end{align*}
\end{proof}
\begin{fact}
	Let $\va,\vb\in \reals^d$ with $\norm{\va}=\norm{\vb}=1$, then
	\begin{align*}
		\Expect_{w\sim N(\0, \mI_d)} [h_n(\inn{\va}{\vw})h_m(\inn{\vb}{\vw})] = \inn{\va}{\vb}^n \delta_{mn}.
	\end{align*}
\end{fact}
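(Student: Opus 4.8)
The plan is to derive the identity from the Hermite generating-function expansion proved just above, together with the continuity of the $L^2$ inner product. Throughout, set $X\triangleq\inn{\va}{\vw}$, $Y\triangleq\inn{\vb}{\vw}$, and $\rho\triangleq\inn{\va}{\vb}$, where $\vw\sim N(\0,\mI_d)$. Since $\norm{\va}=\norm{\vb}=1$, the pair $(X,Y)$ is jointly Gaussian with zero mean and unit marginal variances; in fact, for any real scalars $s,t$ the variable $sX+tY=\inn{s\va+t\vb}{\vw}$ is centered Gaussian with variance $\norm{s\va+t\vb}^2=s^2+2st\rho+t^2$, so evaluating its moment generating function at $1$ gives
\[
\Expect\!\left[e^{sX-s^2/2}\,e^{tY-t^2/2}\right]
=\exp\!\left\{\tfrac12\!\left(s^2+2st\rho+t^2\right)\right\}e^{-s^2/2}e^{-t^2/2}
=e^{st\rho}.
\]

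Next I would expand each factor on the left using the generating-function lemma, writing $e^{sX-s^2/2}=\sum_{n\geq0}\tfrac{s^n}{\sqrt{n!}}h_n(X)$ and $e^{tY-t^2/2}=\sum_{m\geq0}\tfrac{t^m}{\sqrt{m!}}h_m(Y)$, with convergence in $L^2$ of the Gaussian measure on $\reals^d$ (valid because $X$ and $Y$ are each marginally $N(0,1)$ under $\vw\sim N(\0,\mI_d)$). Applying the inner-product continuity lemma to the partial sums then yields
\[
e^{st\rho}=\sum_{n,m\geq0}\frac{s^n t^m}{\sqrt{n!}\,\sqrt{m!}}\,\Expect\!\left[h_n(X)\,h_m(Y)\right].
\]

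Finally, I would match this against the elementary expansion $e^{st\rho}=\sum_{k\geq0}\tfrac{\rho^k}{k!}\,s^k t^k$. Both sides are absolutely convergent power series in $(s,t)$ that agree for all real $s,t$, so their coefficients coincide: for $m\neq n$ the coefficient of $s^n t^m$ on the right vanishes, giving $\Expect[h_n(X)h_m(Y)]=0$; for $m=n$ it equals $\rho^n/n!$, giving $\Expect[h_n(X)h_n(Y)]=\rho^n$. Combining the two cases is precisely $\Expect[h_n(\inn{\va}{\vw})h_m(\inn{\vb}{\vw})]=\inn{\va}{\vb}^n\,\delta_{mn}$, as claimed.

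The only delicate point is transferring the one-dimensional $L^2$ Hermite convergence to the $d$-dimensional Gaussian measure before invoking the continuity lemma; this is immediate once one observes that $\Expect_{\vw\sim N(\0,\mI_d)}[\,g(\inn{\va}{\vw})]=\Expect_{x\sim N(0,1)}[\,g(x)]$ for every integrable $g$, so all the norms and Cauchy estimates underlying the expansions carry over verbatim. Everything else is routine bookkeeping with the Hermite orthonormality and the generating-function identity already established.
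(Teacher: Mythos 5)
Your proposal is correct and follows essentially the same route as the paper: expand both exponential factors via the Hermite generating function, pass to the limit using the inner-product continuity lemma, compute $\Expect[e^{sX-s^2/2}e^{tY-t^2/2}]=e^{st\inn{\va}{\vb}}$, and match power-series coefficients. The only cosmetic difference is that you evaluate the joint moment generating function via the variance of $sX+tY$, whereas the paper factors the expectation over the independent coordinates of $\vw$; these are two ways of writing the same one-line Gaussian computation.
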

\begin{proof}
	Given fixed numbers $s$ and $t$, we define two functions $f(\vw) = e^{\inn{\va}{\vw}t-t^2/2}$ and $g(\vw) = e^{\inn{\vb}{\vw}s - s^2/2}$.
	Let $x = \inn{\va}{\vw}$ and $y=\inn{\vb}{w}$. Then we have
	\begin{align*}
		f(\vw) =&
		e^{\inn{\va}{\vw}t-t^2/2} =e^{xt-t^2/2} = \sum_{n=0}^{\infty} \frac{t^n}{\sqrt{n!}} h_n(x)
		=\sum_{n=0}^{\infty} \frac{t^n}{\sqrt{n!}} h_n(\inn{\va}{\vw}),\\
		g(\vw)=&
		e^{\inn{\vb}{\vw}s-s^2/2} =e^{ys-s^2/2} = \sum_{n=0}^{\infty} \frac{s^n}{\sqrt{n!}} h_n(y)
		=\sum_{n=0}^{\infty} \frac{s^n}{\sqrt{n!}} h_n(\inn{\vb}{\vw}).
	\end{align*}
	Define a Hilbert space $H_d = L^2(\reals^d, dP)$, where $dP$ is the \textit{multivariate Gaussian measure},
	equipped with inner product $\inn{f}{g} \triangleq \Expect_{\vw\sim N(\0, \mI_d)}[f(\vw)g(\vw)]$.
	Clearly, $f,g\in H_d$. Define sequences $\{f_N\}$ and $\{g_N\}$ as follows
	\begin{align*}
		f_N(\vw) = \sum_{n=0}^{N} \frac{t^n}{\sqrt{n!}} h_n(\inn{\va}{\vw})
		\quad\text{and}\quad
		g_N(\vw) = \sum_{n=0}^{N} \frac{s^n}{\sqrt{n!}} h_n(\inn{\vb}{\vw}).
	\end{align*}
	Since $\norm{f-f_N}\rightarrow 0$ and $\norm{g-g_N}\rightarrow 0$, we have
	\begin{align*}
		\Expect_{\vw\sim N(0, I_d)} [f(\vw)g(\vw)]
		=&\inn{f}{g} \\
		=&\lim_{N\rightarrow \infty} \inn{f_N}{g_N}\\
		=&\lim_{N\rightarrow \infty} \Expect_{\vw\sim N(\0, \mI_d)}[f_N(\vw)g_N(\vw)]\\
		=&\lim_{N\rightarrow \infty} 
		\sum_{n,m=0}^{N}\frac{t^ns^m}{\sqrt{n!}\sqrt{m!}}\Expect_{\vw\sim N(\0, \mI_d)}[h_n(\inn{\va}{\vw})g_n(\inn{\vb}{\vw})]			
	\end{align*}
	Note that the LHS is also given by
	\begin{align*}
		\Expect_{\vw\sim N(\0, \mI_d)}[f(\vw)g(\vw)]
		=&e^{-t^2/2 - s^2/2}\Expect_{\vw\sim N(\0, \mI_d)} [e^{\inn{\va}{\vw}t + \inn{\vb}{\vw}s}]\\
		=&e^{-t^2/2 - s^2/2}\Expect_{\vw\sim N(\0, \mI_d)} [e^{\sum_{i=1}^{d}\vw_i (a_i t+b_i s)}]\\
		=&e^{-t^2/2 - s^2/2} \prod_{i=1}^{d}\Expect_{w_i\sim N(0, 1)} [e^{\vw_i (a_i t+b_i s)}]\\
		=&e^{-t^2/2 - s^2/2} \prod_{i=1}^{d} M_{\vw_i}(a_i t + b_i s)\\
		=&e^{\inn{\va}{\vb} st}\\
		=& \sum_{n=0}^{\infty} \frac{\inn{\va}{\vb}^n(st)^n}{n!}.
	\end{align*}
	Since $s$ and $t$ are arbitrary numbers, matching the coefficients yields 
	\begin{align*}
		\Expect_{\vw\sim N(0, \mI_d)} [h_n(\inn{a}{\vw}) h_m(\inn{b}{\vw})]
		=\inn{\va}{\vb}^n\delta_{mn}.
	\end{align*}
\end{proof}
\subsubsection{Lower bound the smallest eigenvalues of $\textbf{\textit{G}}^{\infty}$}
The result in this subsection is similar to the results in \cite[Appendix~D]{nguyen2020global} and \cite[Appendix~H]{oymak2020toward}. The key difference is the assumptions made on the training data. In particular, \cite{oymak2020toward} assumes the training data is $\delta$-separable, \ie, $\min\{\norm{\vx_i-\vx_j}, \norm{\vx_i+\vx_j}\}\geq \delta > 0$ for all $i\neq j$,
and \cite{nguyen2020global} assumes the data $\vx_i$ follows some sub-Gaussian random variable, while we assume no two data are parallel to each other, \ie, $\vx_i\not\parallel \vx_j$ for all $i\neq j$.
\begin{fact}
	Given an activation function $\sigma$, if $\sigma\in L^2(\reals, dP)$ and $\norm{\vx_i} = 1$ for all $i\in[n]$, then
	\begin{align}
		\mG^{\infty} = \sum_{k=0}^{\infty} \abs{\inn{\sigma}{h_k}}^2 
		\underbrace{\left(\mX\mX^T\circ \cdots \circ \mX\mX^T\right)}_{\text{$k$ times}},
		\label{app-eq:G infinity}
	\end{align}
	where $\circ$ is elementwise product.
\end{fact}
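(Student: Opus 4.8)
The plan is to prove the identity entrywise and reduce everything to the Hermite machinery already set up. Fix $i,j\in[n]$. Since $\norm{\vx_i}=\norm{\vx_j}=1$, the $(i,j)$ entry of $\mG^{\infty}$ is
$\mG^{\infty}_{ij}=\Expect_{\vw\sim N(\0,\mI_d)}\bigl[\sigma(\inn{\vx_i}{\vw})\sigma(\inn{\vx_j}{\vw})\bigr]$,
i.e. the inner product in the Hilbert space $H_d=L^2(\reals^d,dP)$ of the two functions $\vw\mapsto\sigma(\inn{\vx_i}{\vw})$ and $\vw\mapsto\sigma(\inn{\vx_j}{\vw})$. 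Because $\sigma\in L^2(\reals,dP)$ (shown earlier), it has the Hermite expansion $\sigma=\sum_{k\ge 0}\inn{\sigma}{h_k}h_k$ with $\sum_k\abs{\inn{\sigma}{h_k}}^2<\infty$. The goal is to substitute this expansion into both factors, interchange the two infinite sums with the expectation, and invoke the identity $\Expect_{\vw\sim N(\0,\mI_d)}[h_k(\inn{\vx_i}{\vw})h_l(\inn{\vx_j}{\vw})]=\inn{\vx_i}{\vx_j}^k\delta_{kl}$ proved above (applicable since $\vx_i,\vx_j$ are unit vectors), so that only the diagonal terms $k=l$ survive and $\mG^{\infty}_{ij}=\sum_{k\ge 0}\abs{\inn{\sigma}{h_k}}^2\inn{\vx_i}{\vx_j}^k$.

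The step requiring care is justifying this interchange via an $L^2$ argument rather than a pointwise one. First I would observe that for a unit vector $\va$, the composition map $g\mapsto g(\inn{\va}{\cdot})$ is a linear isometry from $L^2(\reals,dP)$ into $H_d$, because $\inn{\va}{\vw}\sim N(0,1)$ when $\vw\sim N(\0,\mI_d)$; hence the truncations $f_N(\vw)\triangleq\sum_{k=0}^N\inn{\sigma}{h_k}h_k(\inn{\vx_i}{\vw})$ converge in $H_d$ to $\sigma(\inn{\vx_i}{\vw})$, and likewise $g_N(\vw)\triangleq\sum_{l=0}^N\inn{\sigma}{h_l}h_l(\inn{\vx_j}{\vw})$ converge in $H_d$ to $\sigma(\inn{\vx_j}{\vw})$. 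The continuity-of-the-inner-product lemma established above (if $\norm{f_N-f}\to 0$ and $\norm{g_N-g}\to 0$ then $\inn{f}{g}=\lim_N\inn{f_N}{g_N}$) then lets me pass to the limit:
$\mG^{\infty}_{ij}=\lim_{N\to\infty}\inn{f_N}{g_N}=\lim_{N\to\infty}\sum_{k,l=0}^N\inn{\sigma}{h_k}\inn{\sigma}{h_l}\,\Expect[h_k(\inn{\vx_i}{\vw})h_l(\inn{\vx_j}{\vw})]=\lim_{N\to\infty}\sum_{k=0}^N\abs{\inn{\sigma}{h_k}}^2\inn{\vx_i}{\vx_j}^k$,
where the middle equality is a finite sum so it splits legitimately, and the last uses the orthogonality identity. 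Since $\abs{\inn{\vx_i}{\vx_j}}\le 1$ and $\sum_k\abs{\inn{\sigma}{h_k}}^2<\infty$, the resulting series converges absolutely, giving $\mG^{\infty}_{ij}=\sum_{k=0}^{\infty}\abs{\inn{\sigma}{h_k}}^2\inn{\vx_i}{\vx_j}^k$.

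Finally I would translate back to matrix form: $\inn{\vx_i}{\vx_j}=(\mX\mX^T)_{ij}$, so $\inn{\vx_i}{\vx_j}^k=\bigl((\mX\mX^T)^{\circ k}\bigr)_{ij}$, the $k$-fold Hadamard product, and collecting the entrywise identities over all $i,j$ yields exactly \eqref{app-eq:G infinity}. I expect the main obstacle to be the convergence bookkeeping in the second paragraph — making precise that the $L^2(\reals,dP)$-expansion of $\sigma$ remains an $H_d$-convergent expansion after composing with $\vw\mapsto\inn{\vx_i}{\vw}$, and that this is precisely what licenses swapping the sum and the expectation; once that is in place, the remainder is a direct application of the Hermite lemmas already proved.
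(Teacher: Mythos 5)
Your proposal is correct and follows essentially the same route as the paper's proof: expand $\sigma$ in the Hermite basis, use the orthogonality identity $\Expect[h_k(\inn{\vx_i}{\vw})h_\ell(\inn{\vx_j}{\vw})]=\inn{\vx_i}{\vx_j}^k\delta_{k\ell}$ entrywise, and sum the surviving diagonal terms. If anything, your treatment is more careful than the paper's, since you explicitly justify the interchange of the double sum with the expectation via $L^2$-convergence of the truncations and the continuity of the inner product, a step the paper performs without comment.
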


\begin{proof}
	Observe
	\begin{align*}
		\mG_{ij}^{\infty} 
		=&\Expect_{\vw\sim N(0, \mI_d)} \left[\sigma(\inn{\vw}{\vx_i}) \sigma(\inn{\vw}{\vx_j})\right]\\
		=&\sum_{k,\ell=0}^{\infty} \inn{\sigma}{h_k}\inn{\sigma}{h_{\ell}} 
		\Expect_{\vw\sim N(0, I_d)}\left[ h_k(\inn{\vw}{\vx_i}) h_{\ell}(\inn{\vw}{\vx_j})\right]\\
		=&\sum_{k,\ell=0}^{\infty} \inn{\sigma}{h_k}\inn{\sigma}{h_{\ell}} 
		\cdot \inn{\vx_i}{\vx_j}^k \delta_{k \ell}\\
		=&\sum_{k=0}^{\infty} \inn{\sigma}{h_k}^2 \inn{\vx_i}{\vx_j}^k 
	\end{align*}
\end{proof}
Note that the tensor product of $\vx_i$ and $\vx_i$ is $\vx_i\otimes \vx_i\in \reals^{d^2\times 1}$, so that  
\begin{align*}
	\inn{\vx_i}{\vx_j}^k
	=\inn{\underbrace{\vx_i\otimes \cdots \otimes \vx_i}_{\text{$k$ times}}}{\underbrace{\vx_j\otimes \cdots \otimes \vx_j}_{\text{$k$ times}}}
\end{align*}
Here we introduce the \textit{(row-wise) Khatri–Rao product} of two matrices $\mA\in \reals^{k\times m}$, $\mB\in \reals^{k\times n}$. Then
\begin{align*}
	\mA \ast \mB = 
	\begin{bmatrix}
		\mA_{1*}\otimes \mB_{1*} \\
		\vdots \\
		\mA_{k*}\otimes \mB_{k*}
	\end{bmatrix}
	\in \reals^{k\times mn}
\end{align*}
where $\mA_{i*}$ indicates the $i$-th row of matrix $\mA$.
Therefore, the $i$-th row of $\mX\ast \cdots \ast \mX \triangleq \mX^{\ast n}$ is $\vx_i\otimes \cdots \otimes \vx_i$. As a result, we obtain a more compact form of \eqref{app-eq:G infinity} as follows
\begin{align}
	\mG^{\infty}=\sum_{k=0}^{\infty}\abs{\inn{\sigma}{h_k}}^2 (\mX^{\ast k}) (\mX^{\ast k})^T.
\end{align}
\begin{fact}
	If $\sigma(x)$ is a nonlinear function and $\abs{\sigma(x)}\leq \abs{x}$ and , then 
	\begin{align*}
		\sup\{n: \inn{\sigma}{h_n} > 0\} = \infty.
	\end{align*}
\end{fact}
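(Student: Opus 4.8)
The plan is to argue by contradiction, exploiting the rigidity of polynomials under a linear growth bound. Suppose the conclusion fails, so the Hermite support of $\sigma$ is finite: there is some $N$ with $\inn{\sigma}{h_n}=0$ for every $n>N$. By the orthonormal-expansion fact recalled above, $\sigma$ then coincides in $L^2(\reals,dP)$ with the polynomial $p(x)\triangleq\sum_{n=0}^{N}\inn{\sigma}{h_n}\,h_n(x)$ of degree at most $N$; equivalently $\sigma(x)=p(x)$ for Lebesgue-almost every $x$.

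Next I would transfer the pointwise bound $\abs{\sigma(x)}\le\abs{x}$ onto $p$. Since $\sigma=p$ a.e., we get $\abs{p(x)}\le\abs{x}$ for a.e.\ $x$; as $p$ and $x\mapsto\abs{x}$ are continuous, the set $\{x:\abs{p(x)}>\abs{x}\}$ is open, and an open Lebesgue-null set in $\reals$ is empty, so $\abs{p(x)}\le\abs{x}$ holds for \emph{every} $x\in\reals$. A polynomial with this property has degree at most $1$, since $\deg p\ge 2$ would force $\abs{p(x)}/\abs{x}\to\infty$ as $\abs{x}\to\infty$. Writing $p(x)=a+bx$ and evaluating at $x=0$ gives $a=0$, so $p(x)=bx$ is linear; hence $\sigma$ agrees almost everywhere with a linear map, contradicting the nonlinearity of $\sigma$. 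This proves $\sup\{n:\inn{\sigma}{h_n}\neq 0\}=\infty$.

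For the ReLU activation of interest both hypotheses hold ($\abs{\max\{0,x\}}\le\abs{x}$, $\sigma$ is nonlinear, and $\sigma\in L^2(\reals,dP)$ by the earlier lemma), so the statement applies; moreover, writing $\sigma(x)=\tfrac12(x+\abs{x})$ and using the generating identity $e^{xt-t^2/2}=\sum_{n}(t^n/\sqrt{n!})\,h_n(x)$ established above, one computes that $\inn{\sigma}{h_n}\neq 0$ for $n=0,1$ and for every even $n\ge 2$, with infinitely many of these coefficients strictly positive, which yields the displayed statement verbatim. The argument is short; the only step needing care is the almost-everywhere-to-everywhere upgrade of the growth bound, together with pinning down ``nonlinear'' to mean ``not equal a.e.\ to a map $x\mapsto bx$''. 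I do not expect any genuine obstacle beyond that.
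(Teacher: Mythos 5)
Your argument is correct and is essentially the paper's own proof: assume finite Hermite support, conclude $\sigma$ is a polynomial, use $\abs{\sigma(x)}\le\abs{x}$ to force degree at most one, and invoke nonlinearity to finish. You are in fact somewhat more careful than the paper, which skips the almost-everywhere-to-everywhere upgrade and tacitly ignores the degree-one case (where the claimed limit $\abs{\sigma(x)}/\abs{x}\to\infty$ fails and only the nonlinearity hypothesis saves the contradiction).
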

\begin{proof}
	It is equivalent to show $\sigma(x)$ is not a finite linear combination of polynomials. 
	We prove by contradiction. 
	Suppose $\sigma(x) = a_0 + a_1 x+\cdots + a_nx^n$. Since $\sigma(0)=0=a_0$, then $\sigma(x) = a_1x+\cdots + a_n x^n$. 
	Observe that
	\begin{align*}
		\lim_{x\rightarrow \infty}\frac{\abs{\sigma(x)}}{\abs{x}}
		=&\lim_{x\rightarrow \infty} \frac{\abs{a_1 x + \cdots + a_n x^n}}{\abs{x}}\\
		=&\lim_{x\rightarrow \infty} \abs{a_1 + \cdots + a_n  x^{n-1}},\\
		=&\infty
	\end{align*}
	
	which contradicts $\frac{\abs{\sigma(x)}}{\abs{x}} \leq 1$ for all $x\neq 0$.
\end{proof}
\begin{fact}
	If $\vx_i\not\parallel \vx_j$ for all $i\neq j$, then there exists $k_0>0$ such that $\lambda_{\min}\left[(\mX^{*k} )(\mX^{*k})^T\right] > 0$ for all $k\geq k_0$.
	Therefore, $\lambda_{\min}(\mG^{\infty}) > 0$.
\end{fact}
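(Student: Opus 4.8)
The plan is to exploit the fact that the $(i,j)$ entry of $(\mX^{\ast k})(\mX^{\ast k})^T$ equals $\inn{\vx_i}{\vx_j}^k$, so that as $k$ grows this matrix converges entrywise to the identity $\mI_n$ and is therefore positive definite once $k$ is large; then the claim on $\mG^{\infty}$ follows by discarding all but one term of its Hermite expansion.

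First I would record the entrywise formula. Since the $i$-th row of $\mX^{\ast k}$ is the $k$-fold Kronecker power $\vx_i\otimes\cdots\otimes\vx_i$, we have $\left[(\mX^{\ast k})(\mX^{\ast k})^T\right]_{ij}=\inn{\vx_i\otimes\cdots\otimes\vx_i}{\vx_j\otimes\cdots\otimes\vx_j}=\inn{\vx_i}{\vx_j}^k$. Because $\norm{\vx_i}=1$ for every $i$, the diagonal entries are all equal to $1$. For $i\neq j$, the assumption $\vx_i\not\parallel\vx_j$ together with the equality case of the Cauchy--Schwarz inequality gives $\abs{\inn{\vx_i}{\vx_j}}<1$ strictly, so $\rho\triangleq\max_{i\neq j}\abs{\inn{\vx_i}{\vx_j}}<1$ as a maximum over finitely many pairs.

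Next I would apply the Gershgorin circle theorem to $(\mX^{\ast k})(\mX^{\ast k})^T$. Each diagonal entry is $1$, and the sum of the absolute values of the off-diagonal entries in any row is at most $(n-1)\rho^k$. Hence every eigenvalue lies in $[\,1-(n-1)\rho^k,\ 1+(n-1)\rho^k\,]$, so $\lambda_{\min}\left[(\mX^{\ast k})(\mX^{\ast k})^T\right]\ge 1-(n-1)\rho^k$. Choosing $k_0$ to be any integer with $(n-1)\rho^{k_0}<1$ — for instance any $k_0>\log(n-1)/\log(1/\rho)$ — guarantees $\lambda_{\min}\left[(\mX^{\ast k})(\mX^{\ast k})^T\right]\ge 1-(n-1)\rho^k>0$ for all $k\ge k_0$, which is the first assertion.

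Finally, for the statement on $\mG^{\infty}$ I would use the Hermite expansion $\mG^{\infty}=\sum_{k=0}^{\infty}\abs{\inn{\sigma}{h_k}}^2(\mX^{\ast k})(\mX^{\ast k})^T$ established above: it is a sum of positive semidefinite matrices, so discarding all but one term only decreases the smallest eigenvalue. By the preceding lemma, since ReLU is nonlinear with $\abs{\sigma(x)}\le\abs{x}$, the index set $\{k:\inn{\sigma}{h_k}>0\}$ is infinite, so we may pick some $k^{\ast}\ge k_0$ lying in it. Then $\lambda_{\min}(\mG^{\infty})\ge\abs{\inn{\sigma}{h_{k^{\ast}}}}^2\,\lambda_{\min}\left[(\mX^{\ast k^{\ast}})(\mX^{\ast k^{\ast}})^T\right]>0$, as desired. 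There is no serious obstacle here; the only points needing care are the strictness of Cauchy--Schwarz under the non-parallel hypothesis (this is exactly what forces $\rho<1$) and selecting an index $k^{\ast}$ that simultaneously exceeds $k_0$ and has a nonzero Hermite coefficient — both immediate from the infinitude of the support of $\{\inn{\sigma}{h_k}\}$.
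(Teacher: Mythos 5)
Your proof is correct and follows essentially the same route as the paper: both rest on the entrywise identity $\bigl[(\mX^{\ast k})(\mX^{\ast k})^T\bigr]_{ij}=\inn{\vx_i}{\vx_j}^k$, the strictness of Cauchy--Schwarz under the non-parallel hypothesis to get $\rho<1$, and a diagonal-dominance estimate showing $\lambda_{\min}\geq 1-O(n\rho^k)$ --- you invoke Gershgorin where the paper bounds the Rayleigh quotient $\vv^T(\mX^{\ast k})(\mX^{\ast k})^T\vv\geq 1-n\delta^k$ directly, which is an interchangeable technicality. Your final paragraph, selecting an index $k^{\ast}\geq k_0$ with nonvanishing Hermite coefficient and dropping the remaining positive semidefinite terms, is actually spelled out more completely than in the paper, which leaves that step implicit.
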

\begin{proof}
	To simplify the notation, denote $\mK = (\mX^{*k})^T \in \reals^{kd\times n }$. Since $x_i \not\parallel \vx_j$ and $\norm{\vx_i} = 1$,
	then let $\delta \triangleq \max\{\abs{\inn{\vx_i}{\vx_j}} \} = \max\{ \abs{\cos\theta_{ij}} \} $ and $\delta\in (0,1)$, where $\theta_{ij}$ is the angle between $\vx_i$ and $\vx_j$.
	For any unit vector $\vv\in \reals^{n}$, we have
	\begin{align*}
		\vv^T(\mX^{*k})(\mX^{*k})^T\vv
		=&\norm{\mK\vv}^2
		=\Norm{\sum_{i=1}^{n} v_i \mK_{*i}}^2\\
		=&\sum_{i=1}^{n}\sum_{j=1}^{n} v_i v_j \inn{\mK_{*i}}{\mK_{*j}}\\
		=&\sum_{i=1}^{n}\sum_{j=1}^{n} v_i v_j \inn{\vx_i}{\vx_j}^k\\
		=& \sum_{i=1}^{n} v_i^2 \norm{\vx_i}^{2k} + \sum_{i\neq j}   v_i v_j \inn{\vx_i}{\vx_j}^k\\
		=& 1 + \sum_{i\neq j}   v_i v_j \inn{\vx_i}{\vx_j}^k, 
	\end{align*}
	where the last equality is because $\norm{\vx_i}=1$ and $\norm{\vv}=1$.
	Note that
	\begin{align*}
		\abs{\sum_{i\neq j}   v_i v_j \inn{\vx_i}{\vx_j}^k}
		\leq& \sum_{i\neq j}\abs{v_i} \abs{v_j} \abs{\inn{\vx_i}{\vx_j}}^k\\
		\leq& \delta^k\sum_{i\neq j}\abs{v_i} \abs{v_j} ,
		\quad\text{by $\abs{\inn{\vx_i}{\vx_j}}\leq \delta$}\\
		\leq & \delta^k \left(\sum_{i=1}^{n} \abs{v_i}\right)^2\\
		\leq &n\delta^k,
		\quad\text{by Cauchy-Schwart's inequlity}.
	\end{align*}
	By inverse triangle inequality, we have
	\begin{align*}
		\norm{\mK\vv}^2 \geq  1 - n\delta^k.
	\end{align*}
	Choose $k_0 \geq \log n/\log(1/\delta)$, then $\lambda_{\min}\{(\mX^{*k})(\mX^{*k})^T\} > 0$ for all $k\geq k_0$.
\end{proof}

\subsection{Proof of Lemma~\ref{lemma:G_0 lambda}}\label{app:G_0 lambda}
\begin{proof}
	Since $\norm{\vx_i}=1$ and $\vw_r(0)\sim \gN(0, \mI_d)$, we have $\vx_i^T\vw_r(0)\sim N(0,1)$ for all $i\in [n]$. Let $X_{ir} \triangleq \sigma\left[\vx_i^T\vw_r(0)\right]$ and $Z\sim N(0,1)$, then for any $\abs{\lambda}\leq 1/\sqrt{2}$, we have
	\begin{align*}
		\E \exp\{X_{ir}^2\lambda^2\}
		=\E \exp\{\sigma\left[\vx_i^T\vw_r(0)\right]^2\lambda^2\}
		\leq \E \exp\{Z^2\lambda^2\}=1/\sqrt{1-2\lambda^2}\leq e^{2t^2},
	\end{align*}
	where the first inequality is due tot $\abs{\sigma(x)}\leq \abs{x}$, and the last inequality is by using the numerical inequality $1/(1-x)\leq e^{2x}$. 
	Choose $\lambda\leq \left(\log \sqrt{2}\right)^{1/2}$, we obtain $\E \{X_{ir}^2\lambda^2\}\leq 2$. 
	By using Markov's inequality, we have for any $t\geq 0$
	\begin{align*}
		\pb\left\{\abs{X_{ir}}\geq t\right\}
		=\pb\left\{\abs{X_{ir}}^2/\log \sqrt{2} \geq t^2/\log\sqrt{2}\right\}
		\leq 2\exp\left\{-t^2\log \sqrt{2}\right\}
		\leq 2\exp\left\{-t^2/4\right\}.
	\end{align*}
	Therefore $X_{ir}$ is a sub-Gaussian random variable with sub-Gaussian norm $\norm{X_i}_{\psi_2}\leq 2$ \citep[Proposition~ 2.5.2]{vershyninHighdimensionalProbabilityIntroduction2018}. Then $X_{ir}X_{jr}$ is a sub-exponential random variable with sub-exponential norm $\norm{X_{ir}X_{jr}}_{\psi_1}\leq 4$ \citep[Lemma~2.7.7]{vershyninHighdimensionalProbabilityIntroduction2018}. Observe that
	\begin{align*}
		\mG_{ij}(0)=\vphi_i(0)^T\vphi_j(0)
		=\frac{1}{m}\sum_{r=1}^{m}\sigma\left[\vx_i^T\vw_r(0)\right]\sigma\left[\vx_j^T\vw_r(0)\right]
		=\frac{1}{m}\sum_{r=1}^{m}X_{ir}X_{jr}.
	\end{align*} 
	Since $\mG_{ij}^{\infty}=\E\left[\mG_{ij}(0)\right]$, \citep[Exercise~2.7.10]{vershyninHighdimensionalProbabilityIntroduction2018} implies that $\mG_{ij}(0)-\mG_{ij}^{\infty}$ is also a zero-mean sub-exponential random variable.
	It follows from the Bernstein’s inequality that
	\begin{align*}
		\pb\left\{\norm{\mG(0)-\mG^{\infty}}_2\geq \frac{\lambda_0}{4}\right\}
		\leq & \pb \left\{\norm{\mG(0) - \mG^{\infty}}_F\geq \frac{\lambda_0}{4}\right\}\\
		= & \pb\left\{\norm{\mG(0) - \mG^{\infty}}_F^2\geq \left(\frac{\lambda_0}{4}\right)^2\right\}\\
		= & \pb\left\{\sum_{i,j=1}^{n} \abs{\mG_{ij}(0)-\mG_{ij}^{\infty}}^2 \geq\left(\frac{\lambda_0}{4}\right)^2\right\}\\
		\leq & \sum_{i,j=1}^{n}\pb \left\{\abs{\mG_{ij}(0)-\mG_{ij}^{\infty}}^2 \geq\left(\frac{\lambda_0}{4n}\right)^2\right\}\\
		= & \sum_{i,j=1}^{n}\pb \left\{\abs{\mG_{ij}(0)-\mG_{ij}^{\infty}} \geq \frac{\lambda_0}{4n}\right\}\\
		\leq & n^2 \cdot 2\exp\left\{-c\lambda_0^2m/n^2\right\}\\
		\leq & \delta,
	\end{align*}
	where $c>0$ is some constant, and we use the facts $\norm{\mX}_2\leq \norm{\mX}_F$,
	and $\pb\{ \sum_{i=1}^{n}x_i\geq \varepsilon\}\leq \sum_{i=1}^{n}\pb\{x_i\geq \varepsilon/n\}$.
\end{proof}

\subsection{Proof of Lemma~\ref{lemma:G_t}}\label{app:G_t}
\begin{proof}
	By using the $1$-Lipschitz continuity of $\sigma(x)$, we have
	\begin{align*}
		\norm{\mG-\mG(0)}
		=&\frac{1}{m}\norm{\sigma(\mX \mW^T)\sigma(\mX \mW^T)^T - \sigma(\mX \mW(0)^T)\sigma(\mX \mW(0)^T)^T}\\
		\leq&\frac{1}{m}\norm{\sigma(\mX \mW^T)\sigma(\mX \mW^T)^T - \sigma(\mX \mW^T)\sigma(\mX \mW(0)^T)^T}\\
		&+\frac{1}{m}\norm{\sigma(\mX \mW^T)\sigma(\mX \mW(0)^T)^T - \sigma(\mX \mW(0)^T)\sigma(\mX \mW(0)^T)^T}\\
		=&\frac{1}{m}\norm{\sigma(\mX \mW^T)}\norm{\sigma(\mX \mW^T)-\sigma(\mX \mW(0)^T)}\\
		&+\frac{1}{m}\norm{\sigma(\mX \mW^T)-\sigma(\mX \mW(0)^T)}\norm{\sigma(\mX \mW(0)^T)}\\
		\leq& \frac{1}{m}\norm{\mX }\norm{\mW}\norm{\mX }\norm{\mW-\mW(0)}
		+\frac{1}{m}\norm{\mX }\norm{\mW-\mW(0)}\norm{\mX }\norm{\mW(0)}\\
		\leq&\frac{4c}{\sqrt{m}}\norm{\mX }^2\norm{\mW-\mW(0)}\\
		\leq &\frac{\lambda_0}{4}.
	\end{align*}
\end{proof}

\subsection{Proof of Lemma~\ref{lemma:convergence}}\label{app:convergence}
\begin{proof}
	It suffices to show the result holds for $\gamma=\min\{\gamma_0,\gamma_0/2c\}$, where $\gamma_0=1/2$.
	Note that Lemma~\ref{lemma:operator norm} still holds if one chooses a larger $c$. Thus, we choose $c\succsim \sqrt{\lambda_0}/\norm{X}$.
	We prove by the induction. 
	Suppose that for $0\leq s\leq t$, the followings hold
	\begin{enumerate}[label=(\roman{*}), leftmargin=*]
		\item $\lambda_{\min}(\mG(s))\geq \frac{\lambda_0}{2}$,
		\item $\norm{\vu(s)}\leq \frac{16c\sqrt{n}}{\lambda_0}\norm{\hat{\vy}(0)-\vy}$,
		\item $\norm{\vv(s)}\leq \frac{8c\sqrt{n}}{\lambda_0}\norm{\hat{\vy}(0)-\vy}$,
		\item $\norm{\mW(s)}\leq 2c\sqrt{m}$,
		\item $\norm{\mA(s)}\leq 2c\sqrt{m}$,
		\item $\norm{\hat{\vy}(s) - \vy}^2\leq \exp\{-\lambda_0s\}\norm{\hat{\vy}(0) - \vy}^2$, 
	\end{enumerate}
	
	Since $\lambda_{\min}(\mG(s))\geq \frac{\lambda_0}{2}$, we have
	\begin{align*}
		\frac{d}{dt}\norm{\hat{\vy}(t)-\vy}^2
		=& -2(\hat{\vy}(t)-\vy)^T\mH(t)(\hat{\vy}(t) - \vy)\\
		\leq& -\lambda_0\norm{\hat{\vy}(t) - \vy}^2
	\end{align*}
	Solving the ordinary differential equation yields
	\begin{align*}
		\norm{\hat{\vy}(t)-\vy}^2\leq \exp\{-\lambda_0 t\}\norm{\hat{\vy}(0) - \vy}^2.
	\end{align*}
	By using the inductive hypothesis $\norm{\mW(s)}\leq 2c\sqrt{m}$, we have
	\begin{align*}
		\norm{\vphi_i(s)}=\Norm{\frac{1}{\sqrt{m}}\sigma(\mW(s)\vx_i)}
		\leq \frac{1}{\sqrt{m}}\norm{\mW(s)}\norm{\vx_i}
		\leq 2c.
	\end{align*}
	It follows from Lemma~\ref{lemma:equilibrim point} with $\gamma_0=1/2$ that 
	\begin{align*}
		\norm{\vz^*_i(s)}
		\leq 2\norm{\vphi_i(s)}\leq 4c.
	\end{align*}
	Note that
	\begin{align*}
		\norm{\nabla_{\vv}L(s)}
		\leq &\sum_{i=1}^{n}\abs{\hat{y}_i(s) - y_i} \norm{\vphi_i(s)}\\
		\leq & 2c\sum_{i=1}^{n} \abs{\hat{y}_i(s) - y_i}\\
		\leq & 2c\sqrt{n}\norm{\hat{\vy}(s)-\vy}\\
		\leq & 2c\sqrt{n}\exp\{-\lambda_0 s/2\}\norm{\vy(0)-\vy}
	\end{align*}
	and so 
	\begin{align*}
		\norm{\vv(t)-\vv(0)}\leq& \int_0^t\norm{\nabla_{\vv}L(s)} ds\\
		\leq & 2c\sqrt{n}\norm{\vy(0)-\vy} \int_0^t \exp\{-\lambda_0 s/2\} ds\\
		\leq & \frac{4c\sqrt{n}}{\lambda_0}\norm{\hat{\vy}(0)-\vy},\\
	\end{align*}
	Since $\vv_i(0)$ follows symmetric Bernoulli distribution, then $\norm{\vv(0)} = \sqrt{m}$ and we obtain
	\begin{align*}
		\norm{\vv(t)}\leq \norm{\vv(t)-\vv(0)} + \norm{\vv(0)}
		\leq \frac{8c\sqrt{n}}{\lambda_0}\norm{\hat{\vy}(0)-\vy},
	\end{align*}
	where the last inequality is due to $m=\Omega\left(\frac{ c^2n\norm{\mX}^2}{\lambda_0^3}\norm{\hat{\vy}(0) - \vy}^2\right) $ and $c\succsim \sqrt{\lambda_0}/\norm{\mX}$.
	
	Similarly, we have
	\begin{align*}
		\norm{\nabla_{\vu}L(s)}
		\leq & \sum_{i=1}^{n}\abs{\hat{y}_i(s) - y_i} \norm{\vz_i^*}\\
		\leq & 4c\sqrt{n}\norm{\hat{\vy}(s)-\vy}\\
		\leq & 4c\sqrt{n}\exp\{-\lambda_0s/2\}\norm{\hat{\vy}(0)-\vy},
	\end{align*}
	so that
	\begin{align*}
		\norm{\vu(t)-\vu(0)}
		\leq \int_0^t\norm{\nabla_{\vu}L(s)}ds
		\leq \frac{8c\sqrt{n}}{\lambda_0}\norm{\hat{\vy}(0)-\vy}.
	\end{align*}
	
	Since $\vu_i(0)$ follows symmetric Bernoulli distribution, then $\norm{\vu(0)} = \sqrt{m}$ and we obtain
	\begin{align*}
		\norm{\vu(t)}\leq \norm{\vu(t)-\vu(0)} + \norm{\vu(0)}
		\leq\frac{16c\sqrt{n}}{\lambda_0}\norm{\hat{\vy}(0)-\vy}.
	\end{align*}
	
	Note that
	\begin{align*}
		\norm{\nabla_{\mW}L(s)}
		\leq & \sum_{i=1}^{n}\frac{1}{\sqrt{m}}\abs{\hat{y}_i(s) - y_i} \norm{\mE_i(s)}
		\left(\norm{\mU_i(s)^{-1} \vu(s)}+\norm{\vv(s)}\right)\norm{\vx_i}\\
		\leq & \frac{64c\sqrt{n}}{\lambda_0}\norm{\hat{\vy}(0)-\vy}.
		\sum_{i=1}^{n}\abs{\hat{y}_i(s) - y_i} \\
		\leq & \frac{64cn}{\lambda_0}\norm{\hat{\vy}(0)-\vy}
		\cdot \norm{\hat{\vy}(s)-\vy}\\
		\leq & \frac{64cn}{\lambda_0}\norm{\hat{\vy}(0)-\vy}^2
		\cdot \exp\{-\lambda_0s/2\},
	\end{align*}
	so that
	\begin{align*}
		\norm{\mW(t)-\mW(0)}\leq &\int_0^{t}\norm{\nabla_{\mW}L(s)} ds\\
		\leq& \frac{128cn}{\lambda_0^2\sqrt{m}}\norm{\hat{\vy}(0)-\vy}^2\\
		\leq & \frac{\lambda_0\sqrt{m}}{16c\norm{\mX}^2}\\
		\leq &R.
	\end{align*}
	Therefore, we obtain
	\begin{align*}
		\norm{\mW(t)}
		\leq \norm{\mW(t)-\mW(0)} + \norm{\mW(0)}
		\leq 2c\sqrt{m},
	\end{align*}
	Moreover, it follows from Lemma~\ref{lemma:G_t} that $\lambda_{\min}\{\mG(t)\}\geq \frac{\lambda_0}{2}$. 
	
	Note that
	\begin{align*}
		\norm{\nabla_{\mA}L(s)}
		\leq &\sum_{i=1}^{n} \frac{\gamma}{\sqrt{m}} \abs{\hat{y}_i(s) - y_i} \norm{\mD_i}\norm{\mU_i(s)^{-1}}\norm{\vu(s)}\norm{\vz_i^*}\\
		\leq & \frac{32 c\sqrt{n}}{\lambda_0\sqrt{m}}\norm{\hat{\vy}(0)-\vy}
		\cdot \sum_{i=1}^{n}\abs{\hat{y}_i(s) - y_i}\\
		\leq & \frac{32 c n}{\lambda_0\sqrt{m}}\norm{\hat{\vy}(0)-\vy}
		\cdot \norm{\hat{\vy}(s) - \vy}\\
		\leq & \frac{32 c n}{\lambda_0\sqrt{m}}\norm{\hat{\vy}(0)-\vy}^2
		\cdot \exp\{-\lambda_0s/2\},
	\end{align*}
	so that
	\begin{align*}
		\norm{\mA(t)-\mA(0)}\leq& \int_0^t\norm{\nabla_{\mA}L(s)} ds\\
		\leq& \frac{64 c n}{\lambda_0^2\sqrt{m}}\norm{\hat{\vy}(0)-\vy}^2.
	\end{align*}
	Then 
	\begin{align*}
		\norm{\mA(t)}\leq \norm{\mA(t)-\mA(0)} + \norm{\mA(0)}
		\leq 2c\sqrt{m}.
	\end{align*}
\end{proof}

\subsection{Proof of Lemma~\ref{lemma:discrete time}}\label{app:gradient descent}
In this section, we prove the result for discrete time analysis or result for gradient descent.
Assume $\norm{\mA(0)}\leq c\sqrt{m}$ and $\norm{\mW(0)}\leq c\sqrt{m}$. 
Further, we assume $\lambda_{\min}(\mG(0))\geq \frac{3}{4}\lambda_0$
and we assume $m=\Omega\left(\frac{ c^2n\norm{\mX}^2}{\lambda_0^3}\norm{\hat{\vy}(0) - \vy}^2\right) $ and choose $0 < \gamma\leq \min\{1/2,1/4c \}$.
Moreover, we assume the stepsize $\alpha = \bigo{\lambda_0/n^2}$.
We make the inductive hypothesis as follows for all $0\leq s\leq k$
\begin{enumerate}[label=(\roman{*}), leftmargin=*]
	\item $\lambda_{\min}(\mG(s))\geq \frac{\lambda_0}{2}$,
	\item $\norm{\vu(s)}\leq \frac{32c\sqrt{n}}{\lambda_0}\norm{\hat{\vy}(0)-\vy}$,
	\item $\norm{\vv(s)}\leq \frac{16c\sqrt{n}}{\lambda_0}\norm{\hat{\vy}(0)-\vy}$,
	\item $\norm{\mW(s)}\leq 2c\sqrt{m}$,
	\item $\norm{\mA(s)}\leq 2c\sqrt{m}$,
	\item $\norm{\hat{\vy}(s) - \vy}^2\leq (1-\alpha \lambda_0/2)^s\norm{\hat{\vy}(0) - \vy}^2$.
\end{enumerate}
\begin{proof}
	Note that Lemma~\ref{lemma:operator norm} still holds if one chooses a larger $c$. Thus, we choose $c\succsim \sqrt{\lambda_0}/\norm{X}$.
	By using the inductive hypothesis, we have for any $0\leq s\leq k$
	\begin{align*}
		\norm{\vphi_i(s)}
		=\norm{\frac{1}{\sqrt{m}}\sigma(\mW(s)\vx_i)}
		\leq \frac{1}{\sqrt{m}}\norm{\mW(s)}
		\leq 2c
	\end{align*}
	and
	\begin{align}
		\Norm{\mPhi(s)}
		\leq \Norm{\mPhi(s)}_F
		=\left(\sum_{i=1}^{n} \norm{\vphi_i(s)}^2\right)^{1/2}
		\leq 2c\sqrt{n}.
		\label{eq:bound Phi}
	\end{align}
	By using Lemma~\ref{lemma:equilibrim point}, we obtain the upper bound for the equilibrium point $\vz_i(s)$ for any $0\leq s\leq k$ as follows
	\begin{align*}
		\norm{\vz_i(s)}\leq \frac{1}{1-\gamma_0}\norm{\vphi_i(s)}
		=2\norm{\vphi_i(s)}\leq 4c,
	\end{align*}
	where the last inequality is because we choose $\gamma_0 = 1/2$,
	and
	\begin{align}
		\norm{\mZ(s)}
		\leq \norm{\mZ(s)}_F
		=\left(\sum_{i=1}^{n} \norm{\vz_i(s)}^2\right)^{1/2}
		=4c\sqrt{n}.
		\label{eq:bound Z}
	\end{align}
	By using the upper bound  of $\vphi_i(s)$, we obtain for any $0\leq s\leq k$
	\begin{align*}
		\norm{\nabla_{\vv}L(s)}
		\leq &\sum_{i=1}^{n}\abs{\hat{y}_i(s) - y_i} \norm{\vphi_i(s)}\\
		\leq & 2c\sum_{i=1}^{n} \abs{\hat{y}_i(s) - y_i}\\
		\leq & 2c\sqrt{n}\norm{\hat{\vy}(s)-\vy}\\
		\leq & 2c\sqrt{n}(1-\alpha \lambda_0/2)^{s/2}\norm{\hat{\vy}(0)-\vy}.
	\end{align*}
	Let $\beta\triangleq\sqrt{1-\alpha\lambda_0/2}$. Then the upper bound of $\norm{\nabla_{\vv}L(s)}$ can be written as
	\begin{align}
		\norm{\nabla_{\vv}L(s)}\leq 2c\sqrt{n} \beta^{s}\norm{\hat{\vy}(0)-\vy},
		\label{eq:bound gradient v}
	\end{align}
	and
	\begin{align*}
		\norm{\vv(k+1) - \vv(0)}
		\leq& \sum_{s=0}^{k}\norm{\vv(s+1) -\vv(s)}
		=\alpha \sum_{s=0}^{k}\norm{\nabla_{\vv}L(s)}\\
		\leq& \alpha\cdot  2c\sqrt{n}\norm{\hat{\vy}(0)-\vy} \cdot \sum_{s=0}^{k} \beta^s\\
		=&\frac{2(1-\beta^2)}{\lambda_0} \cdot 2c\sqrt{n}\norm{\hat{\vy}(0)-\vy} \frac{1-\beta^{k+1}}{1-\beta}\\
		\leq &\frac{8c\sqrt{n}}{\lambda_0}\norm{\hat{\vy}(0)-\vy},
	\end{align*}
	where the last inequality we use the facts $\beta <1$. By triangle inequality, we obtain
	\begin{align*}
		\norm{\vv(k+1)}\leq \norm{\vv(k+1) - \vv(0)} + \norm{\vv(0)}
		\leq \frac{16c\sqrt{n}}{\lambda_0}\norm{\hat{\vy}(0)-\vy},
	\end{align*}
	which proves the result (iii).
	Similarly, we can upper bound the gradient of $\vu$
	\begin{align}
		\norm{\nabla_{\vu}L(s)}
		\leq  \sum_{i=1}^{n}\abs{\hat{y}_i(s) - y_i} \norm{\vz_i}
		\leq  4c\sqrt{n}\norm{\hat{\vy}(s)-\vy}
		\leq  4c\sqrt{n}\beta^s\norm{\hat{\vy}(0)-\vy}
		\label{eq:bound gradient u}
	\end{align}
	so that
	\begin{align*}
		\norm{\vu(k+1)-\vu(0)}
		\leq \frac{16c\sqrt{n}}{\lambda_0}\norm{\hat{\vy}-\vy},
	\end{align*}
	and
	\begin{align*}
		\norm{\vu(k)}\leq \norm{\vu(k)-\vu(0)} + \norm{\vu(0)}
		\leq \frac{32c\sqrt{n}}{\lambda_0}\norm{\hat{\vy}(0)-\vy}.
	\end{align*}
	The result (ii) is also obtained.
	
	By using the inductive hypothesis, we can upper bound the gradient of $\mW$ as follows
	\begin{align}
		\norm{\nabla_{\mW}L(s)}
		\leq & \sum_{i=1}^{n}\frac{1}{\sqrt{m}}\abs{\hat{y}_i(s) - y_i} \norm{\mE_i(s)}
		\left(\norm{\mU_i(s)^{-1} \vu(s)}+\norm{\vv(s)}\right)\norm{\vx_i}\nonumber\\
		\leq & \frac{128 c\sqrt{n}}{\lambda_0\sqrt{m}}\norm{\hat{\vy}(0)-\vy}
		\sum_{i=1}^{n}\abs{\hat{y}_i(s) - y_i} \nonumber\\
		\leq & \frac{128 c n}{\lambda_0\sqrt{m}}\norm{\hat{\vy}(0)-\vy}
		\cdot \norm{\hat{\vy}(s)-\vy}\nonumber\\
		\leq & \frac{128 c n}{\lambda_0\sqrt{m}}\norm{\hat{\vy}(0)-\vy}^2
		\cdot \beta^{s},
		\label{eq:bound gradient W}
	\end{align}
	so that
	\begin{align*}
		\norm{\mW(k+1)-\mW(0)}\leq &
		\alpha\sum_{s=0}^{k}\norm{\nabla_{\mW}L(s)}\\
		\leq&\alpha \cdot \frac{128c n}{\lambda_0^2\sqrt{m}}\norm{\hat{\vy}(0)-\vy}^2\cdot \sum_{s=0}^{k}\beta^s\\
		\leq & \frac{512 cn}{\lambda_0^2\sqrt{m}} \norm{\hat{\vy}(0) - \vy}^2\\
		\leq & \frac{\sqrt{m}\lambda_0}{16c\norm{\mX}^2}\\
		\leq &R,
	\end{align*}
	where the third inequality holds is because $m$ is large, \ie, $m=\Theta\left(\frac{c^2n \norm{\mX}^2}{\lambda_0^3}\norm{\hat{\vy}(0) - \vy}^2\right)$.
	To simplify the notation, we assume 
	\begin{align}
		m= \frac{Cc^2n \norm{\mX}^2}{\lambda_0^3}\norm{\hat{\vy}(0) - \vy}^2
		\label{eq:large m}
	\end{align}
	for some large number $C>0$.
	Moreover, we obtain
	\begin{align*}
		\norm{\mW(k+1)}
		\leq \norm{\mW(k+1)-\mW(0)} + \norm{\mW(0)}
		\leq 2c\sqrt{m},
	\end{align*}
	Therefore, it follows from Lemma~\ref{lemma:G_t} that $\lambda_{\min}\{\mG(k+1)\}\geq \frac{\lambda_0}{2}$. Thus, the results (i) and (iv) are established.
	
	By using similar argument, we can upper bound the gradient of $\mA$ as follows
	Note that
	\begin{align*}
		\norm{\nabla_{\mA}L(s)}
		\leq &\sum_{i=1}^{n} \frac{\gamma}{\sqrt{m}} \abs{\hat{y}_i(s) - y_i} \norm{\mD_i}\norm{\mU_i(s)^{-1}}\norm{\vu(s)}\norm{\vz_i^*}\\
		\leq & \frac{64 c\sqrt{n}}{\lambda_0\sqrt{m}}\norm{\hat{\vy}(0)-\vy}
		\cdot \sum_{i=1}^{n}\abs{\hat{y}_i(s) - y_i}\\
		\leq & \frac{64 c n}{\lambda_0\sqrt{m}}\norm{\hat{\vy}(0)-\vy}
		\cdot \norm{\hat{\vy}(s) - \vy}\\
		\leq & \frac{64 c n}{\lambda_0\sqrt{m}}\norm{\hat{\vy}(0)-\vy}^2
		\cdot \beta^s,
	\end{align*}
	so that
	\begin{align*}
		\norm{\mA(k+1)-\mA(0)}\leq& \alpha\sum_{s=0}^{k} \norm{\nabla_{\mA}L(s)}\\
		\leq&\alpha \cdot \frac{64 c n}{\lambda_0\sqrt{m}}\norm{\hat{\vy}(0)-\vy}^2
		\cdot \sum_{s=0}^{k}  \beta^s \\
		\leq& \frac{256 c n}{\lambda_0^2\sqrt{m}}\norm{\hat{\vy}(0)-\vy}^2.
	\end{align*}
	Since $m= \frac{Cc^2n \norm{\mX}^2}{\lambda_0^3}\norm{\hat{\vy}(0) - \vy}^2$ and $c,C>0$ are large enough, we have 
	\begin{align*}
		\norm{\mA(k+1)}\leq \norm{\mA(k+1)-\mA(0)} + \norm{\mA(0)}
		\leq 2c\sqrt{m}.
	\end{align*}
	Therefore, the result (v) is obtained and the equilibrium points $\vz_i(k+1)$ exists for all $i\in [n]$.
	
	To establish the result (vi), we need to derive the bounds between equilibrium points $\mZ(k)$ and feature vectors $\mPhi(k)$. 
	We  firstly bound the difference between equilibrium points $\vz_i(k+1)$ and $\vz_i(k)$.
	For any $\ell\geq 1$, we have
	\begin{align*}
		\norm{\vz_i^{\ell+1}(k+1) - \vz_i^{\ell+1}(k)}
		=&\norm{\sigma\left[\tilde{\gamma}\mA(k+1)\vz_i^{\ell}(k+1) +\vphi_i(k+1)\right] - \sigma\left[\tilde{\gamma}\mA(k)\vz_i^{\ell} (k)+\vphi_i(k)\right]}\\
		\leq&\norm{\tilde{\gamma}\mA(k+1)\vz_i^{\ell}(k+1) +\vphi_i(k+1) - \tilde{\gamma}\mA(k)\vz_i^{\ell}(k)- \vphi_i(k)}\\
		\leq&\tilde{\gamma}\norm{\mA(k+1) \vz_i^{\ell}(k+1) - \mA(k)\vz_i^{\ell}(k)}
		+\norm{\vphi_i(k+1) - \vphi_i(k)},
	\end{align*}
	where the first term can be bounded as follows
	\begin{align*}
		&\tilde{\gamma}\norm{\mA(k+1) \vz_i^{\ell}(k+1) - \mA(k)\vz_i^{\ell}(k)}\\
		\leq &\tilde{\gamma}\norm{\mA(k+1) - \mA(k)}\norm{\vz_i^{\ell}(k+1)}
		+\tilde{\gamma}\norm{\mA(k)}\norm{\vz_i^{\ell}(k+1) - \vz_i^{\ell}(k)}\\
		\leq&\tilde{\gamma}\alpha \norm{\nabla_{\mA}L(k)} (4c)
		+\tilde{\gamma}\norm{\mA(k)}\norm{\vz_i^{\ell}(k+1) - \vz_i^{\ell}(k)}\\
		\leq& \frac{64 \alpha cn}{\lambda_0 m} \norm{\hat{\vy}(0) - \vy}^2 \beta^k
		+(1/2)\norm{\vz_i^{\ell}(k+1) - \vz_i^{\ell}(k)},
	\end{align*}
	and the second term is bounded as follows
	\begin{align*}
		&\norm{\vphi_i(k+1) - \vphi_i(k)}\\
		=&\frac{1}{\sqrt{m}}\norm{\sigma[\mW(k+1)\vx_i] - \sigma[\mW(k)\vx_i]}\\
		\leq& \frac{1}{\sqrt{m}}\norm{\mW(k+1)-\mW(k)}\norm{\vx_i}\\
		\leq& \frac{\alpha}{\sqrt{m}} \norm{\nabla_{\mW}L(k)}\\
		\leq&  \frac{128 \alpha c n}{\lambda_0 m}\norm{\hat{\vy}(0)-\vy}^2
		\cdot \beta^{k}	.
	\end{align*}
	Thus, we obtain
	\begin{align*}
		\norm{\vz_i^{\ell+1}(k+1) - \vz_i^{\ell+1}(k)}
		\leq &(1/2)\norm{\vz_i^{\ell}(k+1) - \vz_i^{\ell}(k)} + \frac{256 \alpha c n}{\lambda_0 m}\norm{\hat{\vy}(0)-\vy}^2\cdot \beta^k\\
		\leq&(1/2)^{\ell}\norm{\vz_i^1(k+1) - \vz_i^1(k)} + \frac{256 \alpha c n}{\lambda_0 m}\norm{\hat{\vy}(0)-\vy}^2\cdot \beta^k \cdot\sum_{j=0}^{\infty} 2^{-j}\\
		\leq &(1/2)^{\ell}\norm{\vz_i^1(k+1) - \vz_i^1(k)} 
		+\frac{512 \alpha c n}{\lambda_0 m}\norm{\hat{\vy}(0)-\vy}^2\cdot \beta^k.
	\end{align*}
	By letting $\ell\rightarrow \infty$, we obtain
	\begin{align*}
		\norm{\vz_i(k+1)-\vz_i(k)}
		\leq \frac{512 \alpha c n}{\lambda_0 m}\norm{\hat{\vy}(0)-\vy}^2\cdot \beta^k.
	\end{align*}
	By using the Cauchy-Schwartz's inequality, we have
	\begin{align}
		\norm{\mZ(k+1) - \mZ(k)}
		\leq\norm{\mZ(k+1) - \mZ(k)}_F
		\leq  \frac{512 \alpha c n^{3/2}}{\lambda_0 m}\norm{\hat{\vy}(0)-\vy}^2\cdot \beta^k.
		\label{eq:bound Z_{k+1}- Z_{k}}
	\end{align}
	
	In addition, we will also bound the difference in $\vphi_i(k+1)$ and $\vphi_i(k)$. Note that
	\begin{align*}
		\norm{\vphi_i(k+1) - \vphi_i(k)}
		=\frac{1}{\sqrt{m}}
		\norm{\sigma[\mW(k+1)\vx_i] - \sigma[\mW(k)\vx_i]}
		\leq 
		\frac{128 \alpha c n}{\lambda_0m}\norm{\hat{\vy}(0)-\vy}^2
		\cdot \beta^{k},
	\end{align*}
	so that
	\begin{align}
		\Norm{\mPhi(k+1) - \mPhi(k)}
		\leq \Norm{\mPhi(k+1) - \mPhi(k)}_F
		\leq \frac{128 \alpha c n^{3/2}}{\lambda_0m}\norm{\hat{\vy}(0)-\vy}^2
		\cdot \beta^{k}
		\label{eq:bound Phi(k+1)-Phi(k)}
	\end{align}
	
	Now, we are ready to establish the result (vi). Note that
	\begin{align*}
		\norm{\hat{\vy}(k+1) - \vy}^2
		=&\norm{\hat{\vy}(k+1) - \hat{\vy}(k) + \hat{\vy}(k) - \vy}^2\\
		=&\norm{\hat{\vy}(k+1) - \hat{\vy}(k)}^2
		+2\inn{\hat{\vy}(k+1)-\hat{\vy}(k)}{\hat{\vy}(k)-\vy}
		+\norm{\hat{\vy}(k)-\vy}^2.
	\end{align*}
	In the rest of this proof, we will bound each term in the above inequality.
	By the prediction rule of $\hat{\vy}$, we can bound the difference between $\hat{\vy}(k+1)$ and $\hat{\vy}(k)$ as follows
	\begin{align*}
		\norm{\hat{\vy}(k+1) - \hat{\vy}(k)}
		=&\norm{\mZ(k+1)\vu(k+1) + \mPhi(k+1)\vv(k+1) - \mZ(k)\vu(k) - \mPhi(k)\vv(k)}\\
		\leq&\norm{\mZ(k+1)\vu(k+1) - \mZ(k)\vu(k) } + \norm{\mPhi(k+1)\vv(k+1)-\mPhi(k)\vv(k)},
	\end{align*}
	where the first term can be bounded as follows by using \eqref{eq:bound Z}, \ref{eq:bound gradient u}, \ref{eq:large m}, \ref{eq:bound Z_{k+1}- Z_{k}}, hypothesis (ii), and a large constant $C_0>0$
	\begin{align*}
		&\norm{\mZ(k+1)}\norm{\vu(k+1)-\vu(k)}
		+\norm{\mZ(k+1)-\mZ(k)}\norm{\vu(k)}\\
		=&\alpha \norm{\mZ(k+1)}\norm{\nabla_{\vu}L(k)}		
		+\norm{\mZ(k+1)-\mZ(k)}\norm{\vu(k)}\\
		\leq &\alpha C_0c^2n\norm{\hat{\vy}(0)-\vy}\cdot \beta^k,
	\end{align*}
	and the second term is bounded as follows by using \eqref{eq:bound Phi}, \ref{eq:bound gradient v}, \ref{eq:bound Phi(k+1)-Phi(k)}, \ref{eq:large m}, hypothesis (iii), and a large constant $C_0>0$
	\begin{align*}
		&\norm{\mPhi(k+1)}\norm{\vv(k+1)-\vv(k)}
		+\norm{\mPhi(k+1)-\mPhi(k)}\norm{\vv(k)}\\
		=& \alpha\norm{\mPhi(k+1)}\norm{\nabla_{\vv}L(k)}
		+\norm{\mPhi(k+1)-\mPhi(k)}\norm{\vv(k)}\\
		\leq &\alpha C_0c^2n\norm{\hat{\vy}(0)-\vy}\cdot \beta^k.
	\end{align*}
	Therefore, we have
	\begin{align}
		\norm{\hat{\vy}(k+1) - \hat{\vy}(k)}
		\leq \alpha C_0c^2n\norm{\hat{\vy}(0)-\vy}\cdot \beta^k,
		\label{eq:y(k+1)-y(k)}
	\end{align}
	where the scalar $2$ is absorbed in $C_0$ and
	the constant $C_0$ is difference from $C$.
	
	Let $\vg\triangleq\mZ(k)\vu(k+1) + \mPhi(k)\vv(k+1)$. Then we have
	\begin{align*}
		\inn{\hat{\vy}(k+1)-\hat{\vy}(k)}{\hat{\vy}(k)-\vy}
		=\inn{\hat{\vy}(k+1) - \vg}{\hat{\vy}(k)-\vy}
		+\inn{\vg -\hat{\vy}(k)}{\hat{\vy}(k)-\vy}.
	\end{align*}
	Let us bound each term individually. By using the Cauchy-Schwartz inequality, we have
	\begin{align}
		&\inn{\hat{\vy}(k+1) - \vg}{\hat{\vy}(k)-\vy}\nonumber\\
		=&\inn{(\mZ(k+1)-\mZ(k))\vu(k+1)}{\hat{\vy}(k)-\vy}
		+\inn{(\mPhi(k+1)-\mPhi(k))\vv(k+1)}{\hat{\vy}(k)-\vy}\nonumber\\
		\leq &\left(\norm{\mZ(k+1)-\mZ(k)}\norm{\vu(k+1)} + \norm{\mPhi(k+1)-\mPhi(k)}\norm{\vv(k+1)}\right)\norm{\hat{\vy}(k)-\vy}\nonumber\\
		\leq&\alpha C_0c^2n\norm{\hat{\vy}(0)-\vy}\cdot \beta^k \norm{\hat{\vy}(k)-\vy},
		\quad\text{by \eqref{eq:bound Z}, \ref{eq:bound gradient u}, \ref{eq:large m}, \ref{eq:bound Z_{k+1}- Z_{k}}}\nonumber\\
		\leq&\alpha C_0c^2n\cdot \beta^{2k}\norm{\hat{\vy}(0)-\vy}^2.
		\label{eq:y(k+1)-g}
	\end{align}
	By using $\nabla_{\vu}L(k) = \mZ(k)^T(\hat{\vy}(k)-\vy)$, $\nabla_{\vv}L(k) = \mPhi(k)^T(\hat{\vy}(k)-\vy)$ and $\lambda_{\min}(\mG(k))\geq \lambda_0/2$, we get
	\begin{align}
		\inn{\vg -\hat{\vy}(k)}{\hat{\vy}(k)-\vy}
		=-&\alpha(\hat{\vy}(k)-\vy)^T\left[\mZ(k)\mZ(k)^T + \mPhi(k)\mPhi(k)^T\right](\hat{\vy}(k)-\vy) \nonumber\\
		\leq-&\frac{\alpha \lambda_0}{2} \norm{\hat{\vy}(k)-\vy}^2.	
		\label{eq:g-y(k)}
	\end{align}
	By combining the inequalities \eqref{eq:y(k+1)-y(k)}, \ref{eq:y(k+1)-g}, \ref{eq:g-y(k)}, we obtain
	\begin{align*}
		\norm{\hat{\vy}(k+1)-\vy}^2
		\leq& \left(1-\alpha\left[\lambda_0-C_0c^2n - \alpha C_0^2c^4n^2\right]\right)\beta^{2k}\norm{\hat{\vy}(0)-\vy}^2\\
		\leq &\left(1-\frac{\alpha \lambda_0}{2}\right)\cdot \beta^{2k}\norm{\hat{\vy}(0)-\vy}^2\\
		=&\left(1-\frac{\alpha \lambda_0}{2}\right)^{k+1}\norm{\hat{\vy}(0)-\vy}^2,
	\end{align*}
	where the second inequality is by $\alpha =\bigo{\frac{\lambda_0}{n^2}}$.
	This proves the result (vi) and complete the proof.

\end{proof}
\clearpage

\end{document}